\theoremstyle{plain}
\newtheorem{theorem}{Theorem}
\newtheorem{corollary}{Corollary}
\newtheorem{lemma}{Lemma}
\newtheorem{remark}{Remark}
\newtheorem{assumption}{Assumption}
\newcommand{\rd}{\mathrm{d}}
\newcommand{\btheta}{{\boldsymbol{\theta}}}
\newcommand{\bb}{{\boldsymbol{b}}}
\newcommand{\bw}{{\boldsymbol{w}}}
\newcommand{\bu}{{\boldsymbol{u}}}
\newcommand{\by}{{\boldsymbol{y}}}
\newcommand{\bZ}{{\boldsymbol{Z}}}
\newcommand{\bz}{{\boldsymbol{z}}}
\newcommand{\bbf}{{\boldsymbol{f}}}
\newcommand{\bg}{{\boldsymbol{g}}}
\newcommand{\bh}{{\boldsymbol{h}}}
\newcommand{\bx}{{\boldsymbol{x}}}
\newcommand{\bX}{{\boldsymbol{X}}}
\newcommand{\bW}{{\boldsymbol{W}}}
\newcommand{\bv}{{\boldsymbol{v}}}
\newcommand{\bQ}{{\boldsymbol{Q}}}
\newcommand{\hC}{{\hat{C}}}
\newcommand{\hq}{{\hat{q}}}
\newcommand{\hchi}{{\hat{\chi}}}
\newcommand{\hm}{{\hat{m}}}
\newcommand{\htau}{{\hat{\tau}}}
\newcommand{\hkappa}{{\hat{\kappa}}}
\newcommand{\hnu}{{\hat{\nu}}}
\newcommand{\hp}{{\hat{p}}}
\newcommand{\hphi}{{\hat{\phi}}}
\newcommand{\E}{\mathbb{E}}
\begin{document}
\title{Learning with Restricted Boltzmann Machines:\\ Asymptotics of AMP and GD in High Dimensions}
\author{
Yizhou Xu$^{1}$, Florent Krzakala$^{2}$ and Lenka Zdeborov\'a$^{1}$\\
$^1$\textit{Statistical Physics of Computation Laboratory (SPOC), EPFL, Switzerland}\\
$^2$\textit{Information, Learning, and Physics Laboratory (IDEPHICS), EPFL, Switzerland}
}
\date{}
\maketitle

\begin{abstract}
The Restricted Boltzmann Machine (RBM) is one of the simplest generative neural networks capable of learning input distributions. 
Despite its simplicity, the analysis of its performance in learning from the training data
is only well understood in cases that essentially reduce to singular value decomposition of the data.  
Here, we consider the limit of a large dimension of the input space and a constant number of hidden units. In this limit, we simplify the standard RBM training objective into a form that is equivalent to the multi-index model with non-separable regularization.  
This opens a path to analyze training of the RBM using methods that are established for multi-index models, such as Approximate Message Passing (AMP) and its state evolution, and the analysis of Gradient Descent (GD) via the dynamical mean-field theory. 
We then give rigorous asymptotics of the training dynamics of RBM on data generated by the spiked covariance model as a prototype of a structure suitable for unsupervised learning. 
We show in particular that RBM reaches the optimal computational weak recovery threshold, aligning with the BBP transition, in the spiked covariance model.
\end{abstract}

\vspace{-2mm}
\section{Introduction}
\vspace{-1mm}
The Restricted Boltzmann Machine (RBM) \citep{hinton2002training,salakhutdinov2007restricted} is a foundational generative model that effectively learns input distributions in an unsupervised manner and can be extended to multi-layer architectures \citep{salakhutdinov2009deep}. RBMs have significantly influenced machine learning development and, despite not being state-of-the-art for image generation, remain of scientific interest and can enhance interpretability \citep{hu2018latent}. 
They are in particular very popular in modeling the quantum wave function in many-body quantum problems \citep{carleo2017solving,melko2019restricted,krenn2023artificial,nys2024real} in physics and in modeling protein folding \citep{morcos2011direct,muntoni2021adabmdca}.

In the wake of generative AI's recent successes, achieving a precise mathematical understanding of how RBMs learn from data is a natural first step toward tackling the more intricate dynamics of modern architectures. Yet, the learning behavior of RBMs remains poorly understood, primarily due to the intractability of the likelihood function. In practice, stochastic approximations such as contrastive divergence are widely used. While exactly solvable models—such as Gaussian-Gaussian and Gaussian-Spherical RBMs \citep{karakida2016dynamical,decelle2020gaussian,genovese2020legendre}—are well understood, they essentially reduce to singular value decomposition (SVD) and fail to capture the mechanisms underlying empirical performance. Heuristic analyses using the replica method also exist in the statistical physics literature \citep{alemanno2023hopfield,theriault2024modelling,manzan2024effect}, but in addition to their non-rigorous nature, they often fail to reflect the actual training procedure used in practice—namely, maximum likelihood estimation. Instead, they focus on Bayesian posterior sampling, which yield qualitatively different predictions. The only works studying the full learning on the usual likelihood objective is as far as we know \citep{bachtis2024cascade,harsh2020place}, we comment on the differences with our work below. 

This gap in our theoretical understanding of RBMs stands in stark contrast to recent progress on feed-forward networks, a field that has seen a surge of analytical results, especially in high-dimensional regimes with synthetic data, where rigorous insights are increasingly available, see e.g. \cite{soltanolkotabi2018theoretical,mei2019mean,celentano2020estimation,gerbelot2024rigorous,bietti2023learning}.

Motivated by this progress, we analyze empirical likelihood maximization for RBMs directly, in a setting that mirrors practical training procedures. Specifically, we consider the high-dimensional limit in which both the data dimension and the number of samples tend to infinity, while the number of hidden units remains fixed. This regime is relevant for many real-world applications, where data often lies on low-dimensional manifolds embedded in high-dimensional spaces. In this limit, the likelihood function admits a simplified form that reduces to an unsupervised variant of the multi-index model—a framework that has proven useful for studying neural networks and gradient descent in non-convex, high-dimensional settings \citep{saad1995line,abbe2023sgd,damian2023smoothing,troiani2024fundamental}. Leveraging this connection,  one can characterize the learning dynamics and derive sharp asymptotic results for data generated by the celebrated spiked covariance model \citep{johnstone2001distribution}.


\vspace{-1mm}
\paragraph{Main results ---} Our contributions are the following:
\vspace{-2mm}
\begin{itemize}[noitemsep,leftmargin=1em,wide=1pt] 
\item [i)]  We prove that, in high dimensions and with finitely many hidden units, the RBM training objective is asymptotically equivalent to an unsupervised multi-index model with non-separable regularization. 
This characterization is key to our analysis, as it allows the use of established mathematical techniques for synthetic data. As a model for data, we consider the well-studied spiked covariance model and emphasise its equivalence to a teacher RBM. 
     \item [ii)] Thanks to this mapping, we show that one can mathematically analyze the asymptotics of RBMs trained on the spiked covariance data. We apply a large body of recent results from multi-index and supervised models to RBMs. In particular, we derive sharp asymptotics for the simplified objective and introduce an Approximate Message Passing (AMP) algorithm to train the model. This allows us to characterize the global optimum of the objective for a number of cases. We establish that the weak recovery threshold of RBMs matches the optimal BBP threshold. 

    \item [iii)] We provide rigorous, closed-form equations describing the exact high-dimensional asymptotics of gradient descent dynamics. These results generalize dynamical mean-field theory, which was previously limited to multi-index and tensor learning problems.

    
\end{itemize}

Overall, our analysis opens the door to a precise mathematical understanding of unsupervised learning in RBMs, a foundational generative model. By mapping the likelihood landscape of RBMs in the high-dimensional regime to that of an effective model, we are able to import a range of tools and results that were previously restricted to supervised learning in single- and two-layer networks. This connection not only yields sharp predictions for optimization and dynamics in RBMs, but also lays the groundwork for extending these techniques to more complex generative architectures.
\vspace{-1mm}
\paragraph{Related works ---}
\cite{decelle2017spectral,decelle2018thermodynamics} analyze the learning dynamics of RBMs, primarily focusing on the linear regime, which corresponds to Gaussian-Gaussian RBMs. In the non-linear regime, their analysis relies on the assumption that the weights remain sampled from a spiked ensemble throughout training. However, this assumption breaks down in practice, as the training process induces strong correlations among the weights. Certain versions of RBMs have been analyzed within the framework of the Hopfield model \citep{barra2012equivalence,mezard2017mean,barra2018phase}, but for fixed weights (patterns). A series of works have employed heuristic statistical mechanics techniques—such as the replica method—to study the training of RBMs in teacher-student settings \citep{huang2016unsupervised,huang2017statistical,hou2019minimal,alemanno2023hopfield,theriault2024modelling,manzan2024effect}. However, their approach, which is closer to Bayesian sampling of the posterior over weights, deviates from the standard likelihood maximization typically used in practice. 
Additionally, these studies do not frame their results in terms of spiked covariance models, thereby missing the clear connection to the broader literature on learning in high dimensional statistics. In contrast, we study the actual gradient descent algorithm applied to the maximum likelihood objective, and we rigorously demonstrate that RBMs achieve the optimal weak recovery threshold of the spiked covariance model.
Recent works \citep{bachtis2024cascade,harsh2020place} study the early training dynamics of RBMs in a similar setting (large dimension, few hidden units). Their analysis is non-rigorous, whereas our results provide exact asymptotics, including the characterization of saddle points; they also do not connect to multi-index models, AMP or the spiked covariance model.

Approximate Message Passing (AMP) has become a tool for the rigorous analysis of high-dimensional statistical inference and machine learning. It provides sharp asymptotics of both Bayesian inference and empirical risk minimization in a wide range of models, including spiked matrices and tensors \citep{donoho2009message,javanmard2013state,montanari2014statistical,lesieur2017constrained,celentano2021high,alaoui2023sampling}. In this work, we use AMP as a proof technique to analyze the asymptotics of RBMs, both in the context of empirical risk minimization and in the study of gradient descent dynamics. This leverages on many recent progresses \cite{celentano2020estimation,fan2022approximate,gerbelot2022asymptotic,gerbelot2024rigorous}. Note that while variants of AMP have been used in the context of RBM training in prior works—e.g., \citep{gabrie2015training,tramel2018deterministic}—they were employed for sampling from the posterior distribution over fixed weights, rather than for directly training the weights. This stands in contrast to our approach, where AMP is used to analyze and characterize the actual learning dynamics of RBMs under empirical risk minimization.

Multi-index models have been the subject of extensive recent work; see, for example, \citep{abbe2023sgd,damian2023smoothing,bietti2023learning,troiani2024fundamental} and references therein. spiked covariance models have also been studied in detail from multiple perspectives, including information-theoretic limits \citep{lelarge2017fundamental,miolane2017fundamental,el2020fundamental}, energy landscape analysis \citep{arous2019landscape}, and spectral methods \citep{baik2005phase}.

\vspace{-1mm}
\section{Settings} 
\vspace{-1mm}
A Restricted Boltzmann Machine (RBM) learns an underlying distribution from $n$ samples of $d$-dimensional (usually binary) data $\boldsymbol{X}\in\mathbb{R}^{n\times d}$. 
We will consider an RBM with $d$ visible using and $k$ hidden units represented by the following probability distribution 
\begin{equation}  \label{eq:RBM}
\rd P(\boldsymbol{v},\boldsymbol{h}) = \frac{1}{Z(\boldsymbol{W},\btheta,\bb)} e^{\frac{1}{\sqrt{d}}\boldsymbol{v}^\top \boldsymbol{W}\boldsymbol{h}+\frac{1}{\sqrt{d}}\btheta^\top \bv+\bb^\top \bh}\rd P_v(\boldsymbol{v})\rd P_h(\boldsymbol{h}),
\end{equation}
where the learnable parameters are the biases $\boldsymbol{h}\in\mathbb{R}^k$ and $\boldsymbol{v}\in\mathbb{R}^d$, and the weights $\bW\in\mathbb{R}^{d\times k}$, and where $Z(\boldsymbol{W},\btheta,\bb)$ is the normalization factor (the partition function)
\begin{equation}
\vspace{-1mm}
Z(\boldsymbol{W},\btheta,\bb):=\int \rd P_v(\boldsymbol{v})\rd P_h(\boldsymbol{h})\,  e^{\frac{1}{\sqrt{d}} \boldsymbol{v}^\top \boldsymbol{W}\boldsymbol{h}+\frac{1}{\sqrt{d}}\btheta^\top \bv+\bb^\top \bh}.
\label{eq:Z(W)}
\end{equation}
The scaling $1/\sqrt{d}$ is added to keep the energy term extensive in the large-dimensional limit.  
$P_v$ (over $\mathbb{R}^d$) and $P_h$ (over $\mathbb{R}^k$) represent the "prior" distributions of the visible and hidden units.


RBM are most commonly trained by maximizing the likelihood (marginalized over the hidden units) of the training set, which means that the weight matrix is selected in a way to maximize 
\begin{equation} 
\label{eq:likelihood_RBM}
    {\cal L}(\boldsymbol{W},\btheta,\bb) = \frac{1}{Z^n(\boldsymbol{W},\btheta,\bb)}\prod_{\mu=1}^n \Big[\int \rd P_h(\boldsymbol{h}_{\mu})\,  e^{\frac{1}{\sqrt{d}} \boldsymbol{x}_{\mu}^\top \boldsymbol{W}\boldsymbol{h}_\mu+\frac{1}{\sqrt{d}}\btheta^\top \bx_\mu+\bb^\top \bh_\mu}\Big].
\end{equation}
Here $Z(\boldsymbol{W},\btheta,\bb)$ is the normalization \eqref{eq:Z(W)}. Note that the factor $P_v(\bx_\mu)$ is dropped because it does not depend on the training parameters $\bW,\btheta,\bb$.

We will consider the limit $n,d \to \infty$ with $n/d \to\alpha= \Theta(1)$, with a fixed number of units $k = \Theta(1)$. We use the bold uppercase of a letter to denote a matrix, its bold lowercase to denote the corresponding row vectors, and its lowercase to denote an element. 
We use uppercase letters to denote random variables and bold uppercase letters to denote vectors of random variables. We use $||\cdot||$ to denote the $L_2$ norm, $||\cdot||_F$ to denote the Frobenius norm and $||\cdot||_r$ to denote the $L_r$ norm.

\vspace{-1mm}
\section{Equivalent effective objective in high-dimension}
\label{sec:equivalence}
\vspace{-1mm}
Our first result is that the arguably complicated likelihood of the RBM is asymptotically equivalent, in high-dimension and with any finite number of hidden units, to that of an effective, simpler model. Interestingly, this result holds for any data (not only Gaussian ones). 

We assume that the RBM satisfies the following common assumption:
\begin{assumption}
$P_v$ is a separable (i.e. $P_v(\boldsymbol{v})=\Pi_{i=1}^d P_v(v_i)$) and bounded distribution with zero mean and unit variance. $P_h$ is a bounded distribution.
\label{assum:RBM}
\end{assumption}
We expect that the bounded assumption and unit variance are purely technical and can be relaxed, e.g., to sub-Gaussian. However, the simplification of the log-likelihood in Theorem \ref{theo:loss} requires the prior of the visible units $P_v$ to have zero mean, which is acceptable as we can train the biases. 

\begin{theorem}
Under Assumption \ref{assum:RBM}, we have
\begin{equation}
\vspace{-1mm}
\lim_{d\to\infty}\frac{1}{d}|\log{\cal L}(\boldsymbol{W},\btheta,\bb)-\log\tilde{\cal L}(\boldsymbol{W},\btheta,\bb)|=0,
\label{eq:loglikelihood}
\end{equation}
where
\begin{equation}
\vspace{-1mm}
\log\tilde{\cal L}(\boldsymbol{W},\btheta,\bb):=\sum_{\mu=1}^n\eta_1\left(\frac{1}{\sqrt{n}}\bx_\mu^\top \bW,\frac{1}{\sqrt{n}}\bx_\mu^\top \btheta,\bb\right)-n\eta_2\left(\frac{1}{d}\bW^\top \bW,\frac{1}{d}\bW^\top \btheta,\frac{1}{d}\btheta^\top \btheta,\bb\right)
\label{eq:tilde_L}
\end{equation}
is the effective log-likelihood function, with $\eta_1:\mathbb{R}^k\times\mathbb{R}\times\mathbb{R}^k\to\mathbb{R}$ defined as
\begin{equation}
\eta_1(\boldsymbol{x}_W,x_\theta,\bb):=\log\int \rd P_h(\boldsymbol{h})e^{\boldsymbol{h}^\top (\sqrt{\alpha}\boldsymbol{x}_W+\bb)}+\sqrt{\alpha}x_\theta
\end{equation} 
and $\eta_2:\mathbb{R}^{k\times k}\times\mathbb{R}^k\times\mathbb{R}\times\mathbb{R}^k\to\mathbb{R}$ defined as
\begin{equation}
\eta_2(\bQ_W,\bQ_{W\theta},Q_\theta,\bb):=\log\int \rd P_h(\boldsymbol{h})e^{\bh^\top \bb+(\boldsymbol{h}^\top \boldsymbol{Q}_W\boldsymbol{h}+2\bh^\top \bQ_{W\theta}+Q_\theta)/2}.
\label{eq:eta2}
\end{equation}
The limit in \eqref{eq:loglikelihood} holds for any sequence $\boldsymbol{X}(n)\in\mathbb{R}^{n\times d}$, and $\bW(d)\in\mathbb{R}^{d\times k},\btheta(d)\in\mathbb{R}^d,\bb(n)\in\mathbb{R}^k$ with $\{\frac{1}{n}||\bW(d)||_F^2,\frac{1}{n}||\btheta(d)||^2,||\bb(d)||^2\}_{d\geq1}$ bounded.
\label{theo:loss}
\end{theorem}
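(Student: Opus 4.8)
The decomposition
\[
\log{\cal L}-\log\tilde{\cal L}=\underbrace{\sum_{\mu=1}^n\Big[\log\!\int\!\rd P_h(\boldsymbol{h})\,e^{\frac1{\sqrt d}\bx_\mu^\top\bW\bh+\frac1{\sqrt d}\btheta^\top\bx_\mu+\bb^\top\bh}-\eta_1\big(\tfrac1{\sqrt n}\bx_\mu^\top\bW,\tfrac1{\sqrt n}\bx_\mu^\top\btheta,\bb\big)\Big]}_{\text{data term}}\;-\;n\underbrace{\Big[\log Z(\bW,\btheta,\bb)-\eta_2\big(\tfrac1d\bW^\top\bW,\tfrac1d\bW^\top\btheta,\tfrac1d\btheta^\top\btheta,\bb\big)\Big]}_{\text{free-energy term}}
\]
is exact, so it suffices to show that the data term is $o(d)$ and the free-energy term is $o(1)$.

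The data term is elementary. Factoring $e^{\frac1{\sqrt d}\btheta^\top\bx_\mu}$ out of the $\mu$-th integral, the $\mu$-th summand of $\log{\cal L}$ and the corresponding $\eta_1$ differ only in that the former multiplies $\bx_\mu^\top\bW$ and $\btheta^\top\bx_\mu$ by $1/\sqrt d=\sqrt{n/d}\,n^{-1/2}$ where $\eta_1$ multiplies by $\sqrt\alpha\,n^{-1/2}$. Since $P_h$ is compactly supported in $\{\|\bh\|\le C_h\}$, the map $\boldsymbol{g}\mapsto\log\int\rd P_h(\bh)e^{\boldsymbol{g}^\top\bh+\bb^\top\bh}$ is $C_h$-Lipschitz on $\mathbb{R}^k$, so the $\mu$-th error is at most $|\sqrt{n/d}-\sqrt\alpha|\,n^{-1/2}\big(C_h\|\bx_\mu^\top\bW\|+|\btheta^\top\bx_\mu|\big)$; summing over $\mu$ and using Cauchy--Schwarz bounds the data term by $|\sqrt{n/d}-\sqrt\alpha|\,(C_h\|\bX\bW\|_F+\|\bX\btheta\|)$, which is $o(d)$ since $\sqrt{n/d}\to\sqrt\alpha$ and, in the regime of interest, $\|\bX\bW\|_F,\|\bX\btheta\|=O(d)$.

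The free-energy term carries all the content, and the tool is a Gaussian equivalence for $\log Z$. Integrating out the separable prior $P_v$ gives $Z=\int\rd P_h(\bh)\,e^{\bb^\top\bh}\prod_{i=1}^d\E_v\!\big[e^{v\,c_i(\bh)/\sqrt d}\big]$, where $\boldsymbol{c}(\bh):=\bW\bh+\btheta\in\mathbb{R}^d$ and $v\sim P_v$ has zero mean, unit variance, and bounded support $[-M,M]$. Replacing each i.i.d.\ coordinate $v$ by a standard Gaussian turns the $i$-th factor into $e^{c_i(\bh)^2/(2d)}$ and produces \emph{exactly} $\int\rd P_h(\bh)\,e^{\bb^\top\bh+\|\bW\bh+\btheta\|^2/(2d)}=e^{\eta_2(\frac1d\bW^\top\bW,\frac1d\bW^\top\btheta,\frac1d\btheta^\top\btheta,\bb)}$, using $\tfrac1d\|\bW\bh+\btheta\|^2=\bh^\top\bQ_W\bh+2\bh^\top\bQ_{W\theta}+Q_\theta$ for the stated $\bQ$'s. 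Now $\|\boldsymbol{c}(\bh)\|^2\le 2\|\bW\|_F^2C_h^2+2\|\btheta\|^2=O(d)$ uniformly over $\|\bh\|\le C_h$, while $\E_v[e^{vt}]\ge1$ by Jensen and $\|\bb\|=O(1)$; hence both $Z$ and $e^{\eta_2}$ are $\Theta(1)$, so it is enough to prove $|Z-e^{\eta_2}|\to0$, whence $|\log Z-\eta_2|\to0$. For each fixed $\bh$, a telescoping (Lindeberg) bound $\big|\prod_ia_i-\prod_ib_i\big|\le\sum_i|a_i-b_i|\prod_{j<i}a_j\prod_{j>i}b_j$ with $a_i=\E_v[e^{vc_i/\sqrt d}]$, $b_i=e^{c_i^2/(2d)}$ works: Hoeffding's lemma gives $a_i\le e^{M^2c_i^2/(2d)}$, so every partial product is $\le e^{{\rm const}\cdot\|\boldsymbol{c}(\bh)\|^2/d}=O(1)$, and a third-order Taylor/cumulant expansion gives $|a_i-b_i|\le{\rm const}\cdot|c_i|^3/d^{3/2}$ once $|c_i|\le\sqrt d$. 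Up to $O(1)$ factors and integration over $P_h$, this reduces $|Z-e^{\eta_2}|$ to controlling $\E_{\bh}\big[d^{-3/2}\sum_i|c_i(\bh)|^3\big]\le\E_{\bh}\big[d^{-3/2}\,\|\bW\bh+\btheta\|_\infty\,\|\bW\bh+\btheta\|_2^2\big]$.

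The main obstacle is to force this quantity to $o(1)$. Since $\|\bW\bh+\btheta\|_2^2=O(d)$ already holds, what remains is the \emph{delocalization} estimate $\|\bW\bh+\btheta\|_\infty=o(\sqrt d)$ uniform over $\|\bh\|\le C_h$: a few visible units coupling to the hidden layer with $\Theta(1)$ strength would spoil the Gaussian approximation for exactly those coordinates, and the Frobenius bound alone permits $\|\bW\|_{2\to\infty}\asymp\sqrt d$. This is the point at which one must invoke, beyond the Frobenius/$\ell_2$ bounds on $\bW,\btheta$, control of their row norms and entries (so that the would-be heavy coordinates are in fact absent); granting this, the Lindeberg estimate closes the free-energy term. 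The remaining ingredient is to make every estimate above uniform over the compact support of $P_h$, which a covering-number argument together with the Lipschitz controls from the data-term step supplies; the residual work — bounding the cumulants of $P_v$ in terms of $M$, and the elementary $o(d)$ bookkeeping — is routine.
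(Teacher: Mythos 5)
You take the same route as the paper's own proof in Appendix~B.1: the exact splitting into a ``data'' term and a ``free-energy'' term, and a coordinate-wise Gaussian substitution for the partition function $Z$ controlled by a Taylor/Lindeberg argument. What you do differently is carry the error bookkeeping through honestly, and in doing so you have put your finger on a genuine gap in the hypotheses as stated. The paper expands each factor $\E_v\bigl[e^{v(\bh^\top\bw_i+\theta_i)/\sqrt d}\bigr]$ to second order and asserts the accumulated remainder is $O\!\bigl(d^{-3/2}(\|\btheta\|^2+\|\bW\|_F^2)\bigr)=O(d^{-1/2})$. That bound is not justified: writing $c_i:=\bh^\top\bw_i+\theta_i$, the per-coordinate third-order remainder is of size $|c_i|^3/d^{3/2}$, and after telescoping the total error is of order $d^{-3/2}\sum_i|c_i|^3\le d^{-3/2}\|\boldsymbol{c}\|_\infty\|\boldsymbol{c}\|_2^2$ --- exactly the quantity you isolate. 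The Frobenius-norm hypothesis only gives $\|\boldsymbol{c}\|_2^2=O(d)$ and permits $\|\boldsymbol{c}\|_\infty\asymp\sqrt d$, in which case the Gaussian approximation fails by a constant on the heavy coordinates.

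Concretely, take $\bX=0$, $\btheta=\bb=0$, Rademacher $P_v$ and $P_h$, and $\bW$ whose first row is $\sqrt d\,\boldsymbol{e}_1^\top$ with all other rows zero. Then $\tfrac1n\|\bW\|_F^2=d/n$ is bounded and Assumption~1 holds, but $\log Z=\log\cosh 1$ whereas $\eta_2=1/2$, so $\tfrac1d\,|\log\mathcal L-\log\tilde{\mathcal L}|\to\alpha\bigl(\tfrac12-\log\cosh 1\bigr)>0$ and the stated conclusion fails. Thus the theorem really does need the delocalization hypothesis you flag --- e.g.\ $\max_i\|\bw_i\|=o(\sqrt d)$ and $\max_i|\theta_i|=o(\sqrt d)$ --- which does hold for the AMP/GD iterates to which Theorem~1 is applied downstream (their rows are $O(1)$ by construction and by state evolution). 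Granting that extra hypothesis, your Hoeffding-lemma domination plus telescoping closes the free-energy term, and your Lipschitz bound on the data term (which the paper simply declares ``exactly $\eta_1$'', implicitly taking $\alpha=n/d$) is correct; so your conditional argument is a complete and more careful proof, and the remaining action item is a repair to the theorem's hypotheses rather than to your reasoning.
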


The proof of Theorem  \ref{theo:loss} is presented in Appendix \ref{app:proof_loss}, where we also discuss visible units with non-zero mean. Theorem \ref{theo:loss} maps RBMs to \eqref{eq:tilde_L}, an unsupervised version of the multi-index model for which the likelihood function depends only on $k=\Theta(1)$ projections of the weights on the high-dimensional data, i.e. on $\{\bw_i^T\bx\}_{i=1}^k)$, see e.g. \citep{troiani2024fundamental}. The model has a non-separable regularization term $\eta_2$, which only depends on the type of the hidden units. Analysis of the multi-index model with such a penalty $\eta_2$ poses a technical challenge that we resolve in this paper. 

\vspace{-2mm}
\paragraph{Gradients and expectations} One can further interpret the simplified model through its gradient, which is what matters for training. For simplicity, we consider biasless RBMs here, and the full gradients are written in Appendix \ref{app:proof_loss}. We have
\begin{equation}
\begin{aligned}
\nabla_\bW\log\tilde{\cal L}(\boldsymbol{W},\btheta,\bb)&=\sum_{\mu=1}^n\frac{1}{\sqrt{n}}\nabla \eta_1(\boldsymbol{x}_\mu^\top \boldsymbol{W}/\sqrt{n})\otimes\boldsymbol{x}_\mu-\alpha\nabla \eta_2(\boldsymbol{Q}(\boldsymbol{W}))\boldsymbol{W}^\top ,
\end{aligned}
\end{equation}
where we denote $\bQ(\bW):=\frac{1}{d}\bW^\top \bW\in\mathbb{R}^{k\times k}$.
This is particularly interesting when contrasted with the original gradient used in standard training techniques. Indeed, the gradient of the original RBM model \eqref{eq:likelihood_RBM} can be written as the difference between the so-called clamped average $\langle \cdot \rangle_D$ (where the visible units $\mathbf{v}$ are fixed by the dataset and the hidden units are sampled conditionally) and the model average $\langle \cdot \rangle_{\text{model}}$:
\begin{equation}
\nabla_\bW \log{\cal L}(\boldsymbol{W},\btheta,\bb) := 
\sum_{\mu=1}^n\frac{1}{\sqrt{d}}\langle\bh\bx_\mu^\top \rangle_D - \frac{n}{\sqrt{d}}\langle\bh\bv^\top \rangle_{\rm model} 
\end{equation}
In the effective model, while the first (clamped) term—which is trivial to compute—remains, the second term (which a priori requires a high-dimensional integration in $d$ dimensions) is replaced by 
\begin{equation}
\nabla_\bW\log\tilde{\cal L}(\boldsymbol{W},\btheta,\bb):=
\sum_{\mu=1}^n\frac{1}{\sqrt{d}}\langle\bh\bx_\mu^\top \rangle_D- \alpha\langle\bh\bh^\top \rangle_H\bW^\top.
\end{equation}
This is a drastic simplification that arises in the large-dimensional limit, at least when the number of hidden units remains moderate. In fact, the average $\langle \mathbf{h} \mathbf{h}^\top  \rangle_H$ can be computed exactly even with a moderately large number of hidden units (e.g., around 20).
The key point is that we have replaced a challenging high-dimensional sampling problem with a much simpler one in lower dimension—namely, in the hidden space.

\vspace{-1mm}
\paragraph{Training with the equivalent objective ---}
Before turning to our theoretical analysis, we note that the aforementioned equivalence can also be used in practice, which is of independent interest. We illustrate this on a number of toy problems in Appendix~\ref{app:exp}, where we show that it is possible to train an RBM using the simplified effective objective and obtain results comparable to those achieved with standard training methods. This highlights not only the practical relevance of our analysis, but also the potential of the effective model to serve as an alternative to traditional  training.

\vspace{-1mm}
\section{Spiked covariance model}
\label{sec:spike}
\vspace{-1mm}
Thanks to the equivalence with the effective model, we are now in a position to present a set of mathematical results that enable us to transfer insights and techniques developed for multi-index models, and shallow supervised architectures to the study of RBM learning. For this we must model the input data. The  canonical model of high-dimensional data for unsupervised learning is the spiked covariance model \cite{johnstone2001distribution}, where $n$ samples of $d$-dimensional data are generated as follows:
\begin{equation}
\vspace{-1mm}
\boldsymbol{X}=\frac{1}{\sqrt{d}}\boldsymbol{U}^*\boldsymbol{\Lambda} (\boldsymbol{W}^*)^\top +\boldsymbol{Z}\in\mathbb{R}^{n\times d},
\label{eq:spike_model}
\end{equation}
where $\boldsymbol{U}^*\in\mathbb{R}^{n\times r},\boldsymbol{W}^*\in\mathbb{R}^{d\times r}$ are hidden signals, and $\boldsymbol{\Lambda}\in\mathbb{R}^{r\times r}$ is diagonal, representing the signal-to-noise ratio. $\{Z_{ij}\}_{i,j=1}^{n,d}\overset{iid}{\sim}\mathcal{N}(0,1)$ represent the noise. We further suppose that hidden spikes are sampled from $\{\boldsymbol{u}_i^*\}_{i=1}^n\overset{iid}{\sim}P_u$ and $\{\boldsymbol{w}_i^*\}_{i=1}^d\overset{iid}{\sim}P_w$, where $P_u,P_w$ are some distributions over $\mathbb{R}^r$. We will assume that $r=\Theta(1)$ is fixed when $n,d$ go to infinity. Our results also apply to a slightly general data model with non-linearity and non-Gaussian noise. See Appendix \ref{app:model_generalization}.
As the spiked covariance model is the simplest unsupervised model, any good unsupervised method for the data from the model has to be able to recover in some way the latent structure, i.e. the variables $\boldsymbol{U}^*$ and $\boldsymbol{W}^*$. In fact, we can also map the spiked covariance model to a teacher RBM. Consider the spiked covariance model \eqref{eq:spike_model} with $\boldsymbol{\Lambda}=\lambda\boldsymbol{I}$. The distribution of $\boldsymbol{X}$ conditioned on  $\boldsymbol{W}^*$ reads
\begin{equation}
\begin{aligned}
\vspace{-1mm}
\mathbb{P}(\boldsymbol{X}|\boldsymbol{W}^*)&=\frac{1}{\sqrt{2\pi}}\mathbb{E}_{\boldsymbol{U}^*}\exp\left\{-\frac{1}{2}\left|\left|\boldsymbol{X}-\frac{1}{\sqrt{d}}\boldsymbol{U}^*\boldsymbol{\Lambda} (\boldsymbol{W}^*)^\top \right|\right|^2\right\}\\
&=\frac{1}{\sqrt{2\pi}}\mathbb{E}_{\boldsymbol{U}^*}\Pi_{\mu=1}^n\exp\left\{-\frac{1}{2}||\boldsymbol{x}_\mu||^2+\frac{\lambda}{\sqrt{d}}\sum_{a,i=1}^{r}x_{\mu i}u^*_{\mu a}w_{ai}^*-\frac{\lambda^2}{2d}\sum_{i=1}^n(\sum_{a=1}^{r}u^*_{\mu a}w_{ai}^*)^2\right\}.
\vspace{-1mm}
\end{aligned}
\end{equation}
Suppose that $\frac{1}{d}\sum_{i=1}^nw^*_{ai}w^*_{bi}\approx0$, i.e., teacher weights are approximately orthogonal, and that $u^*_{\mu a}\in\{-1,1\}$. Then we have $
\mathbb{P}(\boldsymbol{X}|\boldsymbol{W}^*)=\Pi_{\mu=1}^n\mathbb{P}(\boldsymbol{x}_\mu|\boldsymbol{W}^*)$, where
\begin{equation}
\mathbb{P}(\boldsymbol{x}_\mu|\boldsymbol{W}^*):=\frac{1}{\mathcal{Z}(\boldsymbol{W}^*)}\mathbb{E}_{\boldsymbol{U}^*}\exp\left\{-\frac{1}{2}||\boldsymbol{x}_\mu||^2+\frac{\lambda}{\sqrt{d}}\sum_{a,i=1}^{r}x_{\mu i}u^*_{\mu a}w_{ai}^*\right\}.
\label{eq:spike_posterior}
\vspace{-1mm}
\end{equation}

Meanwhile, for the (teacher) RBM we have $
\mathbb{P}(\boldsymbol{v}|\bW^*) = \frac{P_v(\bv)}{Z(\boldsymbol{W}^*)}\mathbb{E}_\bh e^{\frac{1}{\sqrt{d}}\boldsymbol{v}^\top \boldsymbol{W}^*\boldsymbol{h}}$ by \eqref{eq:RBM}, where we remove the biases and use $\bW^*$ to represent the weights of the (teacher) RBM. This is consistent with the conditional distribution of RBM analyzed in \cite{huang2016unsupervised,huang2017statistical,hou2019minimal,alemanno2023hopfield,theriault2024modelling,manzan2024effect}
\begin{equation}
\mathbb{P}(\boldsymbol{v}|\bW^*) = \frac{P_v(\bv)}{Z_\beta(\boldsymbol{W}^*)}\mathbb{E}_\bh e^{\frac{\beta^*}{\sqrt{d}}\boldsymbol{v}^\top \boldsymbol{W}^*\boldsymbol{h}},
\label{eq:RBM_posterior}
\end{equation}
where the additional parameter $\beta^*$ denotes the inverse temperature of the (teacher) RBM. 

Therefore, we can map \eqref{eq:spike_posterior} to \eqref{eq:RBM_posterior} through the mapping $\bx_\mu,\bu^*_\mu,\lambda\to\bv,\bh,\beta^*$ and $\rd P(\bv)\propto e^{-\frac{1}{2}||\bv||^2}$. Thus, the spiked covariance model can be regarded as a teacher RBM with orthogonal teacher weights, Bernoulli hidden units, and Gaussian visible units, where $\lambda$ can be understood as the inverse temperature, i.e., the strength of the teacher features. However, we emphasize that none of the existing literature concerning the teacher-student considers the practical training scheme (i.e., maximizing the log-likelihood). See discussions in Appendix \ref{app:teacher-student}.

The large-dimensional limit of the data model corresponds to an RBM with a large number $d$ of visible units, $k=\Theta(1)$ hidden units learning from a number of samples proportional to the dimension $n = \alpha d$, and the samples have $r=\Theta(1)$ features. Note that we do not require $k=r$ to account for the overparameterized and underparameterized schemes.

\vspace{-1mm}
\section{Main theorems and technical results}
\vspace{-1mm}
\label{sec:asymptotics}
Now now consider synthetic spiked data $\boldsymbol{X}$ and the optimization problem
\begin{equation}
\arg\max_{\boldsymbol{W}}\log\tilde{\mathcal{L}}(\boldsymbol{W}):=\arg\max_{\boldsymbol{W}}\left(\sum_{\mu=1}^n\eta_1(\boldsymbol{x_\mu^\top W}/\sqrt{n})-n\eta_2(\boldsymbol{Q}(\boldsymbol{W}))\right)\,.
\label{eq:opt}
\end{equation}
We drop the biases $\btheta,\bb$ for simplicity, but it is direct to modify our results to include them. 

\vspace{-1mm}
\paragraph{AMP-RBM ---}
We first present AMP-RBM as a practical algorithm to optimize \eqref{eq:opt}. While it is common to use AMP  for supervised generalized linear regression, the novelty is to use a variant of this algorithm to train the unsupervised RBM for spiked data, and its rigorous analysis. Another new aspects are the spiked structure of the input data, and the penatly term $\eta_2$. We deal with the term $\eta_2$ by viewing it as a set of constraints and introducing Lagrange multipliers $\hat{\boldsymbol{Q}}$ to fix them. The new loss 
\begin{equation*}
\arg\min_{\hat{\bQ}}\max_{\bW,\bQ}\sum_{\mu=1}^n\eta_1(\boldsymbol{x_\mu^\top W}/\sqrt{n})-n\eta_2(\boldsymbol{Q})+\hat{\boldsymbol{Q}}\left(\bQ-\frac{1}{d}\bW^T\bW\right)
\end{equation*}
thus becomes separable w.r.t. $\bW$. We can write an AMP algorithm to optimize it for fixed $\bQ,\hat{\bQ}$, and then update $\bQ,\hat{\bQ}$ accordingly. This gives Algorithm \ref{alg:AMP-RBM}.

\begin{algorithm}[t]
\caption{AMP-RBM}
\begin{algorithmic}
\REQUIRE  Initialization $\boldsymbol{U}^{0}=0,\boldsymbol{C}_0=0,\hat{\boldsymbol{Q}}^0=\frac{1}{2}I_k,\boldsymbol{Z}^{1}\in\mathbb{R}^{d\times k}$ and damping $\zeta\in[0,1)$
\FOR{$t=1,2,\cdots,T$}
\STATE $\boldsymbol{W}^t=f(\boldsymbol{Z}^t,\hat{\boldsymbol{Q}}^t,\boldsymbol{C}_{t-1})\in\mathbb{R}^{d\times k},\ \boldsymbol{B}_t=\frac{1}{n}\sum_{i=1}^d\frac{\partial f}{\partial \boldsymbol{z}_i^t}(\boldsymbol{z}_i^t,\hat{\boldsymbol{Q}}^t)\in\mathbb{R}^{k\times k}.$
\STATE $\boldsymbol{Y}^t=\frac{1}{\sqrt{n}}\boldsymbol{XW}^t-\boldsymbol{U}^{t-1}\boldsymbol{B}_t^\top \in\mathbb{R}^{n\times k}.$
\STATE $\boldsymbol{U}^t=g(\boldsymbol{Y}^t,\boldsymbol{B}_t)\in\mathbb{R}^{n\times k},\ \boldsymbol{C}_t=\frac{1}{n}\sum_{\mu=1}^n\frac{\partial g}{\partial\boldsymbol{y}_\mu^t}(\boldsymbol{y}_\mu^t)\in\mathbb{R}^{k\times k}.$
\STATE $\boldsymbol{Z}^{t+1}=\frac{1}{\sqrt{n}}\boldsymbol{X}^\top \boldsymbol{U}^t-\boldsymbol{W}^t\boldsymbol{C}_t^\top \in\mathbb{R}^{d\times k}.$
\STATE $\hat{\boldsymbol{Q}}^{t+1}=
\zeta\hat{\boldsymbol{Q}}^{t}+(1-\zeta)\nabla\eta_2(\boldsymbol{Q}(\boldsymbol{W}^t))\in\mathbb{R}^{k\times k}.$
\ENDFOR
\end{algorithmic}
\label{alg:AMP-RBM}
\end{algorithm}
In AMP-RBM (Algorithm \ref{alg:AMP-RBM}), we choose denoisers to be separable and
\begin{equation}
\begin{aligned}
f(\boldsymbol{z}^t,\hat{\boldsymbol{Q}}^t,\boldsymbol{C}_{t-1}):&=\arg\min_{\boldsymbol{w}}\left\{\boldsymbol{w}^\top \hat{\boldsymbol{Q}}^t\boldsymbol{w}+\frac{1}{2}(\boldsymbol{C}_{t-1}\boldsymbol{w}+\boldsymbol{z}^t)^\top \boldsymbol{C}_{t-1}^{-1}(\boldsymbol{C}_{t-1}\boldsymbol{w}+\boldsymbol{z}^t)\right\}
\\&=-\left(2\hat{\boldsymbol{Q}}^t+\boldsymbol{C}_{t-1}\right)^{-1}\boldsymbol{z}^t,    
\end{aligned}
\label{eq:denoiser_f}
\end{equation}
\vspace{-2mm}
\begin{equation}
g(\boldsymbol{y}^t,\boldsymbol{B}_t):=\boldsymbol{B}_t^{-1}(\tilde{\boldsymbol{h}}^t-\boldsymbol{y}^t),\ \tilde{\boldsymbol{h}}^t:=\arg\min_{\boldsymbol{h}}\left\{\alpha^{-1}\eta_1(\boldsymbol{h})+\frac{1}{2}(\boldsymbol{h}-\boldsymbol{y}^t)^\top \boldsymbol{B}_t^{-1}(\boldsymbol{h}-\boldsymbol{y}^t)\right\}.
\end{equation}
By separability we mean that $[f(\boldsymbol{Z}^t,\hat{\boldsymbol{Q}}^t,\boldsymbol{C}_{t-1})]_i:=f(\boldsymbol{z}_i^t,\hat{\boldsymbol{Q}}^t,\boldsymbol{C}_{t-1})\in\mathbb{R}^k$, and similarly for $g$. Moreover, $\nabla \eta_2:\mathbb{R}^{k\times k}\to\mathbb{R}^{k\times k}$ is the derivative of $\eta_2:\mathbb{R}^{k\times k}\to\mathbb{R}$ w.r.t. its argument.
The interest of this analysis lies in Theorem \ref{theo:fixed point}: the fixed points of AMP-RBM are {\it exactly} the stationary points of the optimization problem \eqref{eq:opt}, which is proven in Appendix \ref{app:proof_fixed point}. In Appendix \ref{app:global} we show   that these  match the global optimum for $k=r=1$ and  conjecture this is true in general.
\begin{theorem}
\label{theo:fixed point}
Suppose that $\eta_1$ and $\eta_2$ are continuously differentiable. Denote $\hat{\boldsymbol{W}}$ to be the value of $\boldsymbol{W}^t$ at the fixed point of AMP-RBM. Then we have
\begin{equation}
\nabla_{\hat{\boldsymbol{W}}}\log\tilde{\cal L}(\hat{\boldsymbol{W}})=\sum_{\mu=1}^n\frac{1}{\sqrt{n}}\nabla \eta_1(\boldsymbol{x}_\mu^\top \hat{\boldsymbol{W}}/\sqrt{n})\otimes\boldsymbol{x}_\mu-\alpha\hat{\boldsymbol{W}}\nabla \eta_2(\boldsymbol{Q}(\hat{\boldsymbol{W}}))=0,\label{eq:saddle}
    \vspace{-1mm}
\end{equation}
where $\nabla \eta_2:\mathbb{R}^{k\times k}\to\mathbb{R}^{k\times k}$ and  $\nabla\eta_1:\mathbb{R}^k\to\mathbb{R}^{k}$ denote the derivatives of $\eta_1$ and $\eta_2$.
\end{theorem}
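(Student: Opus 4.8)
The plan is to verify that the AMP-RBM iteration map, when composed into a single update of $\boldsymbol{W}^t$, reduces at a fixed point to exactly the stationarity condition \eqref{eq:saddle}. First I would write out the fixed-point equations obtained by dropping the time indices in Algorithm \ref{alg:AMP-RBM}: at a fixed point $(\hat{\boldsymbol{W}},\hat{\boldsymbol{U}},\hat{\boldsymbol{Q}},\hat{\boldsymbol{C}},\hat{\boldsymbol{B}})$ we have $\hat{\boldsymbol{W}}=f(\hat{\boldsymbol{Z}},\hat{\boldsymbol{Q}},\hat{\boldsymbol{C}})$, $\hat{\boldsymbol{Y}}=\frac{1}{\sqrt n}\boldsymbol{X}\hat{\boldsymbol{W}}-\hat{\boldsymbol{U}}\hat{\boldsymbol{B}}^\top$, $\hat{\boldsymbol{U}}=g(\hat{\boldsymbol{Y}},\hat{\boldsymbol{B}})$, $\hat{\boldsymbol{Z}}=\frac{1}{\sqrt n}\boldsymbol{X}^\top\hat{\boldsymbol{U}}-\hat{\boldsymbol{W}}\hat{\boldsymbol{C}}^\top$, and $\hat{\boldsymbol{Q}}=\nabla\eta_2(\boldsymbol{Q}(\hat{\boldsymbol{W}}))$. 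The strategy is then to eliminate $\hat{\boldsymbol{Z}},\hat{\boldsymbol{Y}},\hat{\boldsymbol{U}},\hat{\boldsymbol{C}},\hat{\boldsymbol{B}}$ and land on an equation in $\hat{\boldsymbol{W}}$ alone.

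The key algebraic steps I would carry out are the following. From the closed form of $f$ in \eqref{eq:denoiser_f}, the relation $\hat{\boldsymbol{W}}=f(\hat{\boldsymbol{Z}},\hat{\boldsymbol{Q}},\hat{\boldsymbol{C}})$ gives $(2\hat{\boldsymbol{Q}}+\hat{\boldsymbol{C}})\hat{\boldsymbol{W}}=-\hat{\boldsymbol{Z}}$ (applied rowwise), hence substituting $\hat{\boldsymbol{Z}}=\frac{1}{\sqrt n}\boldsymbol{X}^\top\hat{\boldsymbol{U}}-\hat{\boldsymbol{W}}\hat{\boldsymbol{C}}^\top$ yields $2\hat{\boldsymbol{Q}}\hat{\boldsymbol{W}}^\top=-\frac{1}{\sqrt n}\hat{\boldsymbol{U}}^\top\boldsymbol{X}$ after a transpose and the cancellation of the $\hat{\boldsymbol{C}}$ terms (using that $\boldsymbol{C}$ and $\boldsymbol{B}$ are symmetric $k\times k$ matrices by their definitions). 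Next, the definition of $g$ shows that $\tilde{\boldsymbol{h}}=\boldsymbol{B}\boldsymbol{U}+\boldsymbol{Y}$ row-wise and that $\tilde{\boldsymbol{h}}$ is the minimizer of $\alpha^{-1}\eta_1(\boldsymbol{h})+\frac12(\boldsymbol{h}-\boldsymbol{y})^\top\boldsymbol{B}^{-1}(\boldsymbol{h}-\boldsymbol{y})$, so its first-order condition reads $\alpha^{-1}\nabla\eta_1(\tilde{\boldsymbol{h}})+\boldsymbol{B}^{-1}(\tilde{\boldsymbol{h}}-\boldsymbol{y})=0$, i.e. $\nabla\eta_1(\tilde{\boldsymbol{h}})=-\alpha\boldsymbol{B}^{-1}(\tilde{\boldsymbol{h}}-\boldsymbol{y})=-\alpha\boldsymbol{U}$ per row. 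Now I use the Onsager cancellation: $\hat{\boldsymbol{Y}}=\frac{1}{\sqrt n}\boldsymbol{X}\hat{\boldsymbol{W}}-\hat{\boldsymbol{U}}\hat{\boldsymbol{B}}^\top$, and $\tilde{\boldsymbol{h}}=\boldsymbol{B}\boldsymbol{U}+\boldsymbol{Y}=\frac{1}{\sqrt n}\boldsymbol{X}\hat{\boldsymbol{W}}$ (row-wise: $\tilde{\boldsymbol h}_\mu=\frac{1}{\sqrt n}\hat{\boldsymbol W}^\top\boldsymbol x_\mu$), so that $\hat{\boldsymbol{U}}_\mu=-\alpha^{-1}\nabla\eta_1(\boldsymbol{x}_\mu^\top\hat{\boldsymbol{W}}/\sqrt n)$. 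Plugging this into $2\hat{\boldsymbol{Q}}\hat{\boldsymbol{W}}^\top=-\frac{1}{\sqrt n}\hat{\boldsymbol{U}}^\top\boldsymbol{X}$ gives $2\hat{\boldsymbol{Q}}\hat{\boldsymbol{W}}^\top=\frac{1}{\alpha\sqrt n}\sum_{\mu=1}^n\nabla\eta_1(\boldsymbol{x}_\mu^\top\hat{\boldsymbol{W}}/\sqrt n)\otimes\boldsymbol{x}_\mu$, and finally substituting $\hat{\boldsymbol{Q}}=\nabla\eta_2(\boldsymbol{Q}(\hat{\boldsymbol{W}}))$ and rearranging (transposing and multiplying by $\alpha$) produces exactly \eqref{eq:saddle}. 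Conversely, given any $\hat{\boldsymbol{W}}$ satisfying \eqref{eq:saddle}, one reconstructs $\hat{\boldsymbol{U}},\hat{\boldsymbol{Z}},\hat{\boldsymbol{Q}},\hat{\boldsymbol{B}},\hat{\boldsymbol{C}}$ from the above relations and checks they form a fixed point, so the correspondence is exact in both directions.

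The main obstacle is bookkeeping the Onsager (memory) terms correctly: one must be careful that the $-\boldsymbol{U}^{t-1}\boldsymbol{B}_t^\top$ and $-\boldsymbol{W}^t\boldsymbol{C}_t^\top$ reaction terms are precisely what makes $\tilde{\boldsymbol{h}}_\mu=\frac{1}{\sqrt n}\hat{\boldsymbol W}^\top\boldsymbol x_\mu$ and the $\hat{\boldsymbol{C}}$ contributions drop out of the $\hat{\boldsymbol{W}}$ equation at the fixed point, since any sign or transpose error in $\boldsymbol{B}_t,\boldsymbol{C}_t$ would leave spurious terms. A secondary technical point is that $f$ and $g$ are defined via $\arg\min$; the differentiability hypothesis on $\eta_1,\eta_2$ guarantees the first-order conditions I use are valid, and the explicit quadratic form of $f$ already supplies the closed form, so invertibility of $2\hat{\boldsymbol{Q}}+\hat{\boldsymbol{C}}$ and of $\hat{\boldsymbol{B}}$ at the relevant fixed point is the only regularity caveat to flag. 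Once the algebra is lined up, the statement follows directly with no asymptotics needed—it is a purely deterministic identity about the fixed-point equations.
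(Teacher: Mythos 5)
Your argument follows essentially the same route as the paper's own proof: write the fixed-point equations of Algorithm~\ref{alg:AMP-RBM}, read off the first-order condition of the prox-type denoiser $f$ to express $\hat{\boldsymbol{Z}}$ in terms of $\hat{\boldsymbol{W}}$, use the Onsager cancellation in the $\boldsymbol{Y}$-update together with the first-order condition of $g$ to obtain $\hat{\boldsymbol{u}}_\mu=-\alpha^{-1}\nabla\eta_1(\boldsymbol{x}_\mu^\top\hat{\boldsymbol{W}}/\sqrt{n})$, and then substitute back into the $\hat{\boldsymbol{Z}}$-equation using $\hat{\boldsymbol{Q}}=\nabla\eta_2(\boldsymbol{Q}(\hat{\boldsymbol{W}}))$. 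The one addition is your closing claim of the converse implication, which is not part of the stated theorem and which, as written, glosses over checking that the reconstructed Onsager coefficients $\hat{\boldsymbol{B}},\hat{\boldsymbol{C}}$ actually satisfy their defining consistency conditions as averages of derivatives of $f$ and $g$.
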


AMP-RBM follows a family of general AMP iterations described in Appendix \ref{app:generalAMP}. The most attractive property of the AMP family is that there is a low-dimensional, deterministic recursion called the state evolution (SE) that can track  its performance:
\begin{equation}
\begin{aligned}
&\bar{\boldsymbol{M}}_t=\alpha^{-1}\mathbb{E}[f(\boldsymbol{M}_t\boldsymbol{W}+\boldsymbol{\Sigma}_t^{1/2}\boldsymbol{G},\bar{\hat{\boldsymbol{Q}}}_t,\bar{\boldsymbol{C}}_{t-1})\boldsymbol{W}^\top ]\boldsymbol{\Gamma},\\ 
&\bar{\boldsymbol{\Sigma}}_t=\alpha^{-1}\mathbb{E}[f(\boldsymbol{M}_t\boldsymbol{W}+\boldsymbol{\Sigma}_t^{1/2}\boldsymbol{G},\bar{\hat{\boldsymbol{Q}}}_t,\bar{\boldsymbol{C}}_{t-1})f(\boldsymbol{M}_t\boldsymbol{W}+\boldsymbol{\Sigma}_t^{1/2}\boldsymbol{G},\bar{\hat{\boldsymbol{Q}}}_t,\bar{\boldsymbol{C}}_{t-1})^\top ],\\
&\bar{\boldsymbol{Q}}_t=\alpha\bar{\boldsymbol{\Sigma}}_t,\ \bar{\boldsymbol{B}}_t:=\alpha^{-1}\mathbb{E}[\nabla f(\boldsymbol{M}_t\boldsymbol{W}+\boldsymbol{\Sigma}_t^{1/2}\boldsymbol{G},\bar{\hat{\boldsymbol{Q}}}_t,\bar{\boldsymbol{C}}_{t-1}))]\\
&\boldsymbol{M}_{t+1}=\mathbb{E}[g(\bar{\boldsymbol{M}}_t\boldsymbol{U}+\bar{\boldsymbol{\Sigma}}_t\boldsymbol{G},\bar{\boldsymbol{B}}_t)(\boldsymbol{U})^\top ]\boldsymbol{\Gamma},
\\&\boldsymbol{\Sigma}_{t+1}=\mathbb{E}[g(\bar{\boldsymbol{M}}_t\boldsymbol{U}+\bar{\boldsymbol{\Sigma}}_t\boldsymbol{G},\bar{\boldsymbol{B}}_t)g(\bar{\boldsymbol{M}}_t\boldsymbol{U}+\bar{\boldsymbol{\Sigma}}_t\boldsymbol{G},\bar{\boldsymbol{B}}_t)^\top ],
\\&\bar{\hat{\boldsymbol{Q}}}^{t+1}=
\zeta\bar{\hat{\boldsymbol{Q}}}^{t}+(1-\zeta)\nabla \eta_2(\bar{\boldsymbol{Q}}^t),\  \bar{\boldsymbol{C}}_t:=\mathbb{E}[\nabla g(\bar{\boldsymbol{M}}_t\boldsymbol{U}+\bar{\boldsymbol{\Sigma}}_t\boldsymbol{G},\bar{\boldsymbol{B}}_t)]
\end{aligned}\label{eq:SE}
\end{equation}
where $\bar{\boldsymbol{M}}_t,\boldsymbol{M}_t,\bar{\boldsymbol{\Sigma}}_t,\boldsymbol{\Sigma}_t,\bar{\boldsymbol{B}}_t,\bar{\boldsymbol{C}}_t,\bar{\boldsymbol{Q}}_t,\hat{\bar{\boldsymbol{Q}}}_t\in\mathbb{R}^{k\times k}$, and $
\boldsymbol{\Gamma}:=\sqrt{\alpha}\boldsymbol{\Lambda}\in\mathbb{R}^{k\times k}$
is the effective signal-noise ratio. $\bar{\boldsymbol{M}}_1$ is determined by the initialization. The expectation is w.r.t. independent random variables $\boldsymbol{U},\boldsymbol{W},\boldsymbol{G}\in\mathbb{R}^k$ sampled from $\boldsymbol{U}_{[1:r]}\sim P_u$, $\boldsymbol{W}_{[1:r]}\sim P_w$, $[\boldsymbol{G}]_{i=1}^k\overset{iid}{\sim}\mathcal{N}(0,1)$, and $U_a=W_a=0$ for $a>r$. $\nabla f:\mathbb{R}^k\times\mathbb{R}^{k\times k}\times\mathbb{R}^{k\times k}\to\mathbb{R}^{k\times k}$ refers to the derivatives of $f$ w.r.t its first argument, and similarly for $g$. 

We have the following theorem formally describing how AMP-RBM is tracked by its SE. It is a special case of the SE of a general AMP iteration proven in Appendix \ref{app:generalAMP}.
\begin{theorem}
\label{theo:AMP-RBM}
Suppose that $f$, $g$ and $\nabla\eta_2$ are Lipschitz continuous, uniformly at $\{\bar{\boldsymbol{B}}_t,\bar{\boldsymbol{C}}_t,\bar{\hat{\boldsymbol{Q}}}_t\}$\footnote{$f(x,c)$ is uniformly Lipschitz at $c_0$ if there is an open neighborhood $U$ of $c_0$ and a constant $L$ such that $f(\cdot,c)$ is $L-$Lipschitz for any $c\in U$ and $|f(x,c_1)-f(x,c_2)|\leq L(1+||x||)|c_1-c_2|$ for any $c_1,c_2\in U$.}, and further assume that the initialization satisfies $\{\boldsymbol{z}_1^1,\cdots,\boldsymbol{z}_d^1\}\overset{iid}{\sim}\boldsymbol{M}_1\boldsymbol{W}+\boldsymbol{\Sigma}_0^{1/2}\boldsymbol{G}$. Then 
for any $PL(2)$ function\footnote{i.e. pseudo-Lipschitz function of order $2$, satisfying $|\psi(\boldsymbol{x}_1)-\psi(\boldsymbol{x}_2)|\leq C||\boldsymbol{x}_1-\boldsymbol{x}_2||(1+||\boldsymbol{x}_1||+||\boldsymbol{x}_2||)$ for some constant $C\geq0$.} $\psi$, the following holds almost surely for $t\geq1$:
\begin{equation}
\lim_{d\to\infty}\frac{1}{d}\sum_{i=1}^d\psi(\boldsymbol{w}_i^*,\boldsymbol{w}_{i}^t)=\mathbb{E}[\psi(\boldsymbol{W},f(\boldsymbol{M}_t\boldsymbol{W}+\boldsymbol{\Sigma}_t^{1/2}\boldsymbol{G},\bar{\hat{\boldsymbol{Q}}}_t,\bar{\boldsymbol{C}}_{t-1}))].
\end{equation}
\vspace{-3mm}
\end{theorem}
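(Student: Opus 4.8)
The plan is to deduce Theorem~\ref{theo:AMP-RBM} from the master state evolution theorem for the general matrix-valued (rectangular/bipartite) AMP iteration stated in Appendix~\ref{app:generalAMP}, by casting AMP-RBM into that canonical form and verifying its hypotheses. First I would rewrite the update lines of Algorithm~\ref{alg:AMP-RBM} as a bipartite AMP acting on the two sides $\mathbb{R}^{d\times k}$ and $\mathbb{R}^{n\times k}$, with $\bX/\sqrt{n}$ playing the role of the random operator and with the displayed Onsager terms $-\boldsymbol{U}^{t-1}\boldsymbol{B}_t^\top$ and $-\bW^t\boldsymbol{C}_t^\top$. The denoisers $f(\cdot,\hat{\boldsymbol{Q}}^t,\boldsymbol{C}_{t-1})$ and $g(\cdot,\boldsymbol{B}_t)$ are separable in the row index, but they carry the extra matrix arguments $\hat{\boldsymbol{Q}}^t,\boldsymbol{C}_{t-1},\boldsymbol{B}_t$, which are produced inside the iteration; I would treat these as parameters and invoke the version of the master theorem in which the nonlinearities may depend on such parameters provided these converge, along the iteration, to deterministic limits (here $\bar{\hat{\boldsymbol{Q}}}_t,\bar{\boldsymbol{C}}_{t-1},\bar{\boldsymbol{B}}_t$) and the dependence is uniformly Lipschitz. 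This is exactly why the hypotheses are phrased in terms of the uniform-Lipschitz notion of the footnote.

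Second, I would absorb the spike in $\bX=\tfrac1{\sqrt d}\boldsymbol{U}^*\boldsymbol{\Lambda}(\bW^*)^\top+\bZ$. Conditioning on $(\boldsymbol{U}^*,\bW^*)$, the noise $\bZ/\sqrt{n}$ is i.i.d.\ Gaussian, so the core of the argument is Gaussian-matrix AMP; the rank-$r$ perturbation is handled by augmenting the state with the deterministic signal directions, the standard reduction in spiked-matrix AMP. Concretely $\tfrac1{\sqrt n}\bX\bW^t=\tfrac1{\sqrt n}\bZ\bW^t+\tfrac1{\sqrt\alpha\,d}\boldsymbol{U}^*\boldsymbol{\Lambda}(\bW^*)^\top\bW^t$, and the overlap $\tfrac1d(\bW^*)^\top\bW^t$ concentrates --- by the induction hypothesis applied to the $PL(2)$ function $(\bw_i^*,\bw_i^t)\mapsto\bw_i^*(\bw_i^t)^\top$ --- to a deterministic matrix that feeds $\bar{\boldsymbol{M}}_t$, and symmetrically on the other side with $\boldsymbol{\Gamma}=\sqrt\alpha\boldsymbol{\Lambda}$. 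This is what turns the empirical iteration into the closed recursion~\eqref{eq:SE}.

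Third, I would run the induction on $t$. Assuming convergence of the empirical joint laws of $(\bw_i^*,\bw_i^1,\dots,\bw_i^t)$ and of $(\bu_\mu^*,\bu_\mu^1,\dots,\bu_\mu^t)$ to the Gaussian-plus-signal laws prescribed by~\eqref{eq:SE} up to step $t$, I would (i) deduce $\bQ(\bW^t)=\tfrac1d(\bW^t)^\top\bW^t\to\bar{\boldsymbol{Q}}_t$ and hence, using $\nabla\eta_2$ Lipschitz, $\hat{\boldsymbol{Q}}^{t+1}\to\bar{\hat{\boldsymbol{Q}}}_{t+1}$, and likewise $\boldsymbol{B}_t\to\bar{\boldsymbol{B}}_t$, $\boldsymbol{C}_t\to\bar{\boldsymbol{C}}_t$; (ii) replace the data-dependent parameters inside $f,g$ by their limits, at a cost that vanishes thanks to uniform Lipschitzness, via a short perturbation lemma comparing AMP-RBM with a frozen-parameter reference AMP and propagating the discrepancy by a Gr\"onwall-type bound over the finitely many steps $1,\dots,t$; (iii) apply the master theorem to the reference AMP to obtain the SE at step $t+1$; (iv) transfer back to AMP-RBM and extract the claimed $PL(2)$ limit. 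Along the way one must check that $\bar{\boldsymbol{B}}_t$, $\bar{\boldsymbol{C}}_{t-1}$ and $2\bar{\hat{\boldsymbol{Q}}}_t+\bar{\boldsymbol{C}}_{t-1}$ stay invertible with bounded inverse, so that $f,g$ are well defined and Lipschitz; this holds on the neighbourhoods supplied by the uniform-Lipschitz hypothesis and is propagated along the (finite) iteration.

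The main obstacle is step (ii). The general AMP theorems are stated for fixed, deterministic denoisers, whereas $f$ and $g$ here depend on the within-iteration random matrices $\hat{\boldsymbol{Q}}^t,\boldsymbol{B}_t,\boldsymbol{C}_{t-1}$; making the substitution by their deterministic SE limits rigorous --- i.e.\ showing the accumulated error over the first $t$ steps is $o_d(1)$ almost surely --- is the delicate point. It is precisely for this that the hypotheses require \emph{uniform} Lipschitz continuity at the SE values $\{\bar{\boldsymbol{B}}_t,\bar{\boldsymbol{C}}_t,\bar{\hat{\boldsymbol{Q}}}_t\}$: this upgrades ``parameter close to its limit'' into ``denoiser uniformly close to the reference denoiser,'' after which an interpolation/Gr\"onwall argument closes the induction. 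By contrast, the spike bookkeeping and the matrix (rather than scalar) nature of the state variables are routine extensions of known arguments.
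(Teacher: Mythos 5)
Your proposal is correct and follows essentially the same route as the paper: the paper proves Theorem~\ref{theo:AMP-RBM} as a corollary of the general spiked AMP theorem of Appendix~\ref{app:generalAMP}, which it in turn establishes by (i) absorbing the spike (and, in the generalized data model, the nonlinearity $\mathcal{F}$) into an effective linear matrix via Lemma~\ref{lemma:equivalence}, (ii) replacing the in-iteration random parameters $\boldsymbol{E}^t,\boldsymbol{F}^t$ (here $\hat{\boldsymbol{Q}}^t,\boldsymbol{B}_t,\boldsymbol{C}_t$) by their deterministic SE limits using the uniform-Lipschitz hypothesis, and (iii) invoking existing bipartite AMP state-evolution and universality results as black boxes. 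Your Gr\"onwall-style perturbation argument and your explicit check that $\bar{\boldsymbol{B}}_t$, $\bar{\boldsymbol{C}}_{t-1}$ and $2\bar{\hat{\boldsymbol{Q}}}_t+\bar{\boldsymbol{C}}_{t-1}$ stay boundedly invertible make explicit what the paper delegates to the cited references, but these are the same ideas, not a different proof.
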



It is quite direct to extend Theorem \ref{theo:AMP-RBM} to the spectral initialization as in \cite[Theorem 5]{montanari2021estimation}. We emphasize that AMP-RBM is indeed a practical algorithm and it empirically works well for both the spectral initialization and the random initialization. For spectral initialization, the SE (Theorem \ref{theo:AMP-RBM}) still holds. However, for random initialization, the correlation between the signal and the estimation might keep zero until $\Theta(\log d)$ iterations \citep{li2023approximate}, while the SE only holds for $\Theta(1)$ iterations.

\begin{figure}[t!]
    \centering
    {
        \includegraphics[width=0.4\linewidth]{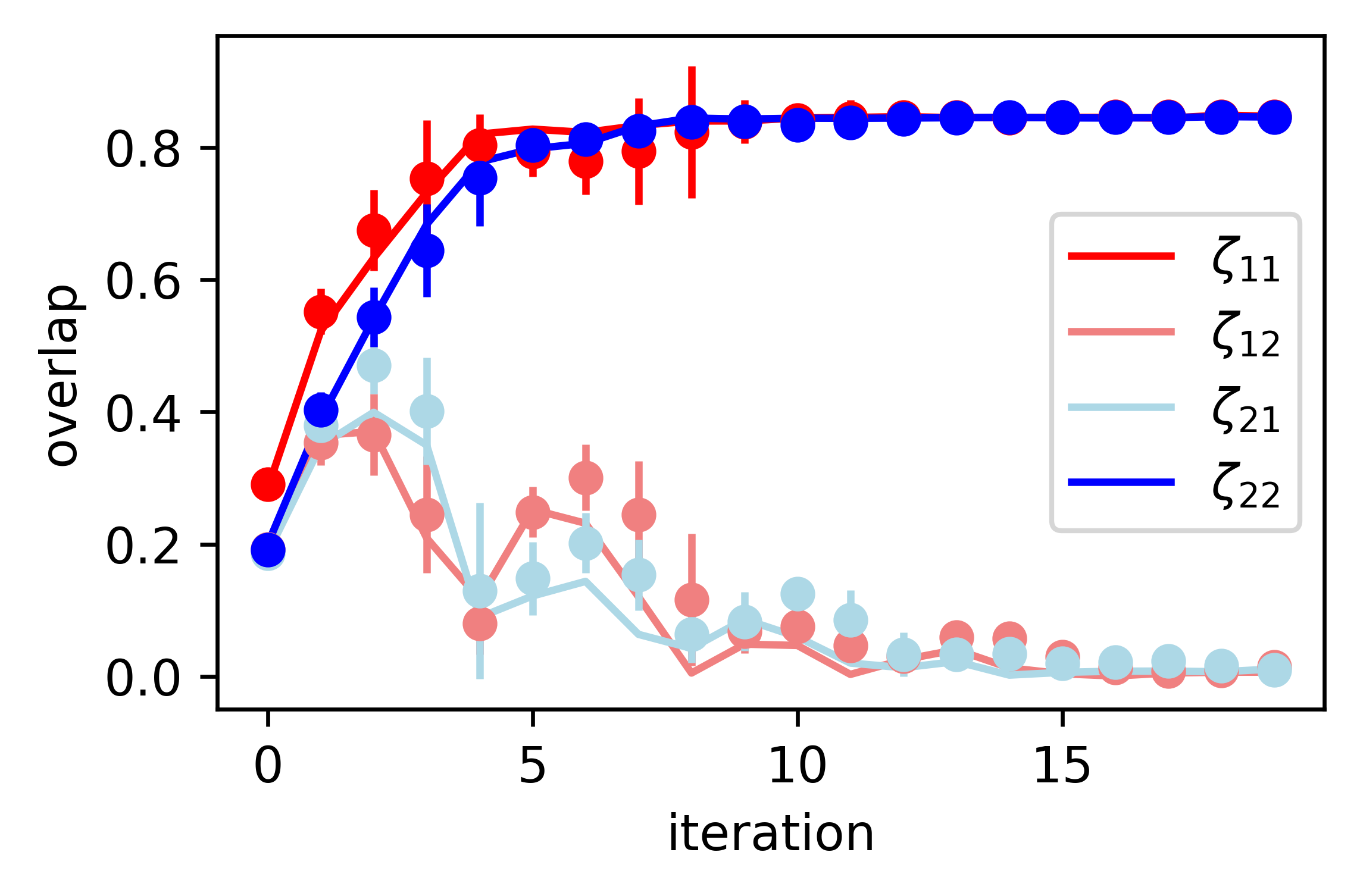}
        \includegraphics[width=0.4\linewidth]{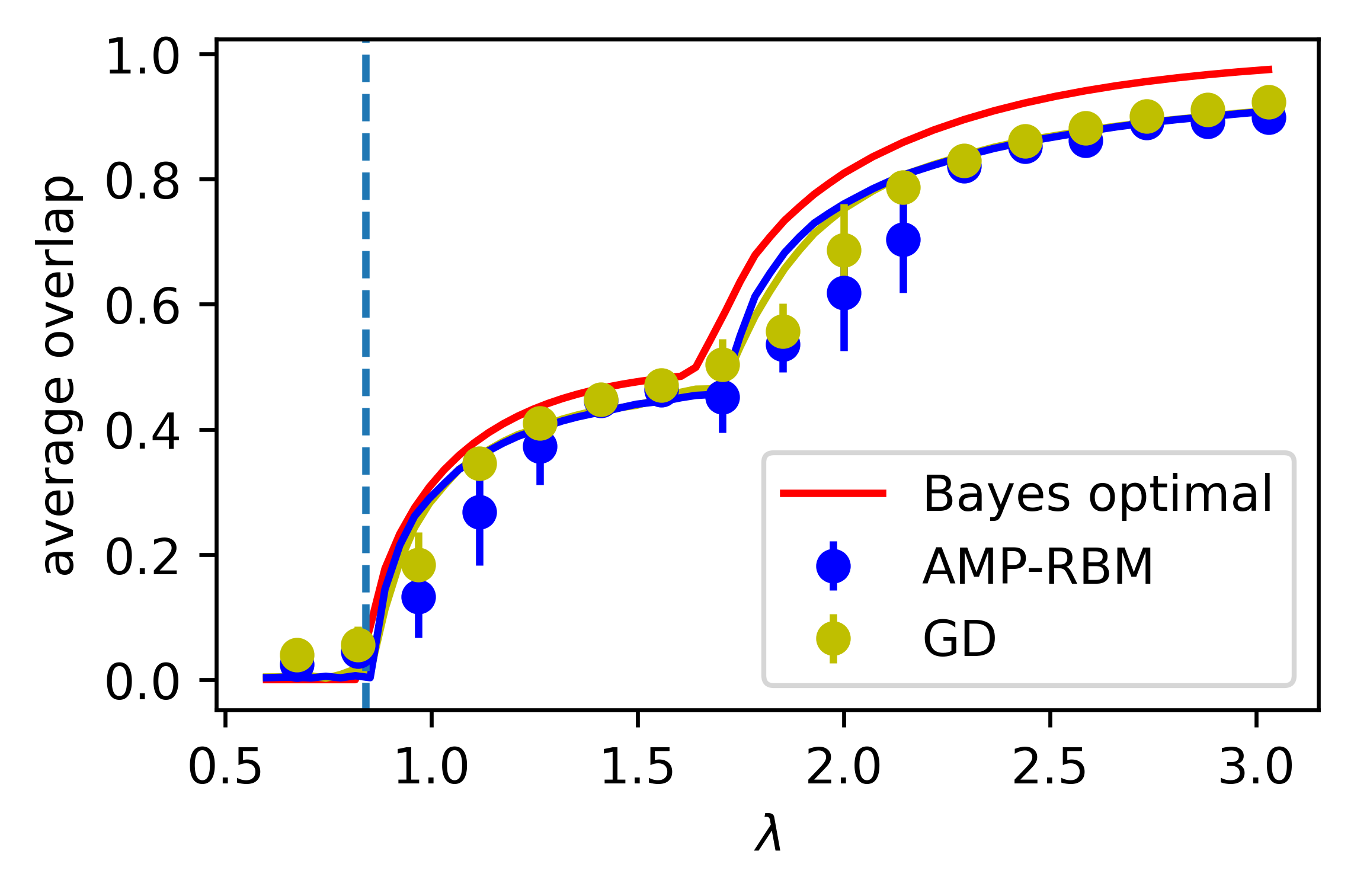}
    }
    \caption{\small \textbf{Left}: Iteration curves of AMP-RBM, $r=k=2,\Lambda=1.4I_2$, so the overlap is a $2\times2$ matrix containing $\zeta_{11},\zeta_{12},\zeta_{21},\zeta_{22}$. Lines denote the state evolution. \textbf{Right}: The performance of AMP-RBM, GD (over \eqref{eq:opt}) and the Bayes optimality \cite{lesieur2017constrained} for $r=k=2$ and $\Lambda=\text{diag}(\lambda,0.5\lambda)$, where AMP-RBM and GD use random intialization. The dashed blue line represents the BBP transition. The purple and yellow lines represent the SE of AMP-RBM and GD, which almost overlap. We use the Rademacher prior and $n=8000$, $d=4000$. Our implementation of all experiments is available at \url{https://github.com/SPOC-group/RBM_asymptotics}.}
    \vspace{-3mm}
\label{fig:rank1}
\end{figure}

An important performance criterion concerning the spiked covariance model is the SNR at which the algorithm begins to obtain a non-zero correlation with the signal, which is referred to as the weak recovery threshold. For the weak recovery threshold, we can consider the linearization of AMP-RBM around $0$. For $\boldsymbol{x}$ close to $0$, we have $\alpha^{-1}\eta_1(\boldsymbol{x})\approx\frac{1}{2}\boldsymbol{x}^\top \boldsymbol{x}$, and thus $g(\boldsymbol{y}^t,\boldsymbol{B}_t)\approx-(\boldsymbol{B}_t+\boldsymbol{I}_k)^{-1}\boldsymbol{y}^t$. 

Moreover, around $\boldsymbol{W}=0$ we have $\hat{\boldsymbol{Q}}\approx\frac{1}{2}\boldsymbol{I}_k$, and thus $
f(\boldsymbol{z}^t,\hat{\boldsymbol{Q}},\boldsymbol{C}_{t-1})\approx-(\boldsymbol{I}_k+\boldsymbol{C}_{t-1})^{-1}\boldsymbol{z}^t$. In this case, AMP-RBM becomes equivalent to the power method, so its weak recovery threshold is exactly the celebrated BBP threshold \citep{baik2005phase}, below which it is information theoretically impossible to distinguish the signals from the noise \citep{deshpande2014information}. Thus AMP-RBM as well as RBM (because their fixed points match) is in this sense optimal in detecting the signals.  To see this, we can assume $\boldsymbol{M}_t \ll \boldsymbol{\Sigma}_t \ll 1$, and expand the state evolution as 
\begin{equation}
\begin{aligned}
&\bar{\boldsymbol{M}}_t=-\alpha^{-1}(\bar{\boldsymbol{C}}_{t-1}+\boldsymbol{I}_k)^{-1}\boldsymbol{M}_{t-1}\boldsymbol{\Gamma},\ \boldsymbol{M}_t=-(\bar{\boldsymbol{B}}_t+\boldsymbol{I}_k)^{-1}\bar{\boldsymbol{M}}_t\boldsymbol{\Gamma},\\
&\bar{\boldsymbol{\Sigma}}_t=\alpha^{-1}(\bar{\boldsymbol{C}}_{t-1}+\boldsymbol{I}_k)^{-1}\boldsymbol{\Sigma}_t(\bar{\boldsymbol{C}}_{t-1}+\boldsymbol{I}_k)^{-1},\ \boldsymbol{\Sigma}_{t+1}=(\bar{\boldsymbol{B}}_t+\boldsymbol{I}_k)^{-1}\bar{\boldsymbol{\Sigma}}_t(\bar{\boldsymbol{B}}_t+\boldsymbol{I}_k)^{-1},
\end{aligned}
\end{equation}
which gives $
\boldsymbol{M}_{t+1}^\top \boldsymbol{\Sigma}_{t+1}^{-1}\boldsymbol{M}_{t+1}=\alpha^{-1}\boldsymbol{\Gamma}^2(\boldsymbol{M}_{t}^\top \boldsymbol{\Sigma}_{t}^{-1}\boldsymbol{M}_{t})\boldsymbol{\Gamma}^2$. Therefore, $0$ is a stable fixed point of AMP-RBM if and only of
\begin{equation}
\alpha\lambda_{\text{max}}^4\leq1.
\end{equation}
The transition point is just standard BBP transition $\lambda_{\text{max}}=\alpha^{-1/4}$. Note that all the above analysis does not assume $k=r$, so the weak recovery threshold remains the same when the number of hidden units does not match the number of signals, which is common in practice. This is in sharp contrast with the results in \cite{theriault2024modelling,manzan2024effect}, which essentially use a different loss function (in the zero temperature limit) and suggest the system is in the spin glass phase. 

Our numerics are presented in Figure \ref{fig:rank1}, where we measure the overlap between $\bW^*$ and the AMP-RBM algorithm output. Figure \ref{fig:rank1} verifies that the weak recovery threshold of AMP-RBM matches the BBP transition and that AMP-RBM matches its SE. See Appendix \ref{app:exp_spike} for more experiments, including degenerate eigenvalues and more comparisons with other algorithms.

\vspace{-1mm}
\paragraph{Asymptotics of the likelihood extremizers ---}
AMP-RBM can also be utilized as a proof technique to obtain an asymptotic description of the stationary points of \eqref{eq:opt} under a further assumption.
\begin{assumption}
AMP-RBM converges to its fixed point (i.e., the stationary point of \eqref{eq:opt}) uniformly, that is, there exists $\hat{\boldsymbol{W}}(d)=\lim_{t\to\infty}\boldsymbol{W}^t(d)$ such that
\begin{equation}
\lim_{t\to\infty}\lim_{d\to\infty}\frac{1}{d}||\boldsymbol{W}^t(d)-\hat{\boldsymbol{W}}(d)||_F^2=0,
\end{equation}
and this holds for any output of Algorithm \ref{alg:AMP-RBM}.
\label{assum:convergence}
\end{assumption}
While Assumption \ref{assum:convergence} might seem artificial, in Appendix \ref{app:global} we will show that actually it comes from a more essential condition (Assumption \ref{assum:RS}) called the replicon stability condition or de Almeida-Thouless condition \citep{de1978stability}, which characterizes the complexity of landscapes, i.e. the landscape is well behaved if the replicon stability condition holds. Assumption \ref{assum:convergence} (the convergence of AMP-RBM) is also verified by the numerics in Figure \ref{fig:rank1}.


Now in Theorem \ref{theo:main}, we characterize the asymptotic property of the stationary point $\hat{\bW}$. 
\begin{theorem}
Under assumptions \ref{assum:RBM}-\ref{assum:convergence}, we have $
\lim_{d\to\infty}||\sqrt{d}\nabla\log\mathcal{L}(\hat{\boldsymbol{W}}(d))||=0$,
where the asymptotic stationary point $\hat{\boldsymbol{W}}(d)$ satisfies 
\begin{equation}
\lim_{d\to\infty}\frac{1}{d}\sum_{i=1}^d\psi(\boldsymbol{w}_{i}^*,\hat{\boldsymbol{w}}_{i})=\mathbb{E}[\psi(\boldsymbol{W},f(\boldsymbol{M}_\infty\boldsymbol{W}+\boldsymbol{\Sigma}_\infty^{1/2}\boldsymbol{G},\bar{\hat{\boldsymbol{Q}}}_\infty,\bar{\boldsymbol{C}}_\infty)].
\end{equation}
Here $\bar{\boldsymbol{C}}_\infty=\lim_{t\to\infty}\bar{\boldsymbol{C}}_t$, $\boldsymbol{M}_\infty:=\lim_{t\to\infty}\boldsymbol{M}_t$, $\boldsymbol{\Sigma}_\infty:=\lim_{t\to\infty}\boldsymbol{\Sigma}_t$ and $\bar{\hat{\boldsymbol{Q}}}_\infty:=\lim_{t\to\infty}\bar{\hat{\boldsymbol{Q}}}_t$ are well-defined matrices.
\label{theo:main}
\end{theorem}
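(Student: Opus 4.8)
The plan is to read Theorem~\ref{theo:main} as the $t\to\infty$ limit of the state-evolution description of AMP-RBM, transported from the effective likelihood $\tilde{\mathcal L}$ to the true likelihood $\mathcal L$ via Theorem~\ref{theo:loss}. Four facts are already available: (i) Theorem~\ref{theo:fixed point}, which says that at any fixed point of AMP-RBM one has $\nabla\log\tilde{\mathcal L}(\hat{\boldsymbol W}(d))=0$ \emph{exactly}; (ii) Theorem~\ref{theo:AMP-RBM}, the state-evolution tracking of the finite-iteration iterates $\boldsymbol W^t(d)$; (iii) Assumption~\ref{assum:convergence}, the uniform-in-$d$ convergence $\boldsymbol W^t(d)\to\hat{\boldsymbol W}(d)$; and (iv) the identity $\log\mathcal L(\boldsymbol W)-\log\tilde{\mathcal L}(\boldsymbol W)=-n\bigl(\log Z(\boldsymbol W)-\eta_2(\boldsymbol Q(\boldsymbol W))\bigr)$, which holds because the $\eta_1$ and ``clamped'' terms coincide verbatim, so that \eqref{eq:loglikelihood} is in fact equivalent to $\lim_{d\to\infty}|\log Z(\boldsymbol W)-\eta_2(\boldsymbol Q(\boldsymbol W))|=0$. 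I would establish the overlap identity and the gradient identity separately.

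For the overlap identity, fix a $PL(2)$ test function $\psi$ and set $S_d(t):=\tfrac1d\sum_{i=1}^d\psi(\boldsymbol w_i^*,\boldsymbol w_i^t)$ and $S_d(\infty):=\tfrac1d\sum_{i=1}^d\psi(\boldsymbol w_i^*,\hat{\boldsymbol w}_i)$. By Theorem~\ref{theo:AMP-RBM}, $\lim_{d\to\infty}S_d(t)=E_t:=\mathbb E[\psi(\boldsymbol W,f(\boldsymbol M_t\boldsymbol W+\boldsymbol\Sigma_t^{1/2}\boldsymbol G,\bar{\hat{\boldsymbol Q}}_t,\bar{\boldsymbol C}_{t-1}))]$ for each fixed $t$. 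The pseudo-Lipschitz bound on $\psi$ and Cauchy--Schwarz give
\begin{equation*}
|S_d(\infty)-S_d(t)|\le \frac{C}{d}\sum_{i=1}^d\|\hat{\boldsymbol w}_i-\boldsymbol w_i^t\|\,\bigl(1+2\|\boldsymbol w_i^*\|+\|\hat{\boldsymbol w}_i\|+\|\boldsymbol w_i^t\|\bigr)\le C\Bigl(\tfrac1d\|\hat{\boldsymbol W}-\boldsymbol W^t\|_F^2\Bigr)^{1/2}R_d(t),
\end{equation*}
where $R_d(t)^2:=\tfrac1d\sum_i(1+2\|\boldsymbol w_i^*\|+\|\hat{\boldsymbol w}_i\|+\|\boldsymbol w_i^t\|)^2$ stays bounded since $\tfrac1d\|\boldsymbol W^*\|_F^2,\tfrac1d\|\boldsymbol W^t\|_F^2,\tfrac1d\|\hat{\boldsymbol W}\|_F^2=O(1)$. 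Letting $d\to\infty$ and then $t\to\infty$ kills the first factor by Assumption~\ref{assum:convergence}, so $\limsup_d|S_d(\infty)-E_\infty|\le|E_t-E_\infty|+o_t(1)$ with $E_\infty:=\mathbb E[\psi(\boldsymbol W,f(\boldsymbol M_\infty\boldsymbol W+\boldsymbol\Sigma_\infty^{1/2}\boldsymbol G,\bar{\hat{\boldsymbol Q}}_\infty,\bar{\boldsymbol C}_\infty))]$. To finish I must check that the state evolution \eqref{eq:SE} converges: applying the finite-$t$ state evolution with quadratic $\psi$ identifies $\boldsymbol M_t,\boldsymbol\Sigma_t,\bar{\hat{\boldsymbol Q}}_t,\bar{\boldsymbol C}_t$ with $d\to\infty$ limits of overlaps of the AMP iterates $\boldsymbol W^t,\boldsymbol U^t$; since those iterates reach a fixed point by Assumption~\ref{assum:convergence}, the overlaps stabilize, hence $\boldsymbol M_\infty,\boldsymbol\Sigma_\infty,\bar{\hat{\boldsymbol Q}}_\infty,\bar{\boldsymbol C}_\infty$ exist, and Lipschitz continuity of $f$ in its arguments gives $E_t\to E_\infty$. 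Sending $t\to\infty$ yields $\lim_d S_d(\infty)=E_\infty$, the claimed formula.

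For the gradient identity, Theorem~\ref{theo:fixed point} gives the \emph{exact} equality $\nabla\log\tilde{\mathcal L}(\hat{\boldsymbol W}(d))=0$, so by (iv) one has $\sqrt d\,\|\nabla\log\mathcal L(\hat{\boldsymbol W}(d))\|=\sqrt d\,n\,\|\nabla_{\boldsymbol W}\bigl(\log Z-\eta_2(\boldsymbol Q)\bigr)(\hat{\boldsymbol W}(d))\|$, where $\nabla_{\boldsymbol W}\log Z(\boldsymbol W)=\tfrac1{\sqrt d}\langle\boldsymbol v\boldsymbol h^\top\rangle_{\rm model}$ is the RBM Gibbs average. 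It therefore suffices to show this quantity is negligible on the relevant scale, which I would obtain from a differentiated version of Theorem~\ref{theo:loss}: the Gaussian-conditioning and concentration estimates that prove $\log Z(\boldsymbol W)\approx\eta_2(\boldsymbol Q(\boldsymbol W))$ apply directly to the Gibbs average $\langle\boldsymbol v\boldsymbol h^\top\rangle_{\rm model}$ and show it coincides asymptotically with its effective counterpart $\tfrac1{\sqrt d}\boldsymbol W\,\nabla\eta_2(\boldsymbol Q(\boldsymbol W))$, uniformly over $\boldsymbol W$ with $\tfrac1n\|\boldsymbol W\|_F^2$ bounded; boundedness of $\tfrac1d\|\hat{\boldsymbol W}(d)\|_F^2$ (supplied by the overlap part) keeps the argument on scale, and combining with $\nabla\log\tilde{\mathcal L}(\hat{\boldsymbol W})=0$ gives $\lim_{d\to\infty}\sqrt d\,\|\nabla\log\mathcal L(\hat{\boldsymbol W}(d))\|=0$.

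The main obstacle is the exchange of the limits $t\to\infty$ and $d\to\infty$ in the overlap part: Theorem~\ref{theo:AMP-RBM} only controls the finite-iteration iterate $\boldsymbol W^t(d)$, while the theorem concerns the $t=\infty$ fixed point $\hat{\boldsymbol W}(d)$ at fixed $d$. Assumption~\ref{assum:convergence} is precisely the hypothesis that licenses this swap, and, as discussed after that assumption, it ultimately rests on the replicon / de Almeida--Thouless stability condition (Assumption~\ref{assum:RS}) guaranteeing that the landscape of \eqref{eq:opt} is benign; the associated quantitative work is the $\limsup$ sandwiching above together with the verification that \eqref{eq:SE} converges. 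A secondary, more technical point is producing the differentiated form of Theorem~\ref{theo:loss} with the precise $\sqrt d$ scaling the gradient claim requires, which needs the quantitative content of the proof of Theorem~\ref{theo:loss} rather than only its $\tfrac1d$-normalized conclusion \eqref{eq:loglikelihood}, but is otherwise routine.
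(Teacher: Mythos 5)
Your approach matches the paper's proof in both parts: the overlap identity is proved by the same triangle-inequality decomposition, pseudo-Lipschitz bound on $\psi$, Cauchy--Schwarz, and the $d\to\infty$ then $t\to\infty$ exchange licensed by Assumption~\ref{assum:convergence}; the gradient claim is obtained by the same combination of the exact fixed-point identity $\nabla\log\tilde{\cal L}(\hat{\bW})=0$ (Theorem~\ref{theo:fixed point}) with a \emph{differentiated} version of Theorem~\ref{theo:loss}. You correctly identified that the stated conclusion \eqref{eq:loglikelihood} is too weak and that the argument genuinely needs the gradient comparison $\frac{1}{\sqrt{d}}\|\nabla\log{\cal L}-\nabla\log\tilde{\cal L}\|_F\to0$; the paper in fact proves exactly this stronger form, equation \eqref{eq:gradient_loglikelihood}, inside its proof of Theorem~\ref{theo:loss}, so your ``secondary, more technical point'' is already discharged there. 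The only cosmetic deviation is that you conclude the overlap part by a direct $\limsup$ sandwich, while the paper invokes an external lemma from the Emami et al.\ reference; the content is the same.

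One subtle point is worth flagging, though it is a defect of the stated theorem rather than of your argument. The bound you can actually extract is $\|\nabla(\log Z-\eta_2)(\hat{\bW})\|_F=o(d^{-1/2})$ (that is the content of \eqref{eq:nabla1}/\eqref{eq:gradient_loglikelihood}), which, together with $\nabla\log\tilde{\cal L}(\hat{\bW})=0$ and $n=\alpha d$, yields $\frac{1}{\sqrt{d}}\|\nabla\log{\cal L}(\hat{\bW})\|_F\to0$, not $\sqrt{d}\,\|\nabla\log{\cal L}(\hat{\bW})\|_F\to0$. Your chain $\sqrt{d}\,\|\nabla\log{\cal L}(\hat{\bW})\|=\sqrt{d}\,n\,\|\nabla(\log Z-\eta_2)(\hat{\bW})\|$ therefore needs a rate $o(d^{-3/2})$ on the right, which neither the Taylor expansion in Appendix~\ref{app:proof_loss} nor any concentration argument at that level of generality can supply. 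The paper's own proof makes the identical unjustified leap (``a direct combination \dots suggests''), so this is almost certainly a typo in the theorem statement where $\sqrt{d}$ should read $1/\sqrt{d}$; you should state the weaker, provable version rather than asserting the stronger one.
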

Specifically, by choosing $\psi(\boldsymbol{w}_{i}^*,\hat{\boldsymbol{w}}_{i}):=(\boldsymbol{w}_{i}^*)^\top \hat{\boldsymbol{w}}_{i}$, we can measure how much the stationary points $\hat{\bW}$ overlap with the features $\bW^*$. Theorem \ref{theo:main} is proved in Appendix \ref{app:proof_main} by combining Theorem \ref{theo:fixed point} and Theorem \ref{theo:AMP-RBM}. In Appendix \ref{app:global}, we will prove that for the special case $r=k=1$, Theorem \ref{theo:main} also describes the global optimum. We conjecture that this is true in general, but our proof technics, that relies on the Gordon mini-max theorem \citep{gordon_1988,thrampoulidis2015regularized,vilucchio2025asymptotics}, is notably difficult to adapt to larger values of $k$. 
An interesting corollary is that there exists a stationary point where different units align with different signals, even when the number of hidden units is different from the number of signals.
\begin{corollary}
Under assumptions \ref{assum:RBM}-\ref{assum:convergence}, and further assuming that $P_u,P_w,P_h$ are separable and symmetric, all matrices in Theorem \ref{theo:main} can be diagonal.
\label{corr:main}
\end{corollary}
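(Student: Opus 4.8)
The plan is to show that the symmetry hypotheses on $P_u, P_w, P_h$ force the state evolution recursion \eqref{eq:SE} to preserve the class of diagonal matrices, and then invoke Theorem \ref{theo:main}. The starting observation is that the state evolution is a closed deterministic recursion in the matrices $\{\boldsymbol{M}_t,\bar{\boldsymbol{M}}_t,\boldsymbol{\Sigma}_t,\bar{\boldsymbol{\Sigma}}_t,\boldsymbol{B}_t,\bar{\boldsymbol{B}}_t,\boldsymbol{C}_t,\bar{\boldsymbol{C}}_t,\boldsymbol{Q}_t,\hat{\boldsymbol{Q}}_t\}$, initialized (by the choice of initialization in Algorithm \ref{alg:AMP-RBM}) at $\bar{\hat{\boldsymbol{Q}}}_0 = \tfrac12 I_k$, $\bar{\boldsymbol{C}}_0 = 0$, and $\boldsymbol{M}_1, \boldsymbol{\Sigma}_0$ which we may take diagonal (indeed $\boldsymbol{U}^0 = 0$ gives $\boldsymbol{M}_1$ proportional to $\boldsymbol{\Gamma} = \sqrt{\alpha}\boldsymbol{\Lambda}$, which is diagonal by assumption, and $\boldsymbol{\Sigma}_0$ can be taken diagonal). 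So it suffices to prove that the map defined by one step of \eqref{eq:SE} sends a tuple of diagonal matrices to a tuple of diagonal matrices.

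First I would record the relevant equivariance properties of the denoisers $f$ and $g$. Let $D = \mathrm{diag}(\epsilon_1,\dots,\epsilon_k)$ be any sign matrix, $\epsilon_a \in \{\pm 1\}$; more generally one can work with the group of signed permutations that fix the block structure $\{1,\dots,r\}$ and $\{r+1,\dots,k\}$, but sign flips already suffice because $\boldsymbol{\Lambda}$ is diagonal. From the closed forms \eqref{eq:denoiser_f} and the definition of $g$, together with $\hat{\boldsymbol{Q}}$, $\boldsymbol{C}_{t-1}$, $\boldsymbol{B}_t$ diagonal, one checks $f(D\boldsymbol{z}, \hat{\boldsymbol{Q}}, \boldsymbol{C}_{t-1}) = D f(\boldsymbol{z}, \hat{\boldsymbol{Q}}, \boldsymbol{C}_{t-1})$, and similarly for $g$ using that $\eta_1(\boldsymbol{h}) = \log \int \rd P_h(\boldsymbol{h}') e^{\boldsymbol{h}'^\top(\sqrt{\alpha}\boldsymbol{h})}$ (with $\bb = 0$ here and the $x_\theta$ term dropped) is invariant under $\boldsymbol{h} \mapsto D\boldsymbol{h}$ whenever $P_h$ is symmetric under sign flips of each coordinate. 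Similarly, $\eta_2(\boldsymbol{Q}_W) $ from \eqref{eq:eta2} satisfies $\nabla\eta_2(D\boldsymbol{Q}_W D) = D\,\nabla\eta_2(\boldsymbol{Q}_W)\,D$, so the $\hat{\boldsymbol{Q}}$-update preserves diagonality. Now feed this into the expectations in \eqref{eq:SE}: since $P_u$ and $P_w$ are symmetric, for diagonal $\boldsymbol{M}_t, \boldsymbol{\Sigma}_t$ the law of $\boldsymbol{M}_t\boldsymbol{W} + \boldsymbol{\Sigma}_t^{1/2}\boldsymbol{G}$ is invariant under $\boldsymbol{v}\mapsto D\boldsymbol{v}$ jointly with $\boldsymbol{W}\mapsto D\boldsymbol{W}$; hence for the off-diagonal entry $(a,b)$ with $a\neq b$, the integrand in, e.g., $\bar{\boldsymbol{M}}_t = \alpha^{-1}\mathbb{E}[f(\cdot)\boldsymbol{W}^\top]\boldsymbol{\Gamma}$ picks up a factor $\epsilon_a \epsilon_b$ under the change of variables $D$ with $\epsilon_a = -1$, $\epsilon_c = 1$ otherwise, so that entry equals its own negative and vanishes. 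The same argument kills the off-diagonal entries of $\bar{\boldsymbol{\Sigma}}_t, \bar{\boldsymbol{B}}_t, \boldsymbol{M}_{t+1}, \boldsymbol{\Sigma}_{t+1}, \bar{\boldsymbol{C}}_t$. By induction on $t$, every matrix in \eqref{eq:SE} stays diagonal, and passing to the limit $t\to\infty$ (which exists by Assumption \ref{assum:convergence}, or rather its consequence that the SE fixed point is attained) gives $\boldsymbol{M}_\infty, \boldsymbol{\Sigma}_\infty, \bar{\hat{\boldsymbol{Q}}}_\infty, \bar{\boldsymbol{C}}_\infty$ all diagonal. Theorem \ref{theo:main} then describes a stationary point $\hat{\boldsymbol{W}}$ whose empirical distribution of rows is that of $f(\boldsymbol{M}_\infty\boldsymbol{W} + \boldsymbol{\Sigma}_\infty^{1/2}\boldsymbol{G}, \bar{\hat{\boldsymbol{Q}}}_\infty, \bar{\boldsymbol{C}}_\infty)$, and taking $\psi(\boldsymbol{w}^*_i, \hat{\boldsymbol{w}}_i) = (\boldsymbol{w}^*_i)^\top \hat{\boldsymbol{w}}_i$ componentwise shows the overlap matrix $\lim_{d} \tfrac1d (\boldsymbol{W}^*)^\top \hat{\boldsymbol{W}}$ is diagonal, i.e. distinct hidden units align with distinct signals.

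The main obstacle is not the symmetry bookkeeping but rather making precise the claim "all matrices can be diagonal": the SE recursion may have several fixed points, and one must argue that the diagonal-preserving branch is reached — this is exactly what the induction from the diagonal initialization $\bar{\hat{\boldsymbol{Q}}}_0 = \tfrac12 I_k$, $\boldsymbol{M}_1 \propto \boldsymbol{\Gamma}$ provides, but one should be careful that Algorithm \ref{alg:AMP-RBM} with the matching initialization in Theorem \ref{theo:AMP-RBM} genuinely has this diagonal starting point, and that Assumption \ref{assum:convergence} guarantees convergence to the corresponding fixed point rather than to a different one. A secondary subtlety is the block structure when $k \neq r$: for the coordinates $a > r$ one has $W_a = U_a = 0$ deterministically, so $\boldsymbol{M}_\infty$ and $\boldsymbol{\Sigma}_\infty$ restricted to those indices decouple, and one should check the recursion still closes on diagonal matrices there (it does, since $f, g$ are still sign-equivariant and the relevant $W_a, U_a$ are zero); I would state this case explicitly to cover the over- and under-parameterized regimes emphasized earlier in the paper.
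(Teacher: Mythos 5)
Your proof is correct and follows essentially the same route as the paper's: an induction on the state-evolution recursion showing that diagonality of $\{\boldsymbol{M}_t,\boldsymbol{\Sigma}_t,\bar{\hat{\boldsymbol{Q}}}_t,\bar{\boldsymbol{C}}_{t-1}\}$ propagates to the next step, using separability and symmetry of the priors to kill off-diagonal expectations. The only stylistic difference is that you phrase the vanishing of off-diagonal entries via sign-flip equivariance of $f$, $g$, and $\nabla\eta_2$, whereas the paper factorizes the expectations directly using coordinate independence and $\mathbb{E}[W_j]=0$; your remarks on the $k\neq r$ block decoupling and on the need for the diagonal initialization of AMP-RBM to select the diagonal SE branch are correct clarifications that the paper leaves implicit.
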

More specifically, at the stationary points specified by Corollary \ref{corr:main}, the weights of different hidden units are orthogonal. For the overparameterized scheme, $k$ units of the RBM are aligned with $k$ independent signals, and the other $r-k$ weight vectors are pure noise. For the underparameterized scheme, $k$ units of the RBM are still aligned with $k$ independent signals, randomly chosen out of $r$ independent signals. This outperforms SVD, which cannot distinguish different signals. In Appendix \ref{app:exp_dynamics} we also analyze the training dynamics of RBMs concerning this point.

However, algorithms might not converge to the stationary points specified by Corollary \ref{corr:main}. It is the interaction between the signal prior and $\eta_1$ that determines how much the hidden units align with the signals. 
Inspired by the global optimality of rank-1 case proved in Appendix \ref{app:global}, we conjecture that the stationary points specified by Corollary \ref{corr:main} are actually global minima, which is reachable by informed initialization of AMP-RBM, but uninformed algorithms might not necessarily find it.

\vspace{-1mm}
\paragraph{Asymptotic analysis of the gradient descent dynamics ---}
\label{sec:GD}
Finally, we provide the asymptotics of GD for \eqref{eq:opt}, sometimes referred to as DMFT (dynamic mean-field theory) \citep{maimbourg2016solution,roy2019numerical,mignacco2020dynamical,mignacco2022effective,gerbelot2024rigorous}. While contrastive divergence is  preferred than GD in practice, we believe that they have qualitatively similar performances ---contrastive divergence is essentially an approximation of GD --- which is also validated in Appendix \ref{app:exp} on real datasets:
\begin{equation}
\tilde{\boldsymbol{W}}^{t+1}=\tilde{\boldsymbol{W}}^t+\kappa\left(\sum_{\mu=1}^n\frac{1}{\sqrt{n}}\nabla \eta_1(\boldsymbol{x}_\mu^\top \tilde{\boldsymbol{W}}^t/\sqrt{n})\otimes\boldsymbol{x}_\mu-\alpha\tilde{\boldsymbol{W}}\nabla \eta_2(\boldsymbol{Q}(\tilde{\boldsymbol{W}}^t))\right),
\label{eq:GD}
\end{equation}
with $\kappa$ the learning rate. Its asymptotics can be described by the following low dimensional iteration.

A sequence of $k$-dimensional random variables $\{\mathring{\boldsymbol{U}}_t,\mathring{\boldsymbol{Y}}_t,\mathring{\boldsymbol{W}}_t,\mathring{\boldsymbol{Z}}_t\}_{t\geq1}$ are defined as 
\begin{wrapfigure}{r!}{0.35\linewidth}
\vspace{-1mm}
\centering
\includegraphics[width=\linewidth]{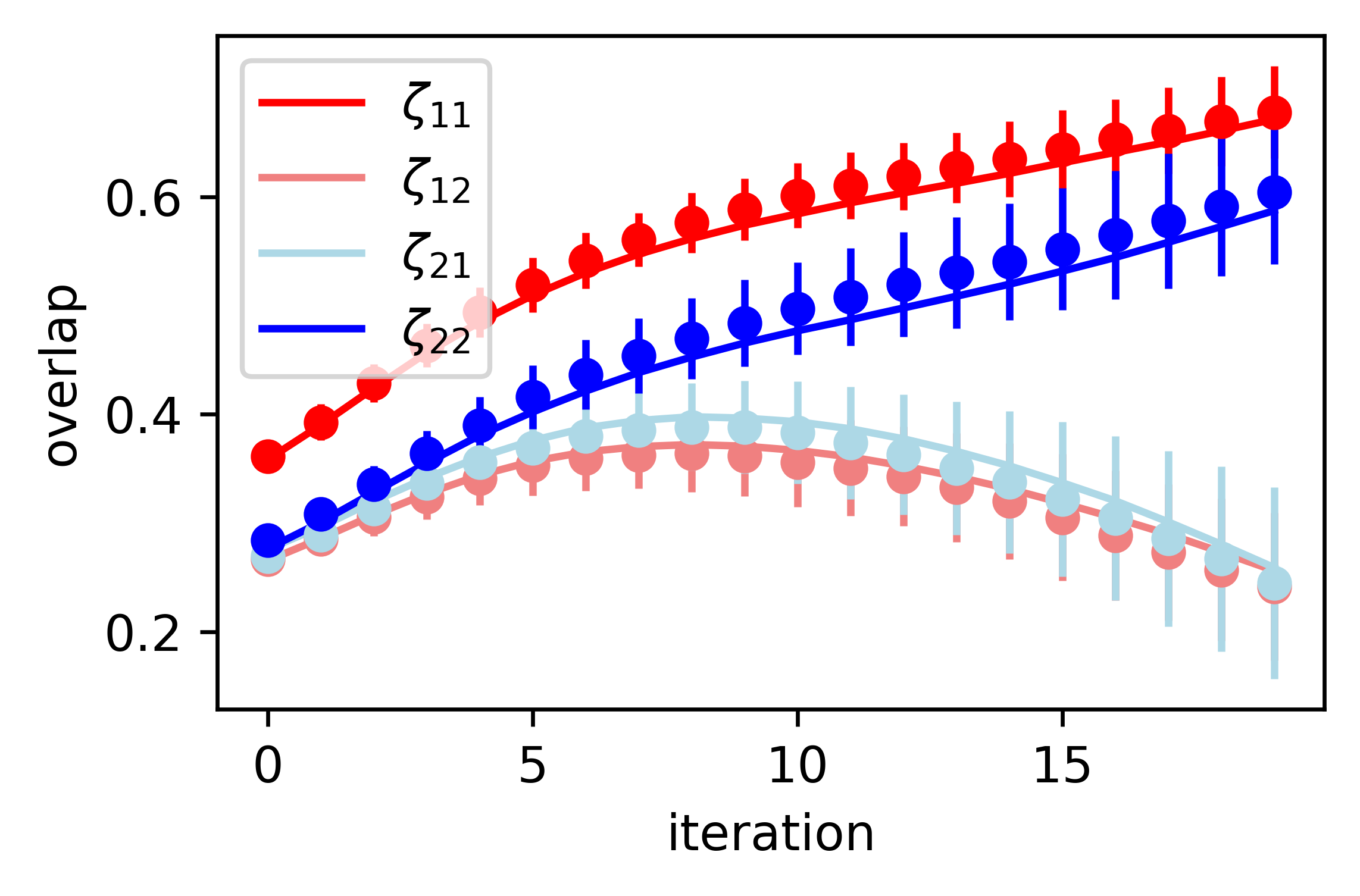}
\vspace{-1mm}
\caption{\small Iteration curves of GD, where the lines denote its asymptotics. The same setting as Figure \ref{fig:rank1}.}
\vspace{-3mm}
\label{fig:GD}
\end{wrapfigure}
\begin{equation}
\begin{aligned}
&\mathring{\boldsymbol{U}}_t=\mathring{f}_t(\mathring{\boldsymbol{Y}}_t,\mathring{\boldsymbol{Y}}_{t-1},\cdots,\mathring{\boldsymbol{Y}}_1),\\
&\mathring{\boldsymbol{W}}_{t+1}=\mathring{g}_{t+1}(\mathring{\boldsymbol{Z}}_t,\mathring{\boldsymbol{Z}}_{t-1},\cdots,\mathring{\boldsymbol{Z}}_1),
\end{aligned}
\label{eq:UW}
\end{equation}
where
\begin{equation}
\begin{aligned}
(\mathring{\boldsymbol{Y}}_1,\cdots,\mathring{\boldsymbol{Y}}_t)=(\mathring{\boldsymbol{M}}_1,\cdots,\mathring{\boldsymbol{M}}_{t})\boldsymbol{U}+\mathcal{N}(0,\mathring{\boldsymbol{\Sigma}}_t),\\
(\mathring{\boldsymbol{Z}}_1,\cdots,\mathring{\boldsymbol{Z}}_t)=(\mathring{\boldsymbol{N}}_1,\cdots,\mathring{\boldsymbol{N}}_{t})\boldsymbol{W}+\mathcal{N}(0,\mathring{\boldsymbol{\Omega}}_t),
\end{aligned}
\end{equation}
and
\begin{equation}
\begin{aligned}
&\mathring{\boldsymbol{M}}_t=\alpha^{-1}\mathbb{E}[\mathring{\boldsymbol{W}}_t^\top \boldsymbol{W}]\boldsymbol{\Gamma},\ [\mathring{\boldsymbol{\Sigma}}_t]_{ij}=\alpha^{-1}\mathbb{E}[\mathring{\boldsymbol{W}}_i^\top \boldsymbol{W}_j],\\
&\mathring{\boldsymbol{N}}_{t+1}=\mathbb{E}[\mathring{\boldsymbol{U}}^\top _{t+1}\boldsymbol{U}]\boldsymbol{\Gamma},\ [\mathring{\boldsymbol{\Omega}}_{t+1}]_{ij}= \mathbb{E}[\mathring{\boldsymbol{U}}_i^\top \mathring{\boldsymbol{U}}_j].
\end{aligned}
\end{equation}
$\mathring{\boldsymbol{M}}_1$ and $[\mathring{\boldsymbol{\Sigma}}_t]_{11}$ are given by initialization $\tilde{\boldsymbol{W}}^0\sim \mathring{\boldsymbol{W}}_0$. We note that $\mathring{\boldsymbol{M}}_t,\mathring{\boldsymbol{N}}_t\in\mathbb{R}^{k\times k}$ and $\mathring{\boldsymbol{\Sigma}}_t,\mathring{\boldsymbol{\Omega}}_t\in\mathbb{R}^{k\times k\times t\times t}$. The functions $\mathring{f}_t,\mathring{g}_{t+1}:\mathbb{R}^{k\times t}\to\mathbb{R}^k$ are rather involved and are defined in an iterative way, so we put their definitions in Appendix \ref{app:proof_GD}.

The following theorem gives the asymptotics of GD, which is proven in Appendix \ref{app:proof_GD}.

\begin{theorem}
Assume that $\nabla\eta_1,\nabla\eta_2$ are Lipschitz continuous. Then, for any $PL(2)$ function $\psi$, the following holds almost surely for $t\geq0$:
\begin{equation}
\lim_{d\to\infty}\frac{1}{d}\sum_{i=1}^d\psi(\boldsymbol{w}_{i}^*,\tilde{\boldsymbol{w}}_{i}^t)=\mathbb{E}[\psi(\boldsymbol{W},\mathring{\boldsymbol{W}}_t)].
\end{equation}
Moreover, if we further assume that GD converges to $\hat{\boldsymbol{W}}$ uniformly, we have
\begin{equation}
\lim_{d\to\infty}\frac{1}{d}\sum_{i=1}^d\psi(\boldsymbol{w}_{i}^*,\hat{\bw}_{i})=\lim_{t\to\infty}\mathbb{E}[\psi(\boldsymbol{W},\mathring{\boldsymbol{W}}_t)]
\end{equation}
almost surely, where the limits on both sides are well defined.
\label{theo:GD}
\end{theorem}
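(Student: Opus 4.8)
\textbf{Proof proposal for Theorem \ref{theo:GD}.}
The plan is to recognize the gradient-descent recursion \eqref{eq:GD} as a \emph{general first-order method} (GFOM) in the data matrix $\boldsymbol{X}$ and to reduce it to the general AMP iteration with memory analysed in Appendix \ref{app:generalAMP}. At iteration $t$ the update \eqref{eq:GD} uses only the bilinear object $\tfrac{1}{\sqrt n}\boldsymbol{X}^\top\nabla\eta_1(\boldsymbol{X}\tilde{\boldsymbol{W}}^t/\sqrt n)$ (with $\nabla\eta_1$ applied row-wise) together with the low-dimensional quantity $\boldsymbol{Q}(\tilde{\boldsymbol{W}}^t)=\tfrac1d(\tilde{\boldsymbol{W}}^t)^\top\tilde{\boldsymbol{W}}^t$ entering the penalty gradient $\alpha\tilde{\boldsymbol{W}}^t\nabla\eta_2(\boldsymbol{Q}(\tilde{\boldsymbol{W}}^t))$. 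I would therefore introduce the two families of auxiliary iterates $\mathring{\boldsymbol{Y}}_t\leftrightarrow\boldsymbol{X}\tilde{\boldsymbol{W}}^t/\sqrt n\in\mathbb{R}^{n\times k}$ and $\mathring{\boldsymbol{Z}}_t\leftrightarrow\boldsymbol{X}^\top\mathring{\boldsymbol{U}}_t/\sqrt n\in\mathbb{R}^{d\times k}$ and rewrite GD as an iteration that multiplies alternately by $\boldsymbol{X}$ and $\boldsymbol{X}^\top$, exactly as AMP-RBM does but with \emph{no} Onsager correction subtracted. The standard GFOM$\Rightarrow$AMP reduction — a triangular change of variables, cf.\ the argument underlying Theorem \ref{theo:AMP-RBM} and \citep{celentano2020estimation,gerbelot2024rigorous} — then re-expresses GD as a memory-AMP iteration whose nonlinearities $\mathring{f}_t,\mathring{g}_{t+1}$ depend on the \emph{entire history} $(\mathring{\boldsymbol{Y}}_t,\dots,\mathring{\boldsymbol{Y}}_1)$ resp.\ $(\mathring{\boldsymbol{Z}}_t,\dots,\mathring{\boldsymbol{Z}}_1)$; the accumulated memory kernel is precisely what absorbs the missing Onsager terms, which is why $\mathring{f}_t,\mathring{g}_{t+1}$ must be defined recursively.

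Second, the spike is handled exactly as for the SE \eqref{eq:SE}: writing $\boldsymbol{X}=\tfrac{1}{\sqrt d}\boldsymbol{U}^*\boldsymbol{\Lambda}(\boldsymbol{W}^*)^\top+\boldsymbol{Z}$ with $\boldsymbol{Z}$ i.i.d.\ Gaussian, I condition on the $\sigma$-algebra generated by the linear constraints that previous iterates impose on $\boldsymbol{Z}$ (Bolthausen-type conditioning). The rank-$r$ part contributes the deterministic signal components $(\mathring{\boldsymbol{M}}_1,\dots,\mathring{\boldsymbol{M}}_t)\boldsymbol{U}$ and $(\mathring{\boldsymbol{N}}_1,\dots,\mathring{\boldsymbol{N}}_t)\boldsymbol{W}$ built from the spike overlaps and $\boldsymbol{\Gamma}=\sqrt\alpha\boldsymbol{\Lambda}$, while $\boldsymbol{Z}$ produces the Gaussian fluctuations with the stated block covariance tensors $\mathring{\boldsymbol{\Sigma}}_t,\mathring{\boldsymbol{\Omega}}_t$ (base case fixed by the initialization $\tilde{\boldsymbol{W}}^0\sim\mathring{\boldsymbol{W}}_0$). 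Invoking the general memory-AMP state-evolution theorem of Appendix \ref{app:generalAMP} then yields, for each fixed $t$, the $PL(2)$ tracking $\tfrac1d\sum_i\psi(\boldsymbol{w}_i^*,\tilde{\boldsymbol{w}}_i^t)\to\mathbb{E}[\psi(\boldsymbol{W},\mathring{\boldsymbol{W}}_t)]$, once its hypotheses are checked: Lipschitz continuity of $\nabla\eta_1,\nabla\eta_2$ propagates by composition and induction to Lipschitzness of all $\mathring{f}_t,\mathring{g}_t$, and the order-parameter matrices remain bounded along the recursion (so the Gaussian widths are finite at every step).

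The genuinely new point — and the step I expect to be the main obstacle — is the \emph{non-separable} penalty: $\nabla\eta_2(\boldsymbol{Q}(\tilde{\boldsymbol{W}}^t))$ couples all coordinates of $\tilde{\boldsymbol{W}}^t$ through the overlap $\boldsymbol{Q}(\tilde{\boldsymbol{W}}^t)$. I would treat $\boldsymbol{Q}(\tilde{\boldsymbol{W}}^t)$ as an extra low-dimensional ``parameter'' updated alongside the iteration (mirroring the role of $\hat{\boldsymbol{Q}}^t$ in AMP-RBM), prove by induction that it concentrates on the deterministic matrix $\mathbb{E}[\mathring{\boldsymbol{W}}_t\mathring{\boldsymbol{W}}_t^\top]$ predicted by the state evolution, and feed this back into the nonlinearity; the continuity needed to let this feedback not break the induction is exactly the ``uniformly Lipschitz at $\{\bar{\boldsymbol{B}}_t,\bar{\boldsymbol{C}}_t,\bar{\hat{\boldsymbol{Q}}}_t\}$'' condition already used for Theorem \ref{theo:AMP-RBM}. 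Consequently the induction has to be closed \emph{jointly} on the empirical law of $(\boldsymbol{w}_i^*,\tilde{\boldsymbol{w}}_i^t)$ and on the order-parameter matrices, rather than sequentially; this is where most of the technical work lies.

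Finally, for the $t\to\infty$ statement, assume as in the hypothesis that GD converges uniformly to $\hat{\boldsymbol{W}}$. Applying the $PL(2)$ tracking (run for the \emph{pair} of iterates $(\tilde{\boldsymbol{w}}_i^t,\tilde{\boldsymbol{w}}_i^s)$) to the test function $(\boldsymbol{w},\boldsymbol{w}')\mapsto\|\boldsymbol{w}-\boldsymbol{w}'\|^2$ gives $\mathbb{E}\|\mathring{\boldsymbol{W}}_t-\mathring{\boldsymbol{W}}_s\|^2=\lim_d\tfrac1d\|\tilde{\boldsymbol{W}}^t-\tilde{\boldsymbol{W}}^s\|_F^2\to 0$ as $s,t\to\infty$, so $\mathring{\boldsymbol{W}}_t$ is Cauchy in $L^2$ and $\lim_t\mathbb{E}[\psi(\boldsymbol{W},\mathring{\boldsymbol{W}}_t)]$ exists. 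Then the elementary bound
$|\tfrac1d\sum_i\psi(\boldsymbol{w}_i^*,\tilde{\boldsymbol{w}}_i^t)-\tfrac1d\sum_i\psi(\boldsymbol{w}_i^*,\hat{\boldsymbol{w}}_i)|\le C\big(\tfrac1d\|\tilde{\boldsymbol{W}}^t-\hat{\boldsymbol{W}}\|_F^2\big)^{1/2}\big(1+\tfrac1d\|\boldsymbol{W}^*\|_F^2+\tfrac1d\|\tilde{\boldsymbol{W}}^t\|_F^2+\tfrac1d\|\hat{\boldsymbol{W}}\|_F^2\big)^{1/2}$
(Cauchy–Schwarz together with the boundedness of these squared norms from the state evolution), combined with uniform convergence of GD, justifies interchanging $\lim_d$ and $\lim_t$, which yields $\lim_d\tfrac1d\sum_i\psi(\boldsymbol{w}_i^*,\hat{\boldsymbol{w}}_i)=\lim_t\mathbb{E}[\psi(\boldsymbol{W},\mathring{\boldsymbol{W}}_t)]$ almost surely and completes the proof.
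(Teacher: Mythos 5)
Your proposal matches the paper's proof in essence: the paper also rewrites the GD recursion as a simple iteration in $\boldsymbol{X}$, $\boldsymbol{X}^\top$ with no Onsager subtraction, maps it to a memory-AMP iteration whose nonlinearities $\mathring{f}_t,\mathring{g}_{t+1}$ are defined recursively to reinsert the Onsager terms, treats $\hat{\boldsymbol{Q}}^t=\nabla\eta_2(\boldsymbol{Q}(\tilde{\boldsymbol{W}}^t))$ as the finite-dimensional side parameter that concentrates, and then invokes the general state-evolution result of Appendix \ref{app:generalAMP}. The $t\to\infty$ part is likewise handled, as you do, by the pseudo-Lipschitz/Cauchy--Schwarz bound combined with uniform convergence, exactly as in the proof of Theorem \ref{theo:main}.
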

Theorem \ref{theo:GD} is verified by Figure \ref{fig:GD}. Moreover, while the analysis of the weak recovery threshold is for the linearization of AMP-RBM, we expect that the results also hold for GD, which is demonstrated empirically in Figure \ref{fig:rank1}. Figure \ref{fig:rank1} also suggests that GD converges to a similar point as AMP-RBM.

\section*{Acknowledgment} 
We would like to thank CECAM at EPFL for organising a workshop where Gianluca Manzan presented his work \cite{manzan2024effect} that inspired our work. We would also like to thank Remi Monasson for insightful discussions. 
We acknowledge funding from the Swiss National Science Foundation grants SNFS SMArtNet (grant number 212049), OperaGOST (grant number 200021 200390) and DSGIANGO (grant number 225837).

\bibliographystyle{alpha}
\bibliography{RBM}

\newpage
\appendix
\section{Additional literature review on the teacher-student setting}
\label{app:teacher-student}
While many physics papers \citep{huang2016unsupervised,huang2017statistical,theriault2024modelling,manzan2024effect} analyze the teacher-student RBM model where the teacher RBM and student RBM have different $\beta$, their settings differ from the maximal likelihood training scheme used in practice. In fact they sample the weights $\bW$ from the posterior of the student RBM
\begin{equation}
\mathbb{P}(\bW|\bX)=\frac{1}{\mathcal{Z}(\bX)}\Pi_{\mu=1}^n P_v(\bx_\mu)\mathbb{E}_\bh e^{\frac{\beta}{\sqrt{d}}\bx_\mu^\top \boldsymbol{W}\boldsymbol{h}},
\end{equation}
where $\beta$ represents the inverse temperature of the student RBM and $\mathcal{Z}(\bX)$ is the normalization factor. When $\beta=\beta^*$, it is the Bayesian estimation, equivalent to the Bayesian optimal performance presented in Section \ref{sec:asymptotics}. For example, \cite{theriault2024modelling} points out that Bayes optimal students can learn teachers in a one-to-one pattern for high SNR ($\beta=\beta^*$ large) but not for small SNR, which they call the permutation symmetry breaking phenomenon. This is consistent with our numerics in Appendix \ref{app:exp_spike}. The paramagnetic-to-ferromagnetic transition in \cite{manzan2024effect} also reproduces the BBP transition.

However, $\beta\to\infty$ is not equivalent to maximizing the empirical likelihood $p(\boldsymbol{x}_\mu|\boldsymbol{W}^*)$ in this paper and in practice. Actually $\beta\to\infty$ maximizes the term in the exponent $\sum_{\mu=1}^n\bx_\mu^\top \boldsymbol{W}\boldsymbol{h}$ instead. As a consequence, \cite{theriault2024modelling,manzan2024effect} predict that at zero temperature ($\beta\to\infty$), the system is in the spin glass phase and it is impossible for inference, while we show that the weak recovery threshold of the student RBM trained with likelihood maximization always matches the BBP transition.

\section{Simplification of RBM's log-likelihood}
\subsection{Proof of Theorem \ref{theo:loss}}
\label{app:proof_loss}
\begin{proof}
For the following we will prove a stronger result
\begin{equation}
\lim_{d\to\infty}\frac{1}{\sqrt{d}}||\nabla_{\bW,\btheta,\sqrt{d}\bb}\log{\cal L}(\boldsymbol{W},\btheta,\bb)-\nabla_{\bW,\btheta,\sqrt{d}\bb}\log\tilde{\cal L}(\boldsymbol{W},\btheta,\bb)||_F=0,
\label{eq:gradient_loglikelihood} 
\end{equation}
which is useful to find the stationary points of $\log{\cal L}(\boldsymbol{W},\btheta,\bb)$ (e.g. for Theorem \ref{theo:main}). The $\frac{1}{\sqrt{d}}$ scaling is appropriate because $||\nabla_{\bW,\btheta,\sqrt{d}\bb}\log\tilde{\cal L}(\boldsymbol{W},\btheta,\bb)||_F=\Theta(\sqrt{d})$.

According to \eqref{eq:likelihood_RBM}, we have
\begin{equation}
\begin{aligned}
\log{\cal L}(\boldsymbol{W}) =&\log\Pi_{\mu=1}^{n}\int \rd P_h(\boldsymbol{h}_{\mu})\, 
 e^{\frac{1}{\sqrt{d}} \boldsymbol{x}_{\mu}^\top \boldsymbol{W}\boldsymbol{h}_{\mu}  +\frac{1}{\sqrt{d}}\bx_\mu^\top \btheta+\bb^\top \bh_\mu}\\&-n\log\int\Pi_{i=1}^d\rd P_v(v_i) \rd P_h(\boldsymbol{h})e^{\frac{1}{\sqrt{d}}\boldsymbol{v}^\top \boldsymbol{W}\boldsymbol{h}+\frac{1}{\sqrt{d}}\bv^\top \btheta+\bb^\top \bh}.
\end{aligned}
\end{equation}
The first term is exactly $\eta_1$. We can expand the derivative of the second term using
\begin{equation}
\begin{aligned}
\frac{\partial}{\partial w_{ai}}\int \rd P_v(v_i)e^{\frac{1}{\sqrt{d}}v_i\boldsymbol{h}^\top \boldsymbol{w}_i+\frac{1}{\sqrt{d}}v_i\theta_i}&=\int \rd P_v(v_i)\frac{1}{\sqrt{d}}v_ih_ae^{\frac{1}{\sqrt{d}}v_i\boldsymbol{h}^\top \boldsymbol{w}_i}\\
&=\int \rd P_v(v_i)\left(\frac{1}{\sqrt{d}}v_ih_a+\frac{1}{d}v_i^2h_a(\boldsymbol{h}^\top \boldsymbol{w}_i+\theta_i)\right)+O\left(\frac{1}{d^{3/2}}(\boldsymbol{h}^\top \boldsymbol{w}_i+\theta_i)^2\right)\\
&=\frac{1}{d}h_a(\boldsymbol{h}^\top \boldsymbol{w}_i+\theta_i)+O\left(\frac{1}{d^{3/2}}(\boldsymbol{h}^\top \boldsymbol{w}_i+\theta_i)^2\right)\\
&=\frac{\partial}{\partial w_{ai}}e^{\frac{1}{2d}(\boldsymbol{h}^\top \boldsymbol{w}_i+\theta_i)^2}+O\left(\frac{1}{d^{3/2}}(\theta_i^2+||\bw_i||^2)\right),
\end{aligned}    
\end{equation}
where we use $\int \rd P_v(v_i)v_i=0$, $\int \rd P_v(v_i)v_i^2=1$, and that $v_i,\bh$ are bounded according to Assumption \ref{assum:RBM}. We can exchange the derivative and the integral because the integrand is continuously differentiable. Thus we have
\begin{equation}
\begin{aligned}
&\frac{\partial}{\partial w_{ai}}\int \rd P_h(\bh)\rd P_v(\bv)e^{\frac{1}{\sqrt{d}}\bv^\top \bW\bh+\frac{1}{\sqrt{d}}\bv^\top \btheta+\bh^\top \bb}\\&\qquad=\frac{\partial}{\partial w_{ai}}\int \rd P_h(\boldsymbol{h})\rd P_v(\bv)e^{\frac{1}{2d}\sum_{i=1}^d(\bh^\top \bw_i+\theta_i)^2+\bh^\top \bb}+o\left(\frac{1}{d}\right)
\end{aligned}
\end{equation}
Moreover, we have
\begin{equation}
\begin{aligned}
\mathcal{Z}(\bW)&=\int \rd P_h(\bh)\rd P_v(\bv)e^{\frac{1}{\sqrt{d}}\bv^\top \bW\bh+\frac{1}{\sqrt{d}}\bv^\top \btheta+\bh^\top \bb}\\
&=\int \rd P_h(\bh)e^{\bh^\top \bb}\int\Pi_{i=1}^d\rd P_v(v_i)e^{\frac{1}{\sqrt{d}}v_i(\bh^\top \bw_i+\theta_i)}\\
&=\int \rd P_h(\bh)e^{\bh^\top \bb}\int\Pi_{i=1}^d\rd P_v(v_i)\left(1+\frac{1}{\sqrt{d}}v_i(\bh^\top \bw_i+\theta_i)+\frac{1}{2d}v_i^2(\bh^\top \bw_i+\theta_i)^2\right)\\&\qquad\qquad+O\left(\frac{1}{d^{3/2}}(||\btheta||^2+||\bW||^2)\right)\\
&=\int \rd P_h(\boldsymbol{h})e^{\frac{1}{2d}\sum_{i=1}^d(\bh^\top \bw_i+\theta_i)^2+\bh^\top \bb}+o(1),
\end{aligned}
\label{eq:Z_simplify}
\end{equation}
which gives
\begin{equation}
\begin{aligned}
\lim_{d\to\infty}\sqrt{d}&\left|\left|\nabla_
\bW\log\int \rd P_h(\boldsymbol{h})\rd P_v(\bv)e^{\frac{1}{\sqrt{d}}\bv^\top \bW\bh+\frac{1}{\sqrt{d}}\bv^\top \btheta+\bh^\top \bb}-\nabla_
\bW\eta_2\left(\frac{1}{d}\bW^\top \bW,\frac{1}{d}\bW^\top \btheta,\frac{1}{d}\btheta^\top \btheta\right)\right|\right|_F=0.
\label{eq:nabla1}
\end{aligned}
\end{equation}
\eqref{eq:Z_simplify} also directly proves \eqref{eq:loglikelihood}. Similarly, we have
\begin{equation}
\begin{aligned}
\frac{\partial}{\partial \theta_i}\int \rd P_v(v_i)e^{\frac{1}{\sqrt{d}}v_i\boldsymbol{h}^\top \boldsymbol{w}_i+\frac{1}{\sqrt{d}}v_i\theta_i}&=\int \rd P_v(v_i)\left(\frac{1}{\sqrt{d}}v_i+\frac{1}{d}v_i^2(\bh^\top \bw_i+\theta_i)\right)+o\left(\frac{1}{d}\right)\\
&=\frac{\partial}{\partial \theta_i}e^{\frac{1}{2d}(\bh^\top \bw_i+\theta_i)^2}+o\left(\frac{1}{d}\right),
\end{aligned}
\end{equation}
and thus
\begin{equation}
\begin{aligned}
\lim_{d\to\infty}\sqrt{d}&\left|\left|\nabla_\btheta\log\int \rd P_h(\boldsymbol{h})\rd P_v(\bv)e^{\frac{1}{\sqrt{d}}\bv^\top \bW\bh+\frac{1}{\sqrt{d}}\bv^\top \btheta+\bh^\top \bb}-\nabla_\btheta\eta_2\left(\frac{1}{d}\bW^\top \bW,\frac{1}{d}\bW^\top \btheta,\frac{1}{d}\btheta^\top \btheta\right)\right|\right|=0.
\end{aligned}
\label{eq:nabla2}
\end{equation}
We also have
\begin{equation}
\begin{aligned}
\frac{\partial}{\partial b_{a}}\int \rd P_h(\bh)\rd P_v(\bv)e^{\frac{1}{\sqrt{d}}\bv^\top \bW\bh+\frac{1}{\sqrt{d}}\bv^\top \btheta+\bh^\top \bb}=\frac{\partial}{\partial b_{a}}\int \rd P_h(\boldsymbol{h})\rd P_v(\bv)e^{\frac{1}{2d}\sum_{i=1}^d(\bh^\top \bw_i+\theta_i)^2+\bh^\top \bb}+o\left(1\right),
\end{aligned}
\end{equation}
and thus
\begin{equation}
\begin{aligned}
\lim_{d\to\infty}&\left|\left|\nabla_\bb\log\int \rd P_h(\boldsymbol{h})\rd P_v(\bv)e^{\frac{1}{\sqrt{d}}\bv^\top \bW\bh+\frac{1}{\sqrt{d}}\bv^\top \btheta+\bh^\top \bb}-\nabla_\bb\eta_2\left(\frac{1}{d}\bW^\top \bW,\frac{1}{d}\bW^\top \btheta,\frac{1}{d}\btheta^\top \btheta\right)\right|\right|=0.
\end{aligned}
\label{eq:nabla3}
\end{equation}
We prove \eqref{eq:gradient_loglikelihood} by combining \eqref{eq:nabla1}, \eqref{eq:nabla2} and \eqref{eq:nabla3}.
\end{proof}
\subsection{Visible units with non-zero mean}
\label{app:non_zero_mean}
When $P_v$ have none zero mean $\bar{v}$, \eqref{eq:Z_simplify} gives
\begin{equation}
\mathcal{Z}(\bW)=\int \rd P_h(\boldsymbol{h})e^{\frac{1}{\sqrt{d}}\sum_{i=1}^d\bar{v}(\bh^\top \bw_i+\theta_i)+\frac{1}{2d}\sum_{i=1}^d(\bh^\top \bw_i+\theta_i)^2+\bh^\top \bb}+o(1).
\end{equation}
The gradients can be dealt with in a similar way, which leads to the following corollary.
\begin{corollary}
Under the conditions in Theorem \ref{theo:loss}, but when $P_v$ has mean $\bar{v}$, we have
\begin{equation}
\lim_{d\to\infty}\frac{1}{\sqrt{d}}||\nabla_{\bW,\btheta,\sqrt{d}\bb}\log{\cal L}(\boldsymbol{W},\btheta,\bb)-\nabla_{\bW,\btheta,\sqrt{d}\bb}\log\tilde{\cal L}(\boldsymbol{W},\btheta,\bb)||_F=0,
\end{equation}
where
\begin{equation}
\begin{aligned}
&\log\tilde{\cal L}(\boldsymbol{W},\btheta,\bb):=\sum_{\mu=1}^n\eta_1\left(\frac{1}{\sqrt{n}}\bx_\mu^\top \bW,\frac{1}{\sqrt{n}}\bx_\mu^\top \btheta,\bb\right)\\&-n\log\int \rd P_h(\boldsymbol{h})\exp\left\{\bh^\top \bb+\frac{1}{\sqrt{d}}\bar{v}\sum_{i=1}^d(\bh^\top \bw_i+\theta_i)+\frac{1}{2d}(\bh^\top \bW^\top \bW\bh+2\bh^\top \bW^\top \btheta+\btheta^\top \btheta)\right\}.
\end{aligned}
\label{eq:logL_simplify}
\end{equation}
\end{corollary}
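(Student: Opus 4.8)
The plan is to mirror the proof of Theorem~\ref{theo:loss}, isolating the only place where the mean $\bar v$ of $P_v$ enters — the partition function $\mathcal Z(\bW,\btheta,\bb)$ — and carrying along the extra first‑moment contribution. Writing $\log{\cal L}(\bW,\btheta,\bb)=\sum_{\mu=1}^n\eta_1(\bx_\mu^\top\bW/\sqrt n,\bx_\mu^\top\btheta/\sqrt n,\bb)-n\log\mathcal Z(\bW,\btheta,\bb)$, the $\eta_1$ term coincides exactly with the one in $\log\tilde{\cal L}$ and its gradient involves neither $P_v$ nor $\bar v$, so the whole statement reduces to showing that $\log\mathcal Z$ and $\nabla_{\bW,\btheta,\sqrt d\bb}\log\mathcal Z$ are, respectively, within $o(1)$ and within $o(d^{-1/2})$ in Frobenius norm of the corresponding quantities for the $\bh$‑integral appearing as the second term of \eqref{eq:logL_simplify}; multiplying the gradient gap by $n=\Theta(d)$ and dividing by $\sqrt d$ then produces the claimed limit.

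First I would factorize $\mathcal Z=\int\rd P_h(\bh)\,e^{\bh^\top\bb}\prod_{i=1}^d\phi_i(\bh)$ using separability of $P_v$, with $\phi_i(\bh):=\int\rd P_v(v_i)\,e^{\frac1{\sqrt d}v_i(\bh^\top\bw_i+\theta_i)}$, and Taylor‑expand each $\phi_i$ to second order. Using $\int v\,\rd P_v=\bar v$, $\int v^2\,\rd P_v=1+\bar v^2$ and $v,\bh$ bounded (Assumption~\ref{assum:RBM}), $\phi_i(\bh)=1+\tfrac{\bar v}{\sqrt d}c_i+\tfrac{1+\bar v^2}{2d}c_i^2+R_i(\bh)$ with $c_i:=\bh^\top\bw_i+\theta_i$. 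Exponentiating $\prod_i\phi_i=\exp\{\sum_i\log\phi_i\}$ and using $\log(1+x)=x-\tfrac12x^2+\cdots$, the cross term $-\tfrac12(\tfrac{\bar v}{\sqrt d}c_i)^2$ cancels the $\bar v^2$ part of the quadratic coefficient, leaving $\sum_i\log\phi_i(\bh)=\tfrac{\bar v}{\sqrt d}\sum_i c_i+\tfrac1{2d}\sum_i c_i^2+o(1)$ uniformly over the compact support of $P_h$; since $\sum_i c_i^2=\bh^\top\bW^\top\bW\bh+2\bh^\top\bW^\top\btheta+\btheta^\top\btheta$, integrating back against $e^{\bh^\top\bb}\rd P_h$ identifies the leading term with the effective partition function in \eqref{eq:logL_simplify} and gives the value comparison.

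For the gradients I would differentiate under the integral (permissible since the integrands are $C^1$ with derivatives dominated on the compact support), so that e.g. $\partial_{w_{ai}}\log\mathcal Z$ becomes the average, under the tilt $\propto e^{\bh^\top\bb}\prod_j\phi_j(\bh)\rd P_h$, of $\partial_{w_{ai}}\log\phi_i(\bh)=\tfrac{h_a}{\sqrt d}\,\big(\int v\,e^{vc_i/\sqrt d}\rd P_v\big)\big/\big(\int e^{vc_i/\sqrt d}\rd P_v\big)$; expanding this ratio to the relevant order yields $\tfrac{\bar v h_a}{\sqrt d}+\tfrac{h_a c_i}{d}+O(d^{-3/2})$, matching $\partial_{w_{ai}}$ of the effective exponent $\tfrac{\bar v}{\sqrt d}\sum_j c_j+\tfrac1{2d}\sum_j c_j^2$ up to a third‑order remainder. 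The analogous expansions for $\partial_{\theta_i}$ and for $\partial_{b_a}$ (which needs no $\sqrt d$ rescaling, since $\|\nabla_\bb\log\mathcal Z\|=\Theta(1)$) are identical. Summing the coordinatewise remainders in Frobenius norm, multiplying by $n$ and dividing by $\sqrt d$ then gives the stated limit, provided the remainders are genuinely of lower order.

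The main obstacle is precisely this last proviso: the two‑term expansion of $\phi_i(\bh)$ and of its derivatives is accurate only when each argument $\tfrac1{\sqrt d}v_i(\bh^\top\bw_i+\theta_i)$ is uniformly small, which is not guaranteed by $\tfrac1n\|\bW\|_F^2,\tfrac1n\|\btheta\|^2$ being merely bounded — a bounded number of rows of norm $\Theta(\sqrt d)$ would already spoil the comparison by an $O(1)$ multiplicative factor in $\mathcal Z$. I would therefore read the hypotheses as also controlling $\max_i\|\bw_i\|$ and $\max_i|\theta_i|$ at scale $o(\sqrt d)$ (as holds, in particular, for $\bW,\btheta$ with bounded entries, the regime of interest), under which the third‑order remainders sum to $o(1)$ for the value comparison and to $o(d^{-1/2})$ after rescaling for the gradient comparison; atypically large rows, being $O(1)$ in number, then contribute negligibly. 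Beyond this uniformity point the non‑zero‑mean case adds nothing new: it only requires tracking the surviving linear term $\tfrac{\bar v}{\sqrt d}\sum_i(\bh^\top\bw_i+\theta_i)$ and the $\tfrac{1+\bar v^2}{2d}\!\to\!\tfrac1{2d}$ cancellation noted above.
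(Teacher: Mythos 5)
Your proposal follows the same approach as the paper: Appendix~\ref{app:non_zero_mean} simply reruns the Taylor expansion of $\mathcal Z$ used to obtain~\eqref{eq:Z_simplify}, now with $\mathbb{E}[v]=\bar v$ and $\mathbb{E}[v^2]=1+\bar v^2$, and defers the gradient comparison to the zero-mean argument in the proof of Theorem~\ref{theo:loss}. You correctly supply the one piece that is genuinely new in the non-zero-mean case and that the paper leaves implicit: the linear term $\bar v c_i/\sqrt{d}$ no longer vanishes under the $v$-integral and is of order $d^{-1/2}$, so the second-order term of the $\log$ (equivalently $\exp$) expansion contributes $-\bar v^2 c_i^2/(2d)$ at the same $1/d$ scale as the quadratic piece, and this precisely cancels the $\bar v^2$ excess in $\mathbb{E}[v^2]=1+\bar v^2$; without noticing this cancellation one would obtain the wrong coefficient $(1+\bar v^2)/(2d)$ in place of the $1/(2d)$ that appears in~\eqref{eq:logL_simplify}.

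Your caveat about controlling $\max_i\|\bw_i\|$ and $\max_i|\theta_i|$ at scale $o(\sqrt d)$ is also correct, and it is worth noting that this is not a defect of your argument in particular: boundedness of $\frac{1}{n}\|\bW\|_F^2$ and $\frac{1}{n}\|\btheta\|^2$ (all that Theorem~\ref{theo:loss} imposes) permits a single row of norm $\Theta(\sqrt d)$, for which the per-factor Taylor remainder is $\Theta(1)$ and both the $d^{-1}$-normalized log-likelihood comparison and the error term $O(d^{-3/2}(\|\btheta\|^2+\|\bW\|^2))$ stated after~\eqref{eq:Z_simplify} cease to be $o(1)$. The same issue already affects the paper's proof of Theorem~\ref{theo:loss} and of the gradient claim~\eqref{eq:gradient_loglikelihood}; under your strengthened reading of the hypotheses — or for weights and biases with bounded entries, the regime of practical interest — your proof and the paper's both close cleanly, in the zero- and non-zero-mean cases alike.
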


In this case, we must have $\sum_{i=1}^d\bw_i,\sum_{i=1}^d\theta_i=O\left(\sqrt{d}\right)$, because otherwise the second term in \eqref{eq:logL_simplify} is much larger than the first term. We cannot directly apply Theorem \ref{theo:main}, because under Theorem \ref{theo:main} we have $\sum_{i=1}^d\bw_i,\sum_{i=1}^d\theta_i=\Theta\left(\sqrt{d}\right)$, and thus the $\frac{1}{\sqrt{d}}\bar{v}\sum_{i=1}^d(\bh^\top \bw_i+\theta_i)$ term in \eqref{eq:logL_simplify} cannot be overlooked. One option is to consider the constrained optimization problem
\begin{equation}
\max_{\sum_{i=1}^d\bw_i=\boldsymbol{0},\sum_{i=1}^d\theta_i=0,\bb}\left(\sum_{\mu=1}^n\eta_1\left(\frac{1}{\sqrt{n}}\bx_\mu^\top \bW,\frac{1}{\sqrt{n}}\bx_\mu^\top \btheta,\bb\right)-n\eta_2\left(\frac{1}{d}\bW^\top \bW,\frac{1}{d}\bW^\top \btheta,\frac{1}{d}\btheta^\top \btheta,\bb\right)\right),
\end{equation}
which we leave as the future work.

\subsection{Gradients of the simplified log-likelihood}
The gradient of the log-likelihood \eqref{eq:tilde_L} w.r.t. the weights read
\begin{equation}
\begin{aligned}
\nabla_\bW\log\tilde{\cal L}(\boldsymbol{W},\btheta,\bb)
&=\sum_{\mu=1}^n\frac{1}{\sqrt{d}}\frac{\int \rd P_h(\bh)\bh\bx_\mu^\top e^{\frac{1}{\sqrt{d}}\bx_\mu^\top \bW\bh+\bb^T\bh}}{\int \rd P_h(\bh)e^{\frac{1}{\sqrt{d}}\bx_\mu^\top \bW\bh+\bb^T\bh}}\\
&-\alpha\frac{\int \rd P_h(\boldsymbol{h})(\bh\bh^\top\bW^\top+\bh\btheta^\top) e^{\bh^\top \bb+(\boldsymbol{h}^\top \boldsymbol{Q}_W\boldsymbol{h}+2\bh^\top \bQ_{W\theta}+Q_\theta)/2}}{\int \rd P_h(\boldsymbol{h})e^{\bh^\top \bb+(\boldsymbol{h}^\top \boldsymbol{Q}_W\boldsymbol{h}+2\bh^\top \bQ_{W\theta}+Q_\theta)/2}},
\end{aligned}
\end{equation}
where we denote $\bQ_W:=\frac{1}{d}\bW^\top \bW,\bQ_{W\theta}:=\frac{1}{d}\bW^\top \btheta,\bQ_\theta:=\frac{1}{d}\btheta^\top \btheta$. This can be rewritten as
\begin{equation}
\nabla_\bW\log\tilde{\cal L}(\boldsymbol{W},\btheta,\bb):=
\sum_{\mu=1}^n\frac{1}{\sqrt{d}}\langle\bh\bx_\mu^\top \rangle_D- \alpha\langle\bh\bh^\top \rangle_H\bW^\top-\alpha\langle\bh \rangle_H\btheta^\top.
\end{equation}

Similarly we also have
\begin{equation}
\begin{aligned}
&\nabla_\btheta\log\tilde{\cal L}(\boldsymbol{W},\btheta,\bb)=
\sum_{\mu=1}^n\frac{1}{\sqrt{d}}\langle\bx_\mu \rangle_D- \alpha\bW\langle\bh \rangle_H-\alpha\btheta,\\
&\nabla_\bb\log\tilde{\cal L}(\boldsymbol{W},\btheta,\bb)=
\sum_{\mu=1}^n\frac{1}{\sqrt{d}}\langle\bh \rangle_D- \alpha\langle\bh \rangle_H.
\end{aligned}
\end{equation}
All the gradients above take the form of the contrast divergence.

\section{Proofs in Section \ref{sec:asymptotics}}
\label{app:proofs_asymptotics}
\subsection{Generalization of the data model}
\label{app:model_generalization}
All results in Section \ref{sec:asymptotics} apply to the following data model
\begin{equation}
\boldsymbol{X}=\mathcal{F}\left(\frac{1}{\sqrt{d}}\boldsymbol{U}^*\boldsymbol{\Lambda} (\boldsymbol{W}^*)^\top +\boldsymbol{Z}\right)-\mathbb{E}\mathcal{F}(\boldsymbol{Z})\in\mathbb{R}^{n\times d},\
\label{eq:nonlinear_spike}
\end{equation}
with Assumption \ref{assum:data}, where $\mathcal{F}$ is a general element-wise non-liearity and $\{Z_{ij}\}_{i,j=1}^{n,d}\overset{iid}{\sim}P_z$ is the noise.
\begin{assumption}
\begin{itemize}
\item[(i)] $P_z$ has a distribution $\mu_Z$ with stretched exponential tails\footnote{i.e., there exists $\alpha,c,C>0$ such that for every $M>0$, $\mathbb{P}(|Z|>M)\leq Ce^{-cM^\alpha}$}.
\item[(ii)] $\mathcal{F}\in C^2$ such that $\mathcal{F}'\in L^4(\mu_Z)$ and $||\mathcal{F}''||<\infty$. Moreover, $\vartheta_1(\mathcal{F})\neq0$, where
\begin{equation}
\vartheta_k(\mathcal{F})=\E_Z\mathcal{F}^{(k)}(Z)
\end{equation}
for $Z\sim P_z$ and $k=0,1$ are the information coefficients. We assume that $\vartheta_0(\mathcal{F}^2)-\vartheta_0(\mathcal{F})^2=1$ such that the data are well normalized.
\item[(iii)] $P_{u},P_{w}$ are bounded.
\end{itemize}
\label{assum:data}
\end{assumption}
We note that Assumption \ref{assum:data} is to ensure that the data come from an effective linear spiked covariance model. It is mainly technical and might be relaxed. See e.g. Assumptions $(H2),(\tilde{H}3)$ in \cite{guionnet2023spectral} and Hypothesis 2.1 in \cite{mergny2024fundamental}.

More specifically, \eqref{eq:nonlinear_spike} is asymptotically equivalent to \eqref{eq:spike_model} because of the following lemma.
\begin{lemma}
Under Assumption \ref{assum:data}, we have
\begin{equation}
\frac{1}{\sqrt{n}}\boldsymbol{X}=\tilde{\boldsymbol{Z}}+\frac{\sqrt{\alpha}\vartheta_1(\mathcal{F})}{n}\boldsymbol{U}^*\boldsymbol{\Lambda}(\boldsymbol{W}^*)^\top  + \boldsymbol{E},
\end{equation}
where $\tilde{\boldsymbol{Z}}$ is a matrix with iid, centered and normalized elements and $\boldsymbol{E}$ is the error matrix with $||\boldsymbol{E}||=O(n^{-1/2})$ almost surely.
\label{lemma:equivalence}
\end{lemma}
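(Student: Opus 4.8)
\textbf{Proof proposal for Lemma \ref{lemma:equivalence}.}

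The plan is to perform a second-order Taylor expansion of $\mathcal{F}$ around the noise matrix $\boldsymbol{Z}$, treating the spike $\frac{1}{\sqrt{d}}\boldsymbol{U}^*\boldsymbol{\Lambda}(\boldsymbol{W}^*)^\top$ as a small perturbation in operator norm. Concretely, write $\boldsymbol{P} = \frac{1}{\sqrt{d}}\boldsymbol{U}^*\boldsymbol{\Lambda}(\boldsymbol{W}^*)^\top$, so that entrywise $X_{\mu i} = \mathcal{F}(Z_{\mu i} + P_{\mu i}) - \E\mathcal{F}(Z)$. Since $P_u, P_w$ are bounded and $r = \Theta(1)$, each entry satisfies $|P_{\mu i}| = O(d^{-1/2})$, and moreover $\boldsymbol{P}$ has rank $r$ with $\|\boldsymbol{P}\|_{\mathrm{op}} = O(1)$ by the boundedness of the priors and of $\boldsymbol{\Lambda}$. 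Taylor expanding to second order, $\mathcal{F}(Z_{\mu i} + P_{\mu i}) = \mathcal{F}(Z_{\mu i}) + \mathcal{F}'(Z_{\mu i})P_{\mu i} + \frac{1}{2}\mathcal{F}''(\xi_{\mu i})P_{\mu i}^2$ for some intermediate point $\xi_{\mu i}$, using $\mathcal{F} \in C^2$ with $\|\mathcal{F}''\|_\infty < \infty$ from Assumption \ref{assum:data}(ii).

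First I would handle the zeroth-order term: set $\tilde{\boldsymbol{Z}} := \frac{1}{\sqrt{n}}(\mathcal{F}(\boldsymbol{Z}) - \E\mathcal{F}(Z))$ entrywise. By Assumption \ref{assum:data}(i)-(ii), $\mathcal{F}(Z)$ has a well-defined mean $\vartheta_0(\mathcal{F})$ and the normalization $\vartheta_0(\mathcal{F}^2) - \vartheta_0(\mathcal{F})^2 = 1$ guarantees $\tilde{\boldsymbol{Z}}$ has i.i.d., centered, unit-variance entries; the stretched-exponential tail of $P_z$ together with $\mathcal{F}' \in L^4$ controls the tails of $\mathcal{F}(Z)$ well enough for later operator-norm bounds. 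Next, the first-order term $\frac{1}{\sqrt{n}}\mathcal{F}'(\boldsymbol{Z}) \odot \boldsymbol{P}$ (Hadamard product) is where the information coefficient enters: I would split $\mathcal{F}'(Z_{\mu i}) = \vartheta_1(\mathcal{F}) + (\mathcal{F}'(Z_{\mu i}) - \vartheta_1(\mathcal{F}))$. The deterministic part gives exactly $\frac{\vartheta_1(\mathcal{F})}{\sqrt{n}}\boldsymbol{P} = \frac{\sqrt{\alpha}\,\vartheta_1(\mathcal{F})}{n}\boldsymbol{U}^*\boldsymbol{\Lambda}(\boldsymbol{W}^*)^\top$ (using $n = \alpha d$, so $\frac{1}{\sqrt{n}\sqrt{d}} = \frac{\sqrt{\alpha}}{n}$), which is the stated spike term. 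The fluctuation part, $\frac{1}{\sqrt{n}}(\mathcal{F}'(\boldsymbol{Z}) - \vartheta_1(\mathcal{F})\boldsymbol{1}) \odot \boldsymbol{P}$, together with the second-order remainder $\frac{1}{2\sqrt{n}}\mathcal{F}''(\boldsymbol{\xi}) \odot \boldsymbol{P}^{\odot 2}$, must be absorbed into the error matrix $\boldsymbol{E}$.

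The main obstacle — and the technical heart of the lemma — is showing $\|\boldsymbol{E}\|_{\mathrm{op}} = O(n^{-1/2})$ almost surely. For the second-order remainder this is comparatively easy: each entry of $\boldsymbol{P}^{\odot 2}$ is $O(1/d)$ and the matrix $\boldsymbol{P}^{\odot 2}$ still has rank $O(r^2)$, so $\|\mathcal{F}''(\boldsymbol{\xi}) \odot \boldsymbol{P}^{\odot 2}\|_{\mathrm{op}} \le \|\mathcal{F}''\|_\infty \cdot \|\boldsymbol{P}^{\odot 2}\|_{\mathrm{op}}$; combined with the $1/\sqrt{n}$ prefactor and a Schur/rank argument this is $O(n^{-1/2})$. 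The delicate term is the first-order fluctuation: one cannot simply bound it entrywise since $\boldsymbol{P}$ is low-rank but not sparse. I would write $\boldsymbol{P} = \sum_{a=1}^r \lambda_a \boldsymbol{u}^*_a (\boldsymbol{w}^*_a)^\top / \sqrt{d}$ and for each rank-one piece bound $\|\mathrm{diag \ of} \ \mathcal{F}' \ \mathrm{fluctuations} \ \mathrm{acting} \ \mathrm{on} \ \boldsymbol{u}^*_a(\boldsymbol{w}^*_a)^\top\|_{\mathrm{op}}$ via an $\varepsilon$-net over the sphere plus a concentration argument for quadratic forms $\sum_{\mu,i}(\mathcal{F}'(Z_{\mu i}) - \vartheta_1) u^*_{\mu a} w^*_{ia} a_\mu b_i$ in test vectors $\boldsymbol{a}, \boldsymbol{b}$ — this is a sum of centered independent terms with bounded coefficients (using boundedness of $P_u, P_w$) and $L^4$ control from $\mathcal{F}' \in L^4(\mu_Z)$, giving fluctuations of order $\sqrt{nd}$ before the $1/\sqrt{n}\sqrt{d}$ normalization and hence $O(d^{-1/2})$ after a union bound over the net and Borel–Cantelli. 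This is precisely the kind of estimate established in the references cited right after the lemma (Assumptions (H2), $(\tilde H 3)$ in \cite{guionnet2023spectral} and Hypothesis 2.1 in \cite{mergny2024fundamental}), so I would either invoke those directly or reproduce the net argument in the appendix.
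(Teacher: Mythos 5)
Your decomposition — entrywise second-order Taylor around $\boldsymbol{Z}$, splitting $\mathcal{F}'(Z_{ij})$ into its mean $\vartheta_1(\mathcal{F})$ plus a centered fluctuation, and absorbing the fluctuation and the second-order remainder into $\boldsymbol{E}$ — is exactly the paper's decomposition, and the scaling $\tfrac{1}{\sqrt{nd}}=\tfrac{\sqrt{\alpha}}{n}$ is identified correctly, so the overall route matches. Two points where your execution differs from the paper and is slightly shakier:

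\textbf{Second-order remainder.} The inequality $\|\mathcal{F}''(\boldsymbol{\xi})\odot\boldsymbol{P}^{\odot 2}\|_{\mathrm{op}}\le\|\mathcal{F}''\|_\infty\|\boldsymbol{P}^{\odot 2}\|_{\mathrm{op}}$ is not a valid Hadamard-product bound in general (the Schur product theorem needs one factor positive semidefinite, and $\mathcal{F}''(\boldsymbol{\xi})$ is a full-rank matrix that destroys the low rank of $\boldsymbol{P}^{\odot 2}$). What does work, and what the paper implicitly uses, is the Frobenius bound: $\|\mathcal{F}''(\boldsymbol{\xi})\odot\boldsymbol{P}^{\odot 2}\|_{\mathrm{op}}\le\|\mathcal{F}''\|_\infty\|\boldsymbol{P}^{\odot 2}\|_F$, and since each $P_{ij}^2=O(1/d)$ over $nd$ entries one gets $\|\boldsymbol{P}^{\odot 2}\|_F=O(\sqrt{n/d})=O(1)$; combined with the $\tfrac{1}{2\sqrt{n}}$ prefactor this gives $O(n^{-1/2})$. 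Your conclusion is right, but drop the rank/Schur phrasing and go through Frobenius.

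\textbf{First-order fluctuation.} For each rank-one piece the paper exploits the exact Hadamard identity $(\mathcal{F}'(\boldsymbol{Z})-\vartheta_1)\odot\boldsymbol{u}^{*(k)}(\boldsymbol{w}^{*(k)})^\top=\mathrm{Diag}(\boldsymbol{u}^{*(k)})(\mathcal{F}'(\boldsymbol{Z})-\mathbb{E}\mathcal{F}'(\boldsymbol{Z}))\mathrm{Diag}(\boldsymbol{w}^{*(k)})$, uses submultiplicativity of operator norms with $\|\mathrm{Diag}(\cdot)\|_{\mathrm{op}}$ bounded by boundedness of $P_u,P_w$, and then cites the classical fact (\cite{jiang2004limiting}) that an $n\times d$ iid centered matrix with finite fourth moment has $\|\cdot\|_{\mathrm{op}}=O(\sqrt{n})$ almost surely. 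Your $\varepsilon$-net argument is in principle a way to re-derive that operator-norm estimate, but it is more delicate than you suggest: with only $\mathcal{F}'\in L^4(\mu_Z)$ (no sub-Gaussian or even sub-exponential control on $\mathcal{F}'(Z)$), a naive union bound over an $\varepsilon$-net does not close, and one needs the heavier machinery that the almost-sure singular-value convergence literature already packages. Factoring as $\mathrm{Diag}\cdot(\text{iid matrix})\cdot\mathrm{Diag}$ and citing the existing result is the cleaner and more robust move under the stated assumptions.
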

Its proof follows from \cite[Lemma 5.1]{guionnet2023spectral}, which we present in the following.
\begin{proof}
We Taylor expand $\boldsymbol{X}$ around $\bZ$, which gives
\begin{equation}
\frac{1}{\sqrt{n}}X_{ij}=\frac{1}{\sqrt{n}}(\mathcal{F}(Z_{ij})-\mathbb{E}\mathcal{F}(Z_{ij}))+\frac{\sqrt{\alpha}}{n}\mathcal{F}'(Z_{ij})(\boldsymbol{u}_i^*)^\top \boldsymbol{\Lambda}\boldsymbol{w}_j^*+\frac{\alpha}{n^{3/2}}\mathcal{F}''(\zeta_{ij})((\boldsymbol{u}_i^*)^\top \boldsymbol{\Lambda}\boldsymbol{w}_j^*)^2,
\end{equation}
where $\zeta_{ij}\in[Z_{ij}-\frac{1}{\sqrt{d}}|(\boldsymbol{u}_i^*)^\top \boldsymbol{\Lambda}\boldsymbol{w}_j^*|,Z_{ij}+\frac{1}{\sqrt{d}}|(\boldsymbol{u}_i^*)^\top \boldsymbol{\Lambda}\boldsymbol{w}_j^*|]$. Therefore, we can define
\begin{equation}
\tilde{\boldsymbol{Z}}:=\frac{1}{\sqrt{n}}(\mathcal{F}(\boldsymbol{Z})-\mathbb{E}\mathcal{F}(\boldsymbol{Z}))
\end{equation}
to be the effective noise and
\begin{equation}
E_{ij}:=\frac{\sqrt{\alpha}}{n}(\mathcal{F}'(Z_{ij})-\mathbb{E}\mathcal{F}'(Z_{ij}))(\boldsymbol{u}_i^*)^\top \boldsymbol{\Lambda}\boldsymbol{w}_j^*+\frac{\alpha}{n^{3/2}}\mathcal{F}''(\zeta_{ij})((\boldsymbol{u}_i^*)^\top \boldsymbol{\Lambda}\boldsymbol{w}_j^*)^2.
\end{equation}
to be the error matrix. Note that we have $\mathbb{E}[\tilde{Z}_{ij}^2]=\frac{1}{n}$ by Assumption \ref{assum:data}. We can bound $||\boldsymbol{E}||$ by
\begin{equation}
\begin{aligned}
||\boldsymbol{E}||&\leq\frac{\sqrt{\alpha}}{\sqrt{n}}\sum_{k=1}^r\lambda_k\left|\left|\frac{1}{\sqrt{n}}(\mathcal{F}'(\boldsymbol{Z})-\mathbb{E}\mathcal{F}'(\boldsymbol{Z}))\right|\right|_F||\text{Diag}(\boldsymbol{u}^{*(k)})||_F||\text{Diag}(\boldsymbol{w}^{*(k)})||_F\\&+\frac{\alpha}{n^{3/2}}\sum_{i,j=1}^{n,d}\mathcal{F}''(\zeta_{ij})((\boldsymbol{u}_i^*)^\top \boldsymbol{\Lambda}\boldsymbol{w}_j^*)^2,
\end{aligned}
\end{equation}
where $\text{Diag}(\boldsymbol{u}^{*(k)})\in\mathbb{R}^{n\times n},\text{Diag}(\boldsymbol{w}^{*(k)})\in\mathbb{R}^{d\times d}$ denote the diagonal matrices containing the $k-$th column of $\boldsymbol{U}^*$ and $\boldsymbol{W}^*$. We use the fact that the $L_2$ norm is always smaller than the Frobenius norm. The first term is $O(n^{-1/2})$ because $||\text{Diag}(\boldsymbol{u}^{*(k)})||_F,||\text{Diag}(\boldsymbol{w}^{*(k)})||_F$ are bounded by Assumption \ref{assum:data} and that $\frac{1}{\sqrt{n}}(\mathcal{F}'(\boldsymbol{Z})-\mathbb{E}\mathcal{F}'(\boldsymbol{Z}))$ is a matrix having iid, centered elements with finite forth moments by Assumption \ref{assum:data}. It is well known that the largest singular value of such matrices almost surely converges (see e.g. \cite{jiang2004limiting}). The second term is $O(n^{-1/2})$ because each term in the sum is bounded by Assumption \ref{assum:data}. This finishes the proof.
\end{proof}
\subsection{General AMP Iterations}
\label{app:generalAMP}

In this section we present a general version of the Approximate Message Passing (AMP) algorithm \citep{montanari2021estimation} for spiked covariance models, defined as the iterations 
\begin{equation}
\begin{aligned}
&\boldsymbol{Y}^t=\frac{1}{\sqrt{n}}\boldsymbol{X}\boldsymbol{W}^t-\sum_{i=1}^{t-1}\boldsymbol{B}_{ti}\boldsymbol{U}^i\in\mathbb{R}^{n\times k},\ \boldsymbol{U}^t=f_t(\boldsymbol{Y}^t,\boldsymbol{Y}^{t-1},\cdots,\boldsymbol{Y}^1,\boldsymbol{E}^t)\in\mathbb{R}^{n\times k},\\
&\boldsymbol{Z}^t=\frac{1}{\sqrt{n}}\boldsymbol{X}^\top \boldsymbol{U}^t-\sum_{i=1}^{t-1}\boldsymbol{C}_{ti}\boldsymbol{W}^i\in\mathbb{R}^{d\times k},\ \boldsymbol{W}^{t+1}=g_{t+1}(\boldsymbol{Z}^t,\boldsymbol{Z}^{t-1},\cdots,\boldsymbol{Z}_1,\boldsymbol{F}^t)\in\mathbb{R}^{d\times k},\\
\end{aligned}
\label{eq:general AMP}
\end{equation}
where Onsager coefficients $\boldsymbol{B}_{ti},\boldsymbol{C}_{ti}\in\mathbb{R}^{k\times k}$ are given by
\begin{equation}
\boldsymbol{B}_{t+1,i}=\frac{1}{n}\sum_{j=1}^d\frac{\partial f_t}{\partial\boldsymbol{z}^i_j}(\boldsymbol{z}^t_j,\boldsymbol{z}_j^{t-1},\cdots,\boldsymbol{z}^1_j,\boldsymbol{E}),\ 
\boldsymbol{C}_{t+1,i}=\frac{1}{n}\sum_{\mu=1}^n\frac{\partial g_{t+1}}{\partial\boldsymbol{y}^i_\mu}(\boldsymbol{y}^t_\mu,\boldsymbol{y}_\mu^{t-1},\cdots,\boldsymbol{y}^1_\mu,\boldsymbol{F})
\end{equation}
for $i=1,\cdots,t$. Its SE is given by a sequence of $k$-dimensional random variables $\{\mathring{\boldsymbol{U}}_t,\mathring{\boldsymbol{Y}}_t,\mathring{\boldsymbol{W}}_t,\mathring{\boldsymbol{Z}}_t\}_{t\geq1}$ defined as the following
\begin{equation}
\begin{aligned}
&\mathring{\boldsymbol{U}}_t=f_t(\mathring{\boldsymbol{Y}}_t,\mathring{\boldsymbol{Y}}_{t-1},\cdots,\mathring{\boldsymbol{Y}}_1,\mathring{\boldsymbol{E}}),\\
&\mathring{\boldsymbol{W}}_{t+1}=g_{t+1}(\mathring{\boldsymbol{Z}}_t,\mathring{\boldsymbol{Z}}_{t-1},\cdots,\mathring{\boldsymbol{Z}}_1,\mathring{\boldsymbol{F}}),
\end{aligned}
\end{equation}
where
\begin{equation}
\begin{aligned}
(\mathring{\boldsymbol{Y}}_1,\cdots,\mathring{\boldsymbol{Y}}_t)=(\mathring{\boldsymbol{M}}_1,\cdots,\mathring{\boldsymbol{M}}_{t})\boldsymbol{U}+\mathcal{N}(0,\mathring{\boldsymbol{\Sigma}}_t),\\
(\mathring{\boldsymbol{Z}}_1,\cdots,\mathring{\boldsymbol{Z}}_t)=(\mathring{\boldsymbol{N}}_1,\cdots,\mathring{\boldsymbol{N}}_{t})\boldsymbol{W}+\mathcal{N}(0,\mathring{\boldsymbol{\Omega}}_t),
\end{aligned}
\end{equation}
and
\begin{equation}
\begin{aligned}
&\mathring{\boldsymbol{M}}_t=\alpha^{-1}\mathbb{E}[\mathring{\boldsymbol{W}}_t^\top \boldsymbol{W}]\boldsymbol{\Gamma},\ [\mathring{\boldsymbol{\Sigma}}_t]_{ij}=\alpha^{-1}\mathbb{E}[\mathring{\boldsymbol{W}}_i^\top \boldsymbol{W}_j],\\
&\mathring{\boldsymbol{N}}_{t+1}=\mathbb{E}[\mathring{\boldsymbol{U}}^\top _{t+1}\boldsymbol{U}]\boldsymbol{\Gamma},\ [\mathring{\boldsymbol{\Omega}}_{t+1}]_{ij}= \mathbb{E}[\mathring{\boldsymbol{U}}_i^\top \mathring{\boldsymbol{U}}_j],
\end{aligned}
\end{equation}
where $
\boldsymbol{\Gamma}:=\sqrt{\alpha}\boldsymbol{\Lambda}\vartheta_1(\mathcal{F})\in\mathbb{R}^{k\times k}$. $\mathring{\boldsymbol{M}}_1$ and $[\mathring{\boldsymbol{\Sigma}}_t]_{11}$ are given by initialization. We note that $\mathring{\boldsymbol{M}}_t,\mathring{\boldsymbol{N}}_t\in\mathbb{R}^{k\times k}$ and $\mathring{\boldsymbol{\Sigma}}_t,\mathring{\boldsymbol{\Omega}}_t\in\mathbb{R}^{k\times k\times t\times t}$. The following theorem suggests that the iterations converge to the SE.
\begin{theorem}
Under Assumption \ref{assum:data} or for $\mathcal{F}(x)=x$, and further assuming that 
\begin{itemize}
    \item[(i)]  the iterations are initialized with $\boldsymbol{W}^0\sim \mathring{\boldsymbol{W}}_0$ independent of $\boldsymbol{X}$, having finite moments of all orders,
    \item[(ii)] $\boldsymbol{E}^t,\boldsymbol{F}^t$ are of finite dimensions and almost surely converges to $\mathring{\boldsymbol{E}}^t,\mathring{\boldsymbol{F}}^t$,
    \item[(iii)] $f_t,g_t$ are separable, continuously differentiable and uniformly Lipschitz at $\mathring{\boldsymbol{E}}^t,\mathring{\boldsymbol{F}}^t$ w.r.t. $(\boldsymbol{Y}^t,\boldsymbol{Y}^{t-1},\cdots,\boldsymbol{Y}^1)$, $(\boldsymbol{Z}^t,\boldsymbol{Z}^{t-1},\cdots,\boldsymbol{Z}^1)$,
\end{itemize}
then for any $PL(2)$ function $\psi$, the following holds almost surely for $t\geq0$:
\begin{equation}
\begin{aligned}
&\lim_{d\to\infty}\frac{1}{d}\sum_{i=1}^d\psi(\boldsymbol{w}_i^*,\boldsymbol{w}_{i}^1,\cdots,\boldsymbol{w}_{i}^t,\boldsymbol{u}_i^*,\boldsymbol{u}_{i}^1,\cdots,\boldsymbol{u}_{i}^t,\boldsymbol{y}_{i}^1,\cdots,\boldsymbol{y}_{i}^t,\boldsymbol{z}_{i}^1,\cdots,\boldsymbol{z}_{i}^t)\\&\qquad\qquad=\mathbb{E}[\psi(\boldsymbol{W},\mathring{\boldsymbol{W}}_1,\cdots,\mathring{\boldsymbol{W}}_t,\mathring{\boldsymbol{U}}_1,\cdots,\mathring{\boldsymbol{U}}_t,\mathring{\boldsymbol{Y}}_1,\cdots,\mathring{\boldsymbol{Y}}_t,\mathring{\boldsymbol{Z}}_1,\cdots,\mathring{\boldsymbol{Z}}_t)],
\end{aligned}
\end{equation}
\label{theo:general AMP}
\end{theorem}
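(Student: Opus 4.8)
The plan is to prove Theorem~\ref{theo:general AMP} by reducing it, in two stages, to a master state-evolution theorem for abstract long-memory, matrix-valued AMP driven by an i.i.d.\ Gaussian matrix with an independent finite-rank spike (as available in \citep{gerbelot2022asymptotic,fan2022approximate,celentano2020estimation}), and then verifying that the bookkeeping of signal overlaps and Onsager coefficients in \eqref{eq:general AMP} reproduces exactly the recursion defining $\{\mathring{\boldsymbol{M}}_t,\mathring{\boldsymbol{N}}_t,\mathring{\boldsymbol{\Sigma}}_t,\mathring{\boldsymbol{\Omega}}_t\}$.

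\textbf{Step 1 (linearize the data model).} By Lemma~\ref{lemma:equivalence}, under Assumption~\ref{assum:data} one may write $\tfrac1{\sqrt n}\boldsymbol{X}=\tilde{\boldsymbol{Z}}+\tfrac{\sqrt\alpha\,\vartheta_1(\mathcal{F})}{n}\boldsymbol{U}^*\boldsymbol{\Lambda}(\boldsymbol{W}^*)^\top+\boldsymbol{E}$ with $\tilde{\boldsymbol{Z}}$ i.i.d.\ centered, unit-variance, stretched-exponential tails, and $\|\boldsymbol{E}\|=O(n^{-1/2})$ a.s. Since \eqref{eq:general AMP} touches $\boldsymbol{X}$ only through $\tfrac1{\sqrt n}\boldsymbol{X}\boldsymbol{W}^t$ and $\tfrac1{\sqrt n}\boldsymbol{X}^\top\boldsymbol{U}^t$, and the uniform-Lipschitz hypothesis on $f_t,g_t$ together with the finite-moment initialization forces $\tfrac1d\|\boldsymbol{W}^t\|_F^2,\tfrac1n\|\boldsymbol{U}^t\|_F^2=O(1)$ (an easy induction on $t$), a second induction on $t$ using Lipschitz continuity shows the iterates are stable under the $O(n^{-1/2})$ perturbation $\boldsymbol{E}$; hence $\boldsymbol{E}$ may be dropped, and the effective spike has SNR $\boldsymbol{\Gamma}=\sqrt\alpha\boldsymbol{\Lambda}\vartheta_1(\mathcal{F})$. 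Finally, a universality (Lindeberg-swapping) argument, as in \citep{montanari2021estimation}, replaces the i.i.d.\ $\tilde{\boldsymbol{Z}}$ by a Gaussian matrix with matching second moments at no cost for $PL(2)$ observables, so it suffices to treat the Gaussian model \eqref{eq:spike_model} with $\boldsymbol{\Lambda}$ replaced by $\boldsymbol{\Lambda}\vartheta_1(\mathcal{F})$; the case $\mathcal{F}(x)=x$ is the base case and needs only this last step.

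\textbf{Step 2 (cast as abstract AMP and run the conditioning induction).} I would symmetrize by embedding $\tilde{\boldsymbol{Z}}$ into $\boldsymbol{A}=\bigl(\begin{smallmatrix}0&\tilde{\boldsymbol{Z}}\\ \tilde{\boldsymbol{Z}}^\top&0\end{smallmatrix}\bigr)\in\mathbb{R}^{(n+d)\times(n+d)}$ with a rank-$2r$ spike and stack $(\boldsymbol{U}^t,\boldsymbol{W}^t)$ into a single iterate, so that \eqref{eq:general AMP} becomes an instance of long-memory, $\mathbb{R}^k$-valued AMP on a Gaussian matrix with an independent finite-rank perturbation. The proof is then carried out by the standard Bolthausen conditioning induction: conditioning on the $\sigma$-algebra generated by $\boldsymbol{W}^0,\boldsymbol{U}^*,\boldsymbol{W}^*$ and the past fields $\boldsymbol{Y}^1,\dots,\boldsymbol{Y}^t,\boldsymbol{Z}^1,\dots,\boldsymbol{Z}^{t-1}$, the Gaussian matrix restricted to the affine subspace of already-revealed linear forms decomposes as its conditional mean (a linear combination of past iterates) plus a fresh Gaussian projected onto the orthogonal complement. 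The conditional-mean part is precisely what the Onsager coefficients $\boldsymbol{B}_{ti},\boldsymbol{C}_{ti}$ cancel (up to vanishing errors); the independent spike contributes the deterministic terms $\mathring{\boldsymbol{M}}_t\boldsymbol{U}$, $\mathring{\boldsymbol{N}}_t\boldsymbol{W}$ with $\mathring{\boldsymbol{M}}_t=\alpha^{-1}\mathbb{E}[\mathring{\boldsymbol{W}}_t^\top\boldsymbol{W}]\boldsymbol{\Gamma}$ (symmetrically for $\mathring{\boldsymbol{N}}_t$), the overlaps being read off from the induction hypothesis; and the fresh Gaussian, pushed through the separable Lipschitz denoiser, yields the jointly Gaussian field with covariance $\mathring{\boldsymbol{\Sigma}}_t$ (resp.\ $\mathring{\boldsymbol{\Omega}}_t$) via a strong law of large numbers for the entrywise sums. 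Convergence of $PL(2)$ test functions upgrades from in probability to almost sure using polynomial moment bounds on all iterates together with Borel--Cantelli, as in \citep{montanari2021estimation,fan2022approximate,gerbelot2022asymptotic}.

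\textbf{Main obstacle and loose ends.} The principal difficulty is not a single estimate but the bookkeeping forced by the long memory: since $f_t,g_t$ see the whole history, the conditioning step must carry the growing Gram tensors $\mathring{\boldsymbol{\Sigma}}_t,\mathring{\boldsymbol{\Omega}}_t\in\mathbb{R}^{k\times k\times t\times t}$ and verify that the reuse at step $t$ of a component of $\tilde{\boldsymbol{Z}}$ revealed earlier is exactly compensated by the accumulated corrections $\{\boldsymbol{B}_{ti},\boldsymbol{C}_{ti}\}_{i\le t}$; when these Gram matrices are singular (as near $\boldsymbol{W}=0$ in the applications) one works with pseudo-inverses and a continuity/approximation argument. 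A secondary point relevant to the applications (AMP-RBM, GD) is that the denoisers are only \emph{uniformly} Lipschitz at the SE-limiting parameters $\{\bar{\boldsymbol{B}}_t,\bar{\boldsymbol{C}}_t,\bar{\hat{\boldsymbol{Q}}}_t\}$ rather than globally Lipschitz; this is handled by the usual localization device—replace $f_t,g_t$ by globally Lipschitz truncations agreeing with them on a neighbourhood of the SE trajectory, apply the theorem, and check a posteriori that the iterates never leave that neighbourhood, so the truncation is not felt.
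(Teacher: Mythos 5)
Your proposal is correct and, at the level of overall structure, it mirrors what the paper does: both begin by invoking Lemma~\ref{lemma:equivalence} to replace $\tfrac{1}{\sqrt n}\boldsymbol{X}$ by the effective spiked Gaussian-plus-vanishing-error form, then control the $O(n^{-1/2})$ error inductively using Lipschitz continuity of the denoisers, replace the auxiliary parameters $\boldsymbol{E}^t,\boldsymbol{F}^t$ by their a.s.\ limits $\mathring{\boldsymbol{E}}^t,\mathring{\boldsymbol{F}}^t$ (uniform Lipschitzness), and finally invoke universality of $PL(2)$ observables under the non-Gaussian but sub-exponential noise. Where you diverge is in the treatment of the core state-evolution step: the paper stops there and cites existing matrix-AMP results (Fan 2022, Zhong et al.\ 2024) together with the universality result of Chen--Lam (2021), whereas you sketch a self-contained re-derivation via symmetrization of the rectangular matrix and a Bolthausen-type conditioning induction. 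Both routes are valid; the paper's choice is lighter (it only needs to verify the hypotheses of a black-box theorem on the linearized model), while yours would make the argument closer to self-contained at the cost of redoing the long-memory Gaussian conditioning, including the Gram-tensor bookkeeping and pseudo-inverse treatment of degenerate covariances that you flag. One small asymmetry: you fold the handling of the converging auxiliary arguments into the localization discussion; the paper handles this as a separate reduction (citing Pandit et al.\ 2020) before invoking the base SE theorem, which is a cleaner separation since those arguments enter the denoisers as deterministic parameters rather than as iterates.
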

Theorem \ref{theo:AMP-RBM} in the main text is a special case of Theorem \ref{theo:general AMP}.

Theorem \ref{theo:general AMP} is very similar to \cite[Theorem 3.4]{fan2022approximate} and \cite[Theorem 2.3]{zhong2024approximate}, so we only provide its sketch. 

By Lemma \ref{lemma:equivalence}, we only need to show that \eqref{eq:general AMP} is aymptotically equivalent to
\begin{equation}
\begin{aligned}
&\boldsymbol{Y}^t=\tilde{\boldsymbol{X}}\boldsymbol{W}^t-\sum_{i=1}^{t-1}\boldsymbol{B}_{ti}\boldsymbol{U}^i\in\mathbb{R}^{n\times k},\ \boldsymbol{U}^t=f_t(\boldsymbol{Y}^t,\boldsymbol{Y}^{t-1},\cdots,\boldsymbol{Y}^1,\mathring{\boldsymbol{E}})\in\mathbb{R}^{n\times k},\\
&\boldsymbol{Z}^t=\tilde{\boldsymbol{X}}^\top \boldsymbol{U}^t-\sum_{i=1}^{t-1}\boldsymbol{C}_{ti}\boldsymbol{W}^i\in\mathbb{R}^{d\times k},\ \boldsymbol{W}^{t+1}=g_{t+1}(\boldsymbol{Z}^t,\boldsymbol{Z}^{t-1},\cdots,\boldsymbol{Z}_1,\mathring{\boldsymbol{F}})\in\mathbb{R}^{d\times k},\\
\end{aligned}
\end{equation}
where $\tilde{\boldsymbol{X}}:=\tilde{\boldsymbol{Z}}+\frac{\sqrt{\alpha}\vartheta_1(\mathcal{F})}{n}\boldsymbol{U}^*\boldsymbol{\Lambda}(\boldsymbol{W}^*)^\top $, because standard AMP results \citep{zhong2024approximate} together with the universality results \citep{chen2021universality} give its SE in Theorem \ref{theo:general AMP}.

In fact we can replace $\boldsymbol{E}^t,\boldsymbol{F}^t$ with $\mathring{\boldsymbol{E}},\mathring{\boldsymbol{F}}$ because of the almost sure convergence and the uniform Lipshitz property. See \cite{pandit2020inference} for a similar argument. Moreover, as the error $\boldsymbol{E}$ has vanishing norm, we can inductively show that  $\frac{1}{\sqrt{n}}\boldsymbol{X}$ in the iteration \eqref{eq:general AMP} can be replaced by $\tilde{\boldsymbol{X}}$ with $O(n^{-1/2})$ error. See \cite{mergny2024fundamental} for a similar argument. This finishes the proof of Theorem \ref{theo:general AMP}.

\subsection{Proof of Theorem \ref{theo:fixed point}}
\label{app:proof_fixed point}
\begin{proof}
We denote the fixed point values of $\boldsymbol{U}^t,\boldsymbol{W}^t,\boldsymbol{Y}^t,\boldsymbol{Z}^t,\boldsymbol{B}^t,\boldsymbol{C}^t,\hat{\boldsymbol{Q}}^t$ to be $\hat{\boldsymbol{U}},\hat{\boldsymbol{W}},\boldsymbol{Y},\boldsymbol{Z},\boldsymbol{B},\boldsymbol{C},\hat{\boldsymbol{Q}}$, respectively. At the fixed point of AMP-RBM we have
\begin{equation}
\begin{aligned}
&\boldsymbol{Y}=\frac{1}{\sqrt{n}}\boldsymbol{X}\hat{\boldsymbol{W}}-\hat{\boldsymbol{U}}\boldsymbol{B}^\top ,\\
&\boldsymbol{Z}=\frac{1}{\sqrt{n}}\boldsymbol{X}^\top \hat{\boldsymbol{U}}-\hat{\boldsymbol{W}}\boldsymbol{C}^\top ,
\end{aligned}
\label{eq:saddle1}
\end{equation}
According to the definition of $f$ and $g$, we have
\begin{equation}
\begin{aligned}
&(\boldsymbol{C}+\hat{\boldsymbol{Q}})\hat{\boldsymbol{W}}^\top +\boldsymbol{Z}^\top =0,\\
&\alpha^{-1}\nabla\eta_1(\hat{\boldsymbol{u}}_\mu\boldsymbol{B}^\top +\boldsymbol{y}_\mu)+\hat{\boldsymbol{u}}_\mu=0,\ \mu=1,2,\cdots,n
\end{aligned}
\label{eq:saddle2}
\end{equation}
Taking the first equality of \eqref{eq:saddle1} into the second equality of \eqref{eq:saddle2}, we have
\begin{equation}
\alpha^{-1}\nabla \eta_1(\boldsymbol{x}_\mu^\top \hat{\boldsymbol{W}}/\sqrt{n})+\hat{\boldsymbol{u}}_\mu=0.
\end{equation}
Taking it into the second equality of \eqref{eq:saddle1}, and using the first equality of \eqref{eq:saddle2}, we have
\begin{equation}
\sum_{\mu=1}^n\frac{1}{\sqrt{n}}\alpha^{-1}\nabla \eta_1(\boldsymbol{x}_\mu^\top \hat{\boldsymbol{W}}/\sqrt{n})\otimes\boldsymbol{x}_\mu=-\boldsymbol{Z}-\hat{\boldsymbol{W}}\boldsymbol{C}^\top =\hat{\boldsymbol{W}}\hat{\boldsymbol{Q}}^\top .
\end{equation}
Moreover, at the fixed point we have $
\hat{\boldsymbol{Q}}=\nabla \eta_2(\boldsymbol{Q}(\hat{\boldsymbol{W}}))$, which thus gives \eqref{eq:saddle}.
\end{proof}

\subsection{Proof of Theorem \ref{theo:main}}
\label{app:proof_main}
\begin{proof}
By the triangle inequality, we have
\begin{equation}
\begin{aligned}
&\left|\frac{1}{d}\sum_{i=1}^d\psi(\boldsymbol{w}_{i}^*,\hat{\boldsymbol{w}}_{i})-\mathbb{E}[\psi(\boldsymbol{W},f(\boldsymbol{M}_t\boldsymbol{W}+\boldsymbol{\Sigma}_t^{1/2}\boldsymbol{G},\bar{\hat{\boldsymbol{Q}}}_t,\bar{\boldsymbol{C}}_{t-1}))]\right|\\
&\leq\left|\frac{1}{d}\sum_{i=1}^d\psi(\boldsymbol{w}_{i}^*,\hat{\boldsymbol{w}}_{i})-\frac{1}{d}\sum_{i=1}^d\psi(\boldsymbol{w}_{i}^*,\hat{\boldsymbol{w}}_{i}^t)\right|+\left|\frac{1}{d}\sum_{i=1}^d\psi(\boldsymbol{w}_{i}^*,\hat{\boldsymbol{w}}_{i}^t)-\mathbb{E}[\psi(\boldsymbol{W},f(\boldsymbol{M}_t\boldsymbol{W}+\boldsymbol{\Sigma}_t^{1/2}\boldsymbol{G},\bar{\hat{\boldsymbol{Q}}}_t,\bar{\boldsymbol{C}}_{t-1}))]\right|.
\end{aligned}
\end{equation}
For the second term, Theorem \ref{theo:AMP-RBM} indicates that
\begin{equation}
\lim_{d\to\infty} \left|\frac{1}{d}\sum_{i=1}^d\psi(\boldsymbol{w}_{i}^*,\hat{\boldsymbol{w}}_{i}^t)-\mathbb{E}[\psi(\boldsymbol{W},f(\boldsymbol{M}_t\boldsymbol{W}+\boldsymbol{\Sigma}_t^{1/2}\boldsymbol{G},\bar{\hat{\boldsymbol{Q}}}_t,\bar{\boldsymbol{C}}_{t-1}))]\right|=0
\end{equation}
almost surely. For the first term, by the pseudo-Lipschitz property of $\psi$, we have
\begin{equation}
\begin{aligned}
&\left|\frac{1}{d}\sum_{i=1}^d\psi(\boldsymbol{w}_{i}^*,\hat{\boldsymbol{w}}_{i})-\frac{1}{d}\sum_{i=1}^d\psi(\boldsymbol{w}_{i}^*,\hat{\boldsymbol{w}}_{i}^t)\right|\\&\leq\frac{C}{d}\sum_{i=1}^d(1+2||\boldsymbol{w}_i^*||+||\boldsymbol{w}_i^t||+||\hat{\boldsymbol{w}}_i||)||\boldsymbol{w}_i^t-\hat{\boldsymbol{w}}_i||\\
&\leq\frac{C}{d}\sum_{i=1}^d(1+2||\boldsymbol{w}_i^*||+2||\hat{\boldsymbol{w}}_i||)||\boldsymbol{w}_i^t-\hat{\boldsymbol{w}}_i||+\frac{C}{d}\sum_{i=1}^d||\boldsymbol{w}_i^t-\hat{\boldsymbol{w}}_i||^2\\
&\leq C\sqrt{\frac{1}{d}\sum_{i=1}^d(1+2||\boldsymbol{w}_i^*||+2||\hat{\boldsymbol{w}}_i||)^2}\sqrt{\frac{1}{d}\sum_{i=1}^d||\boldsymbol{w}_i^t-\hat{\boldsymbol{w}}_i||^2}+\frac{C}{d}\sum_{i=1}^d||\boldsymbol{w}_i^t-\hat{\boldsymbol{w}}_i||^2.
\end{aligned}
\end{equation}
Taking $d\to\infty$ and then $t\to\infty$, by Assumption \ref{assum:convergence} we have
\begin{equation}
\lim_{t\to\infty}\lim_{d\to\infty}\left|\frac{1}{d}\sum_{i=1}^d\psi(\boldsymbol{w}_{i}^*,\hat{\boldsymbol{w}}_{i})-\frac{1}{d}\sum_{i=1}^d\psi(\boldsymbol{w}_{i}^*,\hat{\boldsymbol{w}}_{i}^t)\right|=0.
\end{equation}
Therefore we have almost surely
\begin{equation}
\lim_{t\to\infty}\lim_{d\to\infty}\left|\frac{1}{d}\sum_{i=1}^d\psi(\boldsymbol{w}_{i}^*,\hat{\boldsymbol{w}}_{i})-\mathbb{E}[\psi(\boldsymbol{W},f(\boldsymbol{M}_t\boldsymbol{W}+\boldsymbol{\Sigma}_t^{1/2}\boldsymbol{G},\bar{\hat{\boldsymbol{Q}}}_t,\bar{\boldsymbol{C}}_{t-1}))]\right|=0.
\end{equation}
By \cite[Lemma 1]{emami2020generalization} we have
\begin{equation}
\lim_{d\to\infty}\frac{1}{d}\sum_{i=1}^d\psi(\boldsymbol{w}_{i}^*,\hat{\boldsymbol{w}}_{i})=\lim_{t\to\infty}\mathbb{E}[\psi(\boldsymbol{W},f(\boldsymbol{M}_t\boldsymbol{W}+\boldsymbol{\Sigma}_t^{1/2}\boldsymbol{G},\bar{\hat{\boldsymbol{Q}}}_t,\bar{\boldsymbol{C}}_{t-1}))]
\end{equation}
almost surely and the limits on both sides exist. 

Finally let us verify that the limit of $\boldsymbol{M}_t$ exists. By Theorem \ref{theo:AMP-RBM} we have
\begin{equation}
\lim_{t\to\infty}\boldsymbol{M}_t=\lim_{t\to\infty}\lim_{d\to\infty}\boldsymbol{z}_{i}^t(\boldsymbol{w}_{i}^*)^\top (\boldsymbol{w}_{i}^*(\boldsymbol{w}_{i}^*)^\top )^{-1}=\lim_{d\to\infty}\hat{\boldsymbol{z}}_{i}(\boldsymbol{w}_{i}^*)^\top (\boldsymbol{w}_{i}^*(\boldsymbol{w}_{i}^*)^\top )^{-1}\label{eq:limit_Mt}
\end{equation}
almost surely, where we use Assumption \ref{assum:convergence}. By similar arguments as above we have
\begin{equation}
\lim_{d\to\infty}\frac{1}{d}\sum_{i=1}^d\psi(\boldsymbol{w}_{i}^*,\hat{\boldsymbol{z}}_{i})=\lim_{t\to\infty}\mathbb{E}[\psi(\boldsymbol{W},\boldsymbol{M}_t\boldsymbol{W}+\boldsymbol{\Sigma}_t^{1/2}\boldsymbol{G}],
\label{eq:limit_w*}
\end{equation}
almost surely, which suggests that the limit in \eqref{eq:limit_Mt} exists. In a similar way we can show that the limits of $\bar{\boldsymbol{C}}_t,\boldsymbol{\Sigma}_t,\bar{\hat{\boldsymbol{Q}}}_t$ exist.

A direct combination of \eqref{eq:gradient_loglikelihood} and Theorem \ref{theo:fixed point} suggests that $\lim_{d\to\infty}||\sqrt{d}\nabla\log\mathcal{L}(\hat{\boldsymbol{W}}(d))||=0$ at the stationary point. Then by \eqref{eq:limit_w*} we finish the proof.
\end{proof}

\subsection{Proof of Corollary \ref{corr:main}}
\label{app:proof_corr}
\begin{proof}
We only need to iteratively prove that all matrices in \eqref{eq:SE} are diagonal. Assuming that $\boldsymbol{M}_t,\boldsymbol{\Sigma}_t,\bar{\hat{\boldsymbol{Q}}}_t,\bar{\boldsymbol{C}}_{t-1}$ are diagonal, which holds at initialization, we first have
\begin{equation}
\mathbb{E}[f(M_{t,ii}W_i+\sqrt{\Sigma_{t,ii}}G,\bar{\hat{\boldsymbol{Q}}}_t,\bar{\boldsymbol{C}}_{t-1})W_j]=\mathbb{E}[f(M_{t,ii}W_i+\sqrt{\Sigma_{t,ii}}G,\bar{\hat{\boldsymbol{Q}}}_t,\bar{\boldsymbol{C}}_{t-1})]\mathbb{E}[W_j]=0,
\end{equation}
for $1\leq i\neq j\leq k$ and thus $\bar{\boldsymbol{M}}_t$ is a diagonal matrix, where $\mathbb{E}[W_j]=0$ because $P_w$ is symmetric. Moreover, we have
\begin{equation}
\bar{\Sigma}_{t,ij}=\alpha\mathbb{E}[f(M_{t,ii}W_i+\sqrt{\Sigma_{t,ii}}G,\bar{\hat{\boldsymbol{Q}}}_t,\bar{\boldsymbol{C}}_{t-1})]\mathbb{E}[f((M_{t,jj}W_j+\boldsymbol{\Sigma}_t^{1/2}\boldsymbol{G},\bar{\hat{\boldsymbol{Q}}}_t,\bar{\boldsymbol{C}}_{t-1})^\top ]=0,
\end{equation}
$1\leq i\neq j\leq k$, so $\bar{\boldsymbol{\Sigma}}$ and $\bar{\boldsymbol{Q}}_t$ are diagonal, where $\mathbb{E}[f(M_{t,ii}W_i+\sqrt{\Sigma_{t,ii}}G,\bar{\hat{\boldsymbol{Q}}}_t,\bar{\boldsymbol{C}}_{t-1})]=0$ because $P_w$ is symmetric and $f(\cdot,\bar{\hat{\boldsymbol{Q}}}_t,\bar{\boldsymbol{C}}_{t-1})$ is an odd function. As $f(\cdot,\bar{\hat{\boldsymbol{Q}}}_t,\bar{\boldsymbol{C}}_{t-1})$ is separable, $\bar{\boldsymbol{B}}_t$ is diagonal.

When $P_h$ is separable and symmetric, $\eta_1$ is an odd function, and thus $g(\cdot,\bar{\boldsymbol{B}}_t)$ is an odd function. Similarly, $\boldsymbol{M}_{t+1},\boldsymbol{\Sigma}_{t+1},\bar{\boldsymbol{C}}_t$ are diagonal. Finally, as $\bar{\boldsymbol{Q}}_t$ is diagonal and $P_h$ is separable and symmetric, $\nabla\eta_2(\bar{\boldsymbol{Q}}_t)$ is diagonal, so $\bar{\hat{\boldsymbol{Q}}}_t$ is diagonal, which finishes the proof.
\end{proof}

\subsection{Proof of Theorem \ref{theo:GD}}
\label{app:proof_GD}
We first detail the definition of the functions $\mathring{f}_t,\mathring{g}_{t+1}:\mathbb{R}^{k\times t}\to\mathbb{R}^k$ as the following. We define
\begin{equation}
\begin{aligned}
&\mathring{f}_t(\mathring{\boldsymbol{Y}}_t,\mathring{\boldsymbol{Y}}_{t-1},\cdots,\mathring{\boldsymbol{Y}}_1)=\tilde{f}(\mathring{\boldsymbol{Y}}_t+\sum_{i=1}^{t-1}\mathring{\boldsymbol{B}}_{ti}\mathring{\boldsymbol{U}}_i),\\ &\mathring{g}_{t+1}(\mathring{\boldsymbol{Z}}_t,\mathring{\boldsymbol{Z}}_{t-1},\cdots,\mathring{\boldsymbol{Z}}_1)=\tilde{g}(\mathring{\boldsymbol{Z}}_t+\sum_{i=1}^{t}\mathring{\boldsymbol{C}}_{ti}\mathring{\boldsymbol{W}}_i,\mathring{\boldsymbol{W}}_t,\mathring{\boldsymbol{Q}}_t),
\end{aligned}
\end{equation}
where $\mathring{\boldsymbol{U}}_i$ should be regarded as a function of $\mathring{\boldsymbol{Y}}_i,\mathring{\boldsymbol{Y}}_{i-1},\cdots,\mathring{\boldsymbol{Y}}_1$ as defined in \eqref{eq:UW}, and similarly for $\mathring{\boldsymbol{W}}_i$. $\tilde{f},\tilde{g}$ are separable functions defined by $\tilde{f}(\boldsymbol{y}):=\nabla\eta_1(\boldsymbol{y})\in\mathbb{R}^k$ and $\tilde{g}(\boldsymbol{z},\hat{\boldsymbol{w}},\hat{\boldsymbol{Q}}):=\hat{\boldsymbol{w}}+\kappa(\boldsymbol{z}-\alpha\hat{\boldsymbol{Q}}\hat{\boldsymbol{w}})\in\mathbb{R}^k$ for $\boldsymbol{y},\boldsymbol{z},\hat{\boldsymbol{w}}\in\mathbb{R}^k$. Here $\mathring{\boldsymbol{Q}}_t=\mathbb{E}[\nabla\eta_2(\boldsymbol{Q}(\mathring{\boldsymbol{W}}_t))]\in\mathbb{R}^{k\times k}$ and Onsager coefficients $\mathring{\boldsymbol{B}}_{ti},\mathring{\boldsymbol{C}}_{ti}\in\mathbb{R}^{k\times k}$ are given by
\begin{equation}
\mathring{\boldsymbol{B}}_{t+1,i}=\alpha^{-1}\mathbb{E}[\partial_i\mathring{g}_{t+1}(\mathring{\boldsymbol{Z}}_{t},\mathring{\boldsymbol{Z}}_{t-1},\cdots,\mathring{\boldsymbol{Z}}_1)],\ \mathring{\boldsymbol{C}}_{ti}=\mathbb{E}[\partial_i\mathring{f}_t(\mathring{\boldsymbol{Y}}_t,\mathring{\boldsymbol{Y}}_{t-1},\cdots,\mathring{\boldsymbol{Y}}_1)],
\end{equation}
for $i=1,\cdots,t$, where $\partial_i\mathring{f}_t:\mathbb{R}^{k\times t}\to\mathbb{R}^{k\times k}$ denotes the partial derivative of $\mathring{f}_t$ w.r.t. $\mathring{\boldsymbol{Y}}_i$, and $\partial_i\mathring{g}_{t+1}:\mathbb{R}^{k\times t}\to\mathbb{R}^{k\times k}$ denotes the partial derivative of $\mathring{g}_{t+1}$ w.r.t. $\mathring{\boldsymbol{Z}}_i$. The derivatives can be calculated recursively
\begin{equation}
\partial_i\mathring{f}_t(\mathring{\boldsymbol{Y}}_t,\cdots,\mathring{\boldsymbol{Y}}_1)=\nabla\tilde{f}(\mathring{\boldsymbol{Y}}^t+\sum_{a=1}^{t-1}\mathring{\boldsymbol{B}}_{ta}\mathring{\boldsymbol{U}}_a)(\delta_{ti}\boldsymbol{I}_{k\times k}+\sum_{j=i}^{t-1}\mathring{\boldsymbol{B}}_{tj}\partial_i\mathring{f}_j(\mathring{\boldsymbol{Y}}_j,\cdots,\mathring{\boldsymbol{Y}}_1)),
\end{equation}
\begin{equation}
\begin{aligned}
\partial_i\mathring{g}^{t+1}(\mathring{\boldsymbol{Z}}_{t},\cdots,\mathring{\boldsymbol{Z}}_1)&=\delta_{ti}\partial_1g(\mathring{\boldsymbol{Z}}_t+\sum_{a=1}^{t-1}\mathring{\boldsymbol{C}}_{ta}\mathring{\boldsymbol{W}}_a,\mathring{\boldsymbol{W}}_t)+\sum_{j=i+1}^{t-1}\mathring{\boldsymbol{C}}_{tj}\partial_1\tilde{g}(\mathring{\boldsymbol{Z}}_t+\sum_{a=1}^{t-1}\mathring{\boldsymbol{C}}_{ta}\mathring{\boldsymbol{W}}_a,\mathring{\boldsymbol{W}}_t)\partial_i\mathring{g}^{j}(\mathring{\boldsymbol{Z}}_{j},\cdots,\mathring{\boldsymbol{Z}}_1)\\
&+\partial_2\tilde{g}(Z^t+\sum_{a=1}^{t-1}\mathring{\boldsymbol{C}}_{ta}\mathring{\boldsymbol{W}}_a,\mathring{\boldsymbol{W}}_t)\partial_i\mathring{g}^t(\mathring{\boldsymbol{Z}}_{t-1},\cdots,\mathring{\boldsymbol{Z}}_1),
\end{aligned}
\end{equation}
where $\nabla\tilde{f}:\mathbb{R}^k\to\mathbb{R}^{k\times k}$ denotes the derivative of $\tilde{f}$, and $\partial_1\tilde{g},\partial_2\tilde{g}:\mathbb{R}^{k\times2}\to\mathbb{R}^{k\times k}$ denotes the derivatives of $\tilde{g}$ w.r.t. its first and second variable. 

We can notice that \eqref{eq:GD} can be rewritten as
\begin{equation}
\begin{aligned}
&\boldsymbol{Y}^t=\frac{1}{\sqrt{n}}\boldsymbol{X}\tilde{\boldsymbol{W}}^t\in\mathbb{R}^{n\times k},\ \tilde{\boldsymbol{U}}^t=\tilde{f}(\boldsymbol{Y}^t)\in\mathbb{R}^{n\times k},\ \hat{\boldsymbol{Q}}^t=\nabla\eta_2(\boldsymbol{Q}(\tilde{\boldsymbol{W}}^t))\in\mathbb{R}^{k\times k}\\
&\boldsymbol{Z}^t=\frac{1}{\sqrt{n}}\boldsymbol{X}^\top \tilde{\boldsymbol{U}}^t\in\mathbb{R}^{d\times k},\ \tilde{\boldsymbol{W}}^{t+1}=\tilde{g}(\boldsymbol{Z}^t,\tilde{\boldsymbol{W}}^t,\hat{\boldsymbol{Q}}^t)\in\mathbb{R}^{d\times k}.\\
\end{aligned}
\label{eq:AMP-RBM}
\end{equation}
Then it suffices to map \eqref{eq:AMP-RBM} to a general AMP iteration as the following
\begin{equation}
\begin{aligned}
&\tilde{\boldsymbol{Y}}^t=\frac{1}{\sqrt{n}}\boldsymbol{X}\tilde{\boldsymbol{W}}^t-\sum_{i=1}^{t-1}\mathring{\boldsymbol{B}}_{ti}\tilde{\boldsymbol{U}}^i,\ \tilde{\boldsymbol{U}}^t=\mathring{f}_t(\tilde{\boldsymbol{Y}}^t,\tilde{\boldsymbol{Y}}^{t-1},\cdots,\tilde{\boldsymbol{Y}}^1),\ \hat{\boldsymbol{Q}}^t=\nabla\eta_2(\boldsymbol{Q}(\tilde{\boldsymbol{W}}^t))\\\
&\tilde{\boldsymbol{Z}}^t=\frac{1}{\sqrt{n}}\boldsymbol{X}^\top \tilde{\boldsymbol{U}}^t-\sum_{i=1}^{t-1}\mathring{\boldsymbol{C}}_{ti}\tilde{\boldsymbol{W}}^i,\ \tilde{\boldsymbol{W}}^{t+1}=\mathring{g}_{t+1}(\tilde{\boldsymbol{Z}}^t,\tilde{\boldsymbol{Z}}^{t-1},\cdots,\tilde{\boldsymbol{Z}}_1).\\
\end{aligned}\
\end{equation}
Therefore, the first part of Theorem \ref{theo:GD} simply follows from Theorem \ref{theo:general AMP}. The second part of Theorem \ref{theo:GD} is the same as Theorem \ref{theo:main}, so we omit its proof.

\section{From stationary points to global optimum}
\label{app:global}
In this section, we will generalize the results of \cite{vilucchio2025asymptotics} to the spiked covariance model with non-separable constraints. We will show that for a special case, the global optimum is among the stationary points we describe in Theorem \ref{theo:main}. The idea is that Theorem \ref{theo:main} provides an upper bound of the minimum, and we can find a matching lower bound by Gordon’s Gaussian comparison inequality \citep{gordon_1988}. Specifically, we will consider the data model with one spike and Gaussian noise, i.e.,
\begin{equation}
\boldsymbol{X}=\frac{\lambda}{\sqrt{d}}\boldsymbol{u}^*(\boldsymbol{w}^*)^\top +\boldsymbol{Z}\in\mathbb{R}^{n\times d},
\end{equation}
where $\{u_i^*\}_{i=1}^n\overset{iid}{\sim}P_u$, $\{w_i^*\}_{i=1}^d\overset{iid}{\sim}P_w$ and $\{Z_{ij}\}_{i,j=1}^{n,d}\overset{iid}{\sim}\mathcal{N}(0,1)$. We will consider the following optimization problem
\begin{equation}
\begin{aligned}
\mathcal{A}_d&:=\inf_{\boldsymbol{w}\in\mathbb{R}^d}\mathcal{L}(\boldsymbol{w})\\
&:=\inf_{\boldsymbol{w}\in\mathbb{R}^d}\frac{1}{d}\sum_{\mu=1}^n\eta_1\left(\frac{1}{\sqrt{d}}\boldsymbol{x}_\mu^\top \boldsymbol{w}\right)+\frac{1}{d}\sum_{i=1}^d\eta_2\left(w_i,\frac{||\boldsymbol{w}||_r^r}{d}\right)
\end{aligned}
\label{eq:opt_global}
\end{equation}
as a special case of \eqref{eq:opt} with $k=1$. We change some scalings for notational convenience. Although \eqref{eq:opt} only requires $r=2$, we find that it is more convenient to state the general result for any $r>0$.
\begin{remark}
It is easy to modify our results to include the biases of RBM. However, the restrictions $k=1$ and Gaussian $\bZ$ cannot be easily generalized as they are required by the Gordon’s Gaussian comparison inequality (Lemma \ref{lemma:Gordon}). We expect that this is a purely technical problem and can be resolved in the future. 
\end{remark}

Our first assumption concerning \eqref{eq:opt_global} is the following.
\begin{assumption}
The global minimum $\boldsymbol{w}$ of \eqref{eq:opt_global} satisfies $\frac{1}{d}||\boldsymbol{w}||^2,\frac{1}{d}||\boldsymbol{w}||_r^r<b$ almost surely for some $b>0$.
\label{assum:b_bound}
\end{assumption}

Assumption \ref{assum:b_bound} actually requires that $\eta_1$ grows much slower than $\eta_2$. We can take the RBM with Bernoulli hidden units as an example to justify it. In this case, we have $\eta_1(x)=-\frac{1}{\alpha}\log\cosh(\sqrt{\alpha}x)$ and $\eta_2(w)=w^2$. As $\eta_1(x)$ grows almost linearly for large $x$, for any $\varepsilon>0$, we have
\begin{equation}
\frac{1}{d}\sum_{\mu=1}^n\eta_1\left(\frac{1}{\sqrt{n}}\boldsymbol{x}_\mu^\top \boldsymbol{w}\right)\geq -\frac{1}{d}\varepsilon\left|\left|\frac{1}{\sqrt{n}}\boldsymbol{X}\boldsymbol{w}\right|\right|^2\geq-\frac{1}{d}\epsilon(\lambda_{max}+\lambda)||\boldsymbol{w}||^2
\end{equation}
for $\frac{1}{d}||\boldsymbol{w}||^2>M$ and some $M>0$, where $\lambda_{max}$ represents the largest eigenvalue of $\frac{1}{n}\bZ^\top \bZ$. By the standard concentration results for Wishart matrices (see e.g. \cite[Proposition A.1]{xu2025fundamental}), we have $\lambda_{max}<M'$ almost surely for some $M'>0$ and all $d$. Therefore, we almost surely have
\begin{equation}
\mathcal{L}(\boldsymbol{w})\geq\frac{1}{d}(1-\epsilon M')||\boldsymbol{w}||^2>0
\end{equation}
for $\frac{1}{d}||\boldsymbol{w}||^2>M$ by choosing $\epsilon<\frac{1}{M'}$. As $\mathcal{L}(0)=0$, Assumption \ref{assum:b_bound} holds for $b:=M$. 

Our main result, Theorem \ref{theo:global_minimum}, provides an exact description of the global minimum of \eqref{eq:opt_global}. Before doing so, we need to give some notations. We first define the potential function
\begin{equation}
\label{eq:potential}
\begin{aligned}
\mathcal{E}(m,q,p,\tau,\kappa,\nu,\chi,\phi):&=\frac{\kappa\tau}{2}+\alpha\mathbb{E}\mathcal{M}_{\frac{\tau}{\kappa}\eta_1}(\lambda mU^*+qG)+\mathbb{E}\mathcal{M}_{\frac{1}{\chi}(\eta_2(\cdot,p)+\phi(\cdot)^r)}\left(\frac{1}{\chi}(\nu W^*+\kappa H)\right)\\&-\frac{1}{2\chi}(\nu^2\rho+\kappa^2)+\nu m-\frac{1}{2}\chi q^2-\phi p,
\end{aligned}
\end{equation}
where the expectation is w.r.t. independent random variables $G,H\sim\mathcal{N}(0,1),U^*\sim P_u,W^*\sim P_w$ and we define $\rho:=\mathbb{E}[(W^*)^2]$. We also use the following notation to represent the Moreau envelope
\begin{equation}
\mathcal{M}_{\tau f}(x)=\inf_{y\in\mathbb{R}}\left[f(y)+\frac{1}{2\tau}(y-x)^2\right]
\end{equation}
for some function $f$ and scalar $\tau$. We define the corresponding minimizer as
\begin{equation}
\mathcal{P}_{\tau f}(x)\in\arg\inf_{y\in\mathbb{R}}\left[f(y)+\frac{1}{2\tau}(y-x)^2\right].
\end{equation}
Finally, we define the set
\begin{equation}
S:=\{\hat{m},\hat{q},\hp,\hat{\tau},\hat{\kappa},\hat{\nu},\hat{\chi},\hphi:\mathcal{E}(\hat{m},\hat{q},\hp,\hat{\tau},\hat{\kappa},\hat{\nu},\hat{\chi},\hphi)=\sup_{\kappa,\nu,\chi,\phi}\inf_{p,q\leq b,m,\tau}\mathcal{E}(m,q,p,\tau,\kappa,\nu,\chi,\phi)\}.
\end{equation}

There are two additional assumptions.
\begin{assumption}
\label{assum:global_lipshitz}
$P_u,P_w$ are sub-Gaussian. $\eta_1$ is $PL(2)$. $\eta_2$ is $PL(2)$ w.r.t. its first variable and continuous w.r.t. its second variable. Moreover, there exists $(\hat{m},\hat{q},\hp,\hat{\tau},\hat{\kappa},\hat{\nu},\hat{\chi},\hphi)\in S$ such that we can choose (as the minimizer might not be unique) $\mathcal{P}_{\frac{\hat{\tau}}{\hat{\kappa}}\eta_1}$ and $\mathcal{P}_{\frac{1}{\hat{\chi}}(\eta_2(\cdot,\hp)+\hphi(\cdot)^r)}$ to be Lipschitz continuous, uniformly at $(\hp,\htau,\hkappa,\hphi)$.
\end{assumption}
\begin{assumption}[Replicon Stability]
\label{assum:RS}
Under Assumption \ref{assum:global_lipshitz}, $(\hat{m},\hat{q},\hp,\hat{\tau},\hat{\kappa},\hat{\nu},\hat{\chi},\hphi)$ also satisfies
\begin{equation}
\alpha\mathbb{E}\left[ \mathcal{P}_{\frac{\htau}{\hkappa}\eta_1}'\left(\frac{\hnu}{\hchi}W^*+\frac{\hat{\kappa}}{\hchi}H\right)^2\right]\mathbb{E}\left[\left(\mathcal{P}'_{\frac{1}{\hat{\chi}}(\eta_2(\cdot,\hp)+\hphi(\cdot)^r)}(\lambda \hm U^*+\hq G)-1\right)^2\right]<1,
\end{equation}
where the expectation is w.r.t. $W^*\sim P_w, U^*\sim P_u$ and $G,H\sim\mathcal{N}(0,1)$.
\end{assumption}
Note that under Assumption \ref{assum:global_lipshitz}, the derivatives of the proximal operators are well-defined almost everywhere. While Assumption \ref{assum:global_lipshitz} is mainly technical, Assumption \ref{assum:RS} is essential and related to the inherent difficulty of the optimization problem, which needs to be verified numerically. See discussions in \cite{vilucchio2025asymptotics}. Now we present our main results, which give sharp asymptotics for the global minimum of \eqref{eq:opt_global}.
\begin{theorem}
\label{theo:global_minimum}
Under Assumptions \ref{assum:b_bound} to \ref{assum:RS},
\begin{itemize}
    \item[(i)] 
    We almost surely have
    \begin{equation}
\lim_{d\to\infty}\mathcal{A}_d=\mathcal{A}:=\sup_{\kappa,\nu,\chi,\phi}\inf_{p,q\leq b,m,\tau}\mathcal{E}(m,q,p,\tau,\kappa,\nu,\chi,\phi)
    \end{equation}
    \item[(ii)] For any $PL(2)$ function $\psi$, if 
    \begin{equation}
    \begin{aligned}
\Psi(s):&=\sup_{\kappa,\nu,\chi,\phi}\inf_{p,q\leq b,m,\tau}\left[\mathcal{E}(m,q,p,\tau,\kappa,\nu,\chi,\phi)+s\mathbb{E}\psi\left(W^*,\mathcal{P}_{\frac{1}{\chi}(\eta_2(\cdot,p)+\phi(\cdot)^r)}\left(\frac{1}{\chi}(\nu W^*+\kappa H)\right)\right)\right]
    \end{aligned}
    \end{equation}
    is differentiable at $s=0$, then we almost surely have
    \begin{equation}
        \lim_{d\to\infty}\frac{1}{d}\sum_{i=1}^d\psi\left(w_i^*,w_i\right)=\frac{\rd}{\rd s}\Psi(s)\Big|_{s=0},
    \end{equation}
    where $\boldsymbol{w}$ is the global minimum of \eqref{eq:opt_global} and the expectation is w.r.t. $W^*\sim P_w$ and $H\sim\mathcal{N}(0,1)$.
\end{itemize}
\end{theorem}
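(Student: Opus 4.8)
\emph{Overall strategy.} The plan is to sandwich $\mathcal{A}_d$ between a matching upper and lower bound, generalizing the argument of \cite{vilucchio2025asymptotics}. Theorem~\ref{theo:main}, specialized to $k=r=1$ with Gaussian noise, already produces an explicit asymptotic stationary point $\hat{\bw}$ of \eqref{eq:opt_global}; since $\hat{\bw}$ is feasible, evaluating $\mathcal{L}$ at it will give the upper bound on $\mathcal{A}_d$. The lower bound will come from Gordon's Gaussian minimax theorem \cite{gordon_1988,thrampoulidis2015regularized} (Lemma~\ref{lemma:Gordon}). The one genuinely new feature, relative to the usual convex CGMT calculation, is the non-separable dependence of $\eta_2$ on $\|\bw\|_r^r/d$; I would handle it by promoting $p := \|\bw\|_r^r/d$ to an order parameter and introducing a scalar Lagrange multiplier $\phi$ for the identity $\tfrac1d\|\bw\|_r^r = p$.

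\emph{Lower bound via Gordon.} First I would rewrite $d\,\mathcal{A}_d$ by imposing $\bh = \tfrac{1}{\sqrt d}\bX\bw$ with a multiplier $\bu\in\mathbb{R}^n$, so that, using $\bx_\mu = \tfrac{\lambda}{\sqrt d}u_\mu^*\bw^* + \bz_\mu$, the only dependence on the noise $\bZ$ enters through the bilinear form $\bu^\top\bZ\bw$. I would then fix the scalar summaries $m = \tfrac1d(\bw^*)^\top\bw$, $q$ (a proxy for $\|\bw\|/\sqrt d$) and $p = \tfrac1d\|\bw\|_r^r$, each bounded by $b$ thanks to Assumption~\ref{assum:b_bound} --- this compactness of the $\bw$-domain is exactly what Gordon's theorem needs in the absence of convexity --- and introduce multipliers $\nu,\chi,\phi$ enforcing these three identities. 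Gordon's inequality (Lemma~\ref{lemma:Gordon}) then replaces $\bu^\top\bZ\bw$ by the decoupled Gaussian process $\|\bw\|\,\bg^\top\bu + \|\bu\|\,\bh^\top\bw$ with $\bg\in\mathbb{R}^n$, $\bh\in\mathbb{R}^d$ fresh standard Gaussians, yielding a one-sided bound on $\mathcal{A}_d$ valid without any convexity. In the resulting auxiliary optimization the minimization over $\bu$ separates coordinatewise into a Moreau envelope of $\eta_1$, giving $\alpha\,\E\,\mathcal{M}_{\frac{\tau}{\kappa}\eta_1}(\lambda m U^* + qG)$; and --- because $p$ has been fixed and $\phi$ introduced, making $\tfrac1d\sum_i\eta_2(w_i,p)+\phi(\tfrac1d\|\bw\|_r^r - p)$ additive in the $w_i$ --- the minimization over $\bw$ also separates, giving $\E\,\mathcal{M}_{\frac1\chi(\eta_2(\cdot,p)+\phi(\cdot)^r)}(\tfrac1\chi(\nu W^* + \kappa H))$, the field $\nu W^* + \kappa H$ encoding the joint law of a signal coordinate $W^*\sim P_w$ and an independent Gaussian. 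Collecting the leftover quadratic and linear terms in the multipliers recovers exactly $\mathcal{E}$ of \eqref{eq:potential}, so $\liminf_{d\to\infty}\mathcal{A}_d \ge \mathcal{A}$ almost surely, the almost-sure upgrade following from Gaussian concentration and Borel--Cantelli.

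\emph{Upper bound and identification of the saddle.} For the converse I would use Theorem~\ref{theo:main} (equivalently, the AMP achievability underlying it): AMP-RBM converges to a stationary point $\hat\bw$ whose coordinate statistics are governed by the state-evolution fixed point, so $\mathcal{L}(\hat\bw)$ converges to a deterministic value read off from that fixed point. The key algebraic step is to verify that the state-evolution fixed-point equations are, term by term, the stationarity conditions of $\mathcal{E}$ over $(m,q,p,\tau,\kappa,\nu,\chi,\phi)$, under the identifications of $f$ and of the $\eta_1$-denoiser with the proximal maps $\mathcal{P}_{\frac1\chi(\eta_2(\cdot,p)+\phi(\cdot)^r)}$ and $\mathcal{P}_{\frac{\tau}{\kappa}\eta_1}$; Assumption~\ref{assum:RS} (replicon stability) is essential here, as it guarantees the relevant saddle of $\mathcal{E}$ is the one selected by AMP and excludes a more complex (glassy) landscape, so that $\mathcal{L}(\hat\bw)\to\mathcal{A}$ and $\limsup_{d\to\infty}\mathcal{A}_d \le \mathcal{A}$; with the lower bound this proves part~(i). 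For part~(ii), writing $\mathcal{A}_d(s)$ for the infimum of the $s$-perturbed objective, the perturbation $\tfrac{s}{d}\sum_i\psi(w_i^*,w_i)$ is separable and so rides through both the Gordon computation and the AMP achievability unchanged except for adding $s\,\psi$ to the $\bw$-Moreau envelope, whence $\mathcal{A}_d(s)\to\Psi(s)$ for small $s$; evaluating the perturbed objective at the unperturbed minimizer gives $\mathcal{A}_d(s)-\mathcal{A}_d(0)\le \tfrac{s}{d}\sum_i\psi(w_i^*,w_i)$, and dividing by $s$ (with the sign flip for $s<0$), sending $d\to\infty$ and then $s\to 0^\pm$, the assumed differentiability of $\Psi$ at $0$ squeezes $\tfrac1d\sum_i\psi(w_i^*,w_i)$ to $\Psi'(0)$ from both sides.

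\emph{Main obstacle.} I expect the delicate part to be the joint treatment of the non-separable term and the non-convexity: one must choose the multiplier $\phi$ and order parameter $p$ so that the Gordon auxiliary problem genuinely separates over $\bw$, and then show that the ensuing finite-dimensional fixed-point system coincides with \emph{both} the saddle of $\mathcal{E}$ \emph{and} the state evolution of AMP-RBM. It is exactly in this second matching --- and in ruling out the glassy scenario --- that Assumption~\ref{assum:RS} does the real work, and where the argument departs most from the textbook convex CGMT template; the remaining steps are careful but routine.
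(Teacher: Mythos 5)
Your overall scheme — Gordon lower bound, AMP upper bound, CGMT-style perturbation for part (ii), and the use of the auxiliary order parameter $p$ with multiplier $\phi$ to render the non-separable $\eta_2$ coordinate-wise — is exactly the paper's. The Gordon computation you sketch, with the $\tau$ relaxation of the norm and collection into Moreau envelopes, is the content of the paper's Lemma~\ref{lemma:lower_boound}. One thing you elide: to pass the a.s. limit through the $\sup\inf$, one must show the optimal order parameters lie in a fixed compact set for all $d$; Assumption~\ref{assum:b_bound} handles $m,q,p$, but $\tau,\kappa,\nu,\chi,\phi$ each need a short separate argument (pseudo-Lipschitz growth of $\eta_1$ for $\kappa$, divergence of $\mathcal{E}_d$ as $\nu,\chi,\phi\to\infty$, a Moreau-envelope bound at $\tau\to\infty$) before equicontinuity gives uniform a.s. convergence of $\mathcal{E}_d\to\mathcal{E}$.

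The genuine gap is in the upper bound. You propose to "use Theorem~\ref{theo:main}", but Theorem~\ref{theo:main} is proved \emph{under} Assumption~\ref{assum:convergence} (uniform convergence of Algorithm~\ref{alg:AMP-RBM}), which is not among the hypotheses of Theorem~\ref{theo:global_minimum}; only Assumptions~\ref{assum:b_bound}--\ref{assum:RS} are available. Accordingly, the paper's Lemma~\ref{lemma:upper bound} does not cite Theorem~\ref{theo:main}. It instead constructs a bespoke AMP whose denoisers $f^*,g^*$ are the proximal maps $\mathcal{P}_{\frac{\hat\tau}{\hat\kappa}\eta_1}$ and $\mathcal{P}_{\frac{1}{\hat\chi}(\eta_2(\cdot,\hat p)+\hat\phi(\cdot)^r)}$ \emph{frozen} at the Gordon saddle; by construction the single-time state-evolution parameters $(M_t,\Sigma_t,N_t,\Omega_t)$ are already stationary from $t=1$, so convergence in $t$ reduces to showing that the auxiliary two-time recursion on the overlaps $(C_t,\hat C_t)$ is a contraction — and that contraction is precisely what Assumption~\ref{assum:RS} certifies via the bound on the product of expected squared prox derivatives. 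In other words, RS is used here to \emph{prove} the convergence that Assumption~\ref{assum:convergence} would otherwise have to assume. Your parenthetical "(equivalently, the AMP achievability underlying it)" shows you sense this, but as written the argument either invokes a theorem whose hypothesis you do not have, or quietly assumes precisely what RS is meant to deliver. Replacing the appeal to Theorem~\ref{theo:main} with the frozen-prox AMP, the matching of its state-evolution fixed point to the stationarity conditions of $\mathcal{E}$, and the $(C_t,\hat C_t)$ contraction under RS is the missing step.
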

Note that the second part implies that if $\mathcal{E}$ is regular enough such that we can interchange the derivative and the $\sup\inf$, we should have
\begin{equation}
\lim_{d\to\infty}\frac{1}{d}\sum_{i=1}^d\psi\left(w_i^*,w_i\right)=\mathbb{E}\psi\left(W^*,\mathcal{P}_{\frac{1}{\hat{\chi}}\eta_2}\left(\frac{1}{\hat{\chi}}(\hat{v}W^*+\hat{\kappa}H)\right)\right),
\end{equation}
which is reminiscent of Theorem \ref{theo:main}.

As in \cite[Section D.1]{vilucchio2025asymptotics}, Theorem \ref{theo:global_minimum} results from the combination of the following two lemmas for lower and upper bounds. The lower bound is obtained via Gordon's Gaussian comparison inequality, and the upper bound is obtained via a variant of Algorithm \ref{alg:AMP-RBM}.

\begin{lemma}
Under Assumptions \ref{assum:b_bound} and \ref{assum:global_lipshitz}, for any $\varepsilon>0$,
\begin{equation}
    \mathbb{P}[\liminf_{d\to\infty}\mathcal{A}_d\geq A-\varepsilon]=1.
\end{equation}
\label{lemma:lower_boound}
\end{lemma}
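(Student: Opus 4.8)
\emph{Proof plan.} The route is a Gaussian comparison (Gordon / CGMT) argument in its one-sided, \emph{convexity-free} form, which is precisely what suffices for a lower bound on an infimum. Everything below is carried out conditionally on the signals $(\bw^*,\bu^*)$, which are independent of the noise $\bZ$; the resulting almost-sure statements then combine with the randomness of the signals by Fubini.

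\emph{Step 1 (compactification, isolation of the bilinear form).} By Assumption~\ref{assum:b_bound}, outside an event of vanishing probability the infimum in \eqref{eq:opt_global} is attained on $K_d:=\{\bw:\ \|\bw\|^2/d\le b,\ \|\bw\|_r^r/d\le b\}$. Writing $\tfrac1{\sqrt d}\bx_\mu^\top\bw=\lambda m\,u_\mu^*+\tfrac1{\sqrt d}(\bZ\bw)_\mu$ with $m:=\langle\bw^*,\bw\rangle/d$, I introduce the field $\bp:=\tfrac1{\sqrt d}\bZ\bw\in\mathbb{R}^n$ and the scalar order parameters $m$ and $p:=\|\bw\|_r^r/d$, imposing $\bp=\tfrac1{\sqrt d}\bZ\bw$, $m=\langle\bw^*,\bw\rangle/d$ and $p=\|\bw\|_r^r/d$ via Lagrange multipliers $\bv\in\mathbb{R}^n$, $\nu\in\mathbb{R}$, $\phi\in\mathbb{R}$. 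Since $\|\bZ/\sqrt d\|_{\mathrm{op}}$ is almost surely bounded and $\bw\in K_d$, the field $\bp$ lies in a fixed ball, so its minimization can be restricted to a compact set; truncating $\|\bv\|\le R\sqrt n$ and sending $R\to\infty$ after $d$ recovers the hard constraint. On the good event this puts $\mathcal{A}_d$ into the min--max form
\[
\mathcal{A}_d=\min_{\bw\in K_d,\ \bp,\,m,\,p}\ \max_{\|\bv\|\le R\sqrt n,\ \nu,\,\phi}\ \Big[\tfrac1{d\sqrt d}\,\bv^\top\bZ\bw+\Psi_d(\bw,\bp,m,p,\bv,\nu,\phi)\Big],
\]
where $\Psi_d$ gathers $\tfrac1d\sum_\mu\eta_1(\lambda m u_\mu^*+p_\mu)$, $\tfrac1d\sum_i\eta_2(w_i,p)$, the coupling $-\tfrac1d\bv^\top\bp$, and the (bounded, continuous) constraint terms for $m$ and $p$; all variables range over fixed compact sets.

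\emph{Step 2 (Gordon's inequality and scalarization).} Conditionally on $(\bw^*,\bu^*)$, the Gaussian min--max inequality (Lemma~\ref{lemma:Gordon}; cf.\ \cite{gordon_1988,thrampoulidis2015regularized}) applies on these compact domains with \emph{no convexity hypothesis} and gives, for every $c$, $\mathbb{P}(\mathcal{A}_d\le c)\le 2\,\mathbb{P}(\mathrm{AO}_d\le c)$, where $\mathrm{AO}_d$ is the above expression with $\bv^\top\bZ\bw$ replaced by $\|\bw\|\,\bg^\top\bv+\|\bv\|\,\bh^\top\bw$, with $\bg\sim\mathcal{N}(0,\boldsymbol{I}_n)$, $\bh\sim\mathcal{N}(0,\boldsymbol{I}_d)$ independent of $(\bw^*,\bu^*)$. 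I then scalarize $\mathrm{AO}_d$: once $\|\bw\|$, $\|\bv\|$ and $m$ are fixed the coordinates of $\bw$ and of $\bv$ decouple, and after also introducing the remaining Lagrange multipliers $\tau,\kappa,\chi$ for the norm constraints (the precise bookkeeping being as in \cite{thrampoulidis2015regularized,vilucchio2025asymptotics}) the separable inner minimizations produce exactly the two Moreau envelopes of \eqref{eq:potential}: $\tfrac1d\sum_i\mathcal{M}_{\frac1\chi(\eta_2(\cdot,p)+\phi(\cdot)^r)}\!\big(\tfrac1\chi(\nu w_i^*+\kappa h_i)\big)$ from the $\bw$-side and $\tfrac1d\sum_\mu\mathcal{M}_{\frac\tau\kappa\eta_1}(\lambda m u_\mu^*+q g_\mu)$ from the $\bv$-side (with $q\sim\|\bw\|/\sqrt d$), together with the quadratic/linear remainder of $\mathcal{E}$. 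By Assumption~\ref{assum:global_lipshitz} ($P_u,P_w$ sub-Gaussian, $\eta_1,\eta_2$ pseudo-Lipschitz of order $2$), the strong law for $PL(2)$-averages holds uniformly over the compact parameter ranges, with exponential concentration via a finite $\epsilon$-net plus a union bound; hence $\mathrm{AO}_d\to\sup_{\kappa,\nu,\chi,\phi}\inf_{p,q\le b,m,\tau}\mathcal{E}(m,q,p,\tau,\kappa,\nu,\chi,\phi)=A$ almost surely, where the validity of this $\min$--$\max$ ordering on $\{q,p\le b\}$ comes from the convex--concave structure of $\mathcal{E}$ in its $(m,q,p,\tau)$ versus $(\kappa,\nu,\chi,\phi)$ blocks inherited from the Moreau envelopes, and only the bound $\mathrm{AO}_d\ge A-\varepsilon$ is actually needed.

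\emph{Step 3 (conclusion) and main obstacle.} Taking $c=A-\varepsilon$ in Step~2 gives $\mathbb{P}(\mathcal{A}_d\le A-\varepsilon)\le 2\,\mathbb{P}(\mathrm{AO}_d\le A-\varepsilon)$, and by Step~2 the right-hand side is summable in $d$; Borel--Cantelli then yields $\liminf_{d\to\infty}\mathcal{A}_d\ge A-\varepsilon$ almost surely. I expect the main obstacle to be the scalarization: unlike the fully separable penalty of \cite{vilucchio2025asymptotics}, here $\eta_2$ couples all coordinates through $p=\|\bw\|_r^r/d$, which forces the extra order parameter $p$ with multiplier $\phi$ and a \emph{constrained} Moreau envelope $\mathcal{M}_{\frac1\chi(\eta_2(\cdot,p)+\phi(\cdot)^r)}$; alongside this one must carefully justify the truncation of the dual variable $\bv$, the uniform-in-parameters concentration of the reduced functional, and the legitimacy of the $\min$--$\max$ exchange on the restricted domain $\{q,p\le b\}$, which is exactly where the Lipschitzness of the proximal operators in Assumption~\ref{assum:global_lipshitz} is used.
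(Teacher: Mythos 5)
Your proposal follows essentially the same route as the paper's proof: reformulate $\mathcal{A}_d$ as a bilinear min--max by introducing Lagrange multipliers for $\by$ (your $\bp$), $m$, and $p$, apply Gordon's Gaussian comparison inequality conditionally on the signals, scalarize the auxiliary problem via additional multipliers $\tau,\kappa,\nu,\chi$ to surface the two Moreau envelopes of \eqref{eq:potential}, and finish by concentration of $\mathcal{E}_d$ uniformly over a compact parameter range.

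Two small remarks. First, your appeal to ``convex--concave structure'' to justify the $\min$--$\max$ ordering is both unnecessary and unavailable here ($\eta_1,\eta_2$ are only $PL(2)$, not convex); the paper simply uses the one-sided inequality $\min\max\ge\max\min$ together with the bound $\|\bv\|\ge\inf_\tau(\tfrac{\tau}{2}+\tfrac{\|\bv\|^2}{2\tau})$ to obtain a lower bound on $\tilde{\mathcal{A}}_d$, and you correctly note at the end that only this direction is needed. Second, you handle compactness of the dual direction by a truncation $\|\bv\|\le R\sqrt n$ followed by $R\to\infty$, whereas the paper instead derives a posteriori almost-sure bounds on $\kappa,\nu,\chi,\phi,\tau$ from Assumption~\ref{assum:b_bound} and the growth of $\eta_1$ before invoking uniform convergence on the resulting compact set; both routes are legitimate, but the a posteriori bounds are what actually make the $\epsilon$-net/uniform concentration step in your Step~2 go through, so they would need to be supplied in a full write-up.
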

\begin{lemma}
Under Assumptions \ref{assum:b_bound} to \ref{assum:RS}, there exists an algorithm that outputs $\boldsymbol{w}^t$, which converges uniformly, satisfying
\begin{equation}
\lim_{t\to\infty}\lim_{d\to\infty}\frac{1}{d}\sum_{i=1}^d\psi\left(w_i^*,w_i^t\right)=\mathbb{E}\psi\left(W^*,\mathcal{P}_{\frac{1}{\hat{\chi}}\eta_2(\cdot,q)}\left(\frac{1}{\hat{\chi}}(\hat{v}W^*+\hat{\kappa}H)\right)\right)
\end{equation}
for any $PL(2)$ function $\psi$ and
\begin{equation}
\lim_{t\to\infty}\lim_{d\to\infty}\mathcal{L}(\boldsymbol{w}^t)=\mathcal{A}.
\end{equation}
\label{lemma:upper bound}
\end{lemma}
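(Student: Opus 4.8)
The plan is to realize the algorithm as a Lagrangian variant of AMP-RBM (Algorithm~\ref{alg:AMP-RBM}) specialized to $k=1$ with Gaussian $\bZ$, and to extract the two claimed limits from its state evolution. Since the penalty $\eta_2(w_i,\|\bw\|_r^r/d)$ is non-separable, I would first introduce a scalar order parameter $p$ intended to track $\|\bw\|_r^r/d$ together with a multiplier $\phi$, which turns the per-coordinate penalty into the separable map $w\mapsto \eta_2(w,p)+\phi|w|^r$; one then runs a spiked-matrix AMP in the sense of Appendix~\ref{app:generalAMP} whose $\bw$-side denoiser is the proximal operator of $\tfrac1\chi(\eta_2(\cdot,p)+\phi(\cdot)^r)$ and whose $\bu$-side denoiser is the proximal operator of $\tfrac\tau\kappa\eta_1$ --- exactly the separable minimizers of \eqref{eq:denoiser_f} after the Lagrangian substitution --- while $(p,\phi)$ is refreshed at the end of each round as in the $\hat\bQ$-update of Algorithm~\ref{alg:AMP-RBM}. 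The initialization is taken informatively (a vanishing warm start toward $\bw^*$, or a spectral estimate in the spirit of \cite{montanari2021estimation}); this is admissible because Lemma~\ref{lemma:upper bound} only asserts the existence of an achieving sequence.

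Next I would invoke the state evolution of this iteration, a special case of Theorem~\ref{theo:general AMP}, which collapses the high-dimensional dynamics to a scalar recursion in $(m_t,q_t,p_t,\tau_t,\kappa_t,\nu_t,\chi_t,\phi_t)$. The key computation is to verify that the fixed-point equations of this scalar recursion coincide, term by term, with the stationarity conditions $\nabla\mathcal{E}=0$ that define the set $S$ --- the algebraic reconciliation between Moreau-envelope derivatives and AMP Onsager/overlap identities carried out in \cite{vilucchio2025asymptotics,celentano2020estimation,gerbelot2022asymptotic}, now supplemented by the two extra equations coming from $(p,\phi)$ --- so that any fixed point is some $(\hat m,\hat q,\hat p,\hat\tau,\hat\kappa,\hat\nu,\hat\chi,\hat\phi)\in S$. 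Having matched the fixed points, the Replicon Stability hypothesis (Assumption~\ref{assum:RS}) enters: its left-hand side is precisely the spectral radius of the Jacobian of the state-evolution map at the fixed point, so being $<1$ makes the fixed point locally attracting, and together with the informed initialization this yields $\lim_{t\to\infty}(m_t,q_t,\dots)=(\hat m,\hat q,\dots)$ with geometric rate; in particular $\mathring{\bW}_t\to\mathcal{P}_{\frac1{\hat\chi}(\eta_2(\cdot,\hat p)+\hat\phi(\cdot)^r)}\big(\tfrac1{\hat\chi}(\hat\nu W^*+\hat\kappa H)\big)$ in $L^2$.

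The two displayed limits then follow. For the overlap, Theorem~\ref{theo:general AMP} (equivalently Theorem~\ref{theo:AMP-RBM}) gives $\tfrac1d\sum_i\psi(w_i^*,w_i^t)\to\mathbb{E}[\psi(W^*,\mathring{\bW}_t)]$ almost surely for each $t$, and letting $t\to\infty$ while using the $PL(2)$ growth of $\psi$ and the $L^2$ convergence of the state-evolution variables to pass the limit inside the expectation produces the stated value. Uniform convergence of the iterates is obtained by applying the same tracking to the joint functional $(w_i^*,w_i^t,w_i^{t'})\mapsto(w_i^t-w_i^{t'})^2$: one gets $\tfrac1d\|\bw^t-\bw^{t'}\|^2\to\mathbb{E}(\mathring{\bW}_t-\mathring{\bW}_{t'})^2\to0$ as $t,t'\to\infty$, so $\{\bw^t\}$ is asymptotically Cauchy and defines $\hat\bw(d):=\lim_t\bw^t(d)$ with $\lim_{t\to\infty}\lim_{d\to\infty}\tfrac1d\|\bw^t-\hat\bw\|^2=0$. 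For the loss, one writes $\mathcal{L}(\bw^t)$ as a $PL(2)$-type functional of the AMP quantities --- the $\eta_1$ sum through $\boldsymbol{Y}^t$ and, at the fixed point, through $\tilde{\boldsymbol h}^t=\tfrac1{\sqrt d}\bx_\mu^\top\hat\bW$ as in the proof of Theorem~\ref{theo:fixed point}; the $\eta_2$ sum through $\bw^t$ and $p_t$ --- so $\mathcal{L}(\bw^t)$ converges to an expectation over the state-evolution variables, and evaluating it at the fixed point while using $\mathcal{M}_{\tau f}(x)=f(\mathcal{P}_{\tau f}(x))+\tfrac1{2\tau}(\mathcal{P}_{\tau f}(x)-x)^2$ collapses this expectation onto $\mathcal{E}(\hat m,\hat q,\hat p,\hat\tau,\hat\kappa,\hat\nu,\hat\chi,\hat\phi)=\mathcal{A}$.

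The main obstacle is the convergence step: converting the Replicon / de Almeida--Thouless condition into a genuine proof that the scalar state-evolution recursion converges to the intended fixed point from the chosen initialization, and, beyond that, that the finite-$d$ AMP iterates $\bw^t$ converge \emph{uniformly in $d$} rather than merely that the scalar fixed point exists. This requires controlling the Jacobian spectral radius on the relevant basin (using the compactness afforded by Assumption~\ref{assum:b_bound} and the sign/oddness structure), handling the possible non-uniqueness of the proximal minimizers via the Lipschitz selections of Assumption~\ref{assum:global_lipshitz}, and checking that the $(p,\phi)$-refresh does not destabilize the dynamics. A secondary, purely computational difficulty is the loss bookkeeping --- matching every term of $\mathcal{E}$, including $-\tfrac1{2\chi}(\nu^2\rho+\kappa^2)$, $\nu m$, $-\tfrac12\chi q^2$ and $-\phi p$, with the corresponding limiting piece of $\mathcal{L}(\bw^t)$ arising from the Onsager corrections and the Lagrangian terms.
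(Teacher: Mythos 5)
Your outline shares the broad skeleton of the paper's argument (run a tailored AMP, invoke the general state evolution, match fixed points to the Gordon saddle, use the Replicon Stability condition to get convergence, and collapse the loss via the Moreau identity), but two choices you make create genuine gaps that the paper closes differently.

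First, the algorithm itself. You propose to refresh $(p,\phi)$ round by round, mimicking the $\hat{\bQ}$-update of Algorithm~\ref{alg:AMP-RBM}. The paper does something strictly simpler: it \emph{freezes} all scalar parameters at the Gordon saddle $(\hat{m},\hat{q},\hat{p},\hat{\tau},\hat{\kappa},\hat{\nu},\hat{\chi},\hat{\phi})$ from the outset, using time-independent denoisers $f^*(x)=\frac{\sqrt{\alpha}\hat{\kappa}}{\hat{\tau}\hat{\chi}}(\mathcal{P}_{\frac{\hat{\tau}}{\hat{\kappa}}\eta_1}(x)-x)$ and $g^*(x)=\sqrt{\alpha}\,\mathcal{P}_{\frac{1}{\hat{\chi}}(\eta_2(\cdot,\hat{p})+\hat{\phi}(\cdot)^r)}(x)$, together with the warm start $\bw^1=g^*\bigl(\tfrac{\hat{\nu}}{\hat{\chi}}\bw^*+\tfrac{\hat{\kappa}}{\hat{\chi}}\bh\bigr)$. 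The consequence is that the one-time marginals of the SE are \emph{constant} in $t$: $M_t=\lambda\hat{m}$, $[\boldsymbol{\Sigma}_t]_{tt}=\hat{q}^2$, $N_t=\hat{\nu}/\hat{\chi}$, $[\boldsymbol{\Omega}_t]_{tt}=\hat{\kappa}^2/\hat{\chi}^2$ from $t=1$ on, which you verify by comparing directly with the stationarity conditions \eqref{eq:saddle_potential}--\eqref{eq:saddle_potential2}. Nothing about the marginals needs to ``converge to the fixed point'' --- they are already there. Your adaptive version, by contrast, produces a time-varying SE in which you would have to establish convergence of $(m_t,q_t,p_t,\phi_t,\ldots)$ to the Gordon saddle, a separate and harder problem you do not resolve.

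Second, and more importantly, your interpretation of the Replicon Stability assumption misidentifies what still has to converge. You read RS as giving local attraction of the marginal SE fixed point (``spectral radius of the Jacobian of the SE map''), and infer $\mathring{\bW}_t\to\mathcal{P}_{\frac{1}{\hat{\chi}}(\cdot)}(\cdot)$ in $L^2$. But with the paper's frozen-parameter, warm-started algorithm, the \emph{distribution} of $\bw^t$ is the same for every $t$. Equality in distribution is not enough; one needs $\bw^t$ and $\bw^{t-1}$ to become almost-surely close. That is an assertion about the \emph{joint} law of consecutive iterates, captured by the cross-correlations $C_t:=[\boldsymbol{\Sigma}_t]_{t,t-1}$, $\hat{C}_t:=[\boldsymbol{\Omega}_t]_{t,t-1}$, which satisfy their own recursion \eqref{eq:ite_C_hC}. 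RS is exactly the condition that this \emph{two-dimensional auxiliary recursion} contracts: the paper bounds $\partial_{\hat{C}_{t-1}}C_t$ and $\partial_{C_t}\hat{C}_t$ by $\alpha^{-1}\mathbb{E}[(g^*)'(\cdot)^2]$ and $\mathbb{E}[(f^*)'(\cdot)^2]$ (using the monotone-coupling argument from Lemma~3 of \cite{vilucchio2025asymptotics}), whose product is $<1$ under Assumption~\ref{assum:RS}, giving $C_t\to\Omega$ and $\hat{C}_t\to\Sigma$. This yields $\lim_{t\to\infty}\lim_{d\to\infty}\frac{1}{d}\|\bz^t-\bz^{t-1}\|^2=0$, i.e.\ the Cauchy / uniform-convergence property. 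Your proposed step ``$\frac{1}{d}\|\bw^t-\bw^{t'}\|^2\to\mathbb{E}(\mathring{\bW}_t-\mathring{\bW}_{t'})^2\to 0$ by $L^2$-convergence of the SE variables'' asserts the conclusion of this cross-correlation argument without supplying it: the marginal SE tells you nothing about $\mathbb{E}[\mathring{\bW}_t\mathring{\bW}_{t'}]$. You flag this as ``the main obstacle,'' which is fair, but the obstacle is not that one must show the marginal recursion converges --- it is that one must run and contract the cross-correlation recursion, and the paper's frozen-parameter construction is what makes that recursion tractable. The loss bookkeeping via the Moreau identity, the matching of stationary equations, and the $\psi$-tracking step are correctly identified and match the paper.
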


\subsection{Lower bound: proof of Lemma \ref{lemma:lower_boound}}
We first lower bound the objective via Gordon's Gaussian comparison inequality.
\begin{lemma}[Gordon's Gaussian comparison inequality \citep{gordon_1988}]
Let $\boldsymbol{Z}\in\mathbb{R}^{n\times d}$ have iid standard Gaussian elements, $\boldsymbol{g}\in\mathbb{R}^n,\boldsymbol{h}\in\mathbb{R}^d$ also have iid standard Gaussian elements. For compact sets $S_u\subset\mathbb{R}^n,S_w\subset\mathbb{R}^d$ and any continuous function $\psi$ on $S_u\times S_w$, define
\begin{equation}
C(\boldsymbol{Z}):=\min_{\boldsymbol{w}\in S_w}\max_{\boldsymbol{u}\in S_u}\boldsymbol{u}^\top Z\boldsymbol{w}+\psi(\boldsymbol{w},\boldsymbol{u})
\label{eq:C(Z)}
\end{equation}
and
\begin{equation}
C(\boldsymbol{g},\boldsymbol{h}):=\min_{\boldsymbol{w}\in S_w}\max_{\boldsymbol{u}\in S_u}||\boldsymbol{w}||_2\boldsymbol{g}^\top \boldsymbol{u}-||\boldsymbol{u}||_2\boldsymbol{h}^\top \boldsymbol{w}+\psi(\boldsymbol{w},\boldsymbol{u}).
\end{equation}
Then for any $c\in\mathbb{R}$, we have
\begin{equation}
\mathbb{P}[C(\boldsymbol{Z})\leq c]\leq2\mathbb{P}[C(\boldsymbol{g},\boldsymbol{h})\leq c].
\end{equation}
\label{lemma:Gordon}
\end{lemma}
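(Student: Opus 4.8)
The statement is the one-sided Gaussian min--max (Gordon) inequality, and the plan is the classical interpolation route: the engine is a Gaussian-process comparison lemma for doubly-indexed processes (in the spirit of Slepian and Kahane), and a ``sign of an auxiliary Gaussian'' trick accounts for the factor $2$. First I would reduce to finite index sets: since $S_u,S_w$ are compact, $\psi$ is continuous, and the sample paths of $\boldsymbol{u}^\top\boldsymbol{Z}\boldsymbol{w}$ and of $\|\boldsymbol{w}\|_2\boldsymbol{g}^\top\boldsymbol{u}-\|\boldsymbol{u}\|_2\boldsymbol{h}^\top\boldsymbol{w}$ are continuous in $(\boldsymbol{w},\boldsymbol{u})$, both $C(\boldsymbol{Z})$ and $C(\boldsymbol{g},\boldsymbol{h})$ are attained and measurable; replacing $S_u,S_w$ by finite $\delta$-nets and sending $\delta\downarrow0$ (the min--maxes over nets converge uniformly by uniform continuity, and the probability bound passes to the limit with the constant $2$ unchanged), it suffices to prove the inequality when $S_u,S_w$ are finite.

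Next I would introduce $\xi\sim\mathcal{N}(0,1)$ independent of $\boldsymbol{Z},\boldsymbol{g},\boldsymbol{h}$ and define, for $\boldsymbol{w}\in S_w$ and $\boldsymbol{u}\in S_u$, the centered Gaussian processes
\begin{equation*}
X_{\boldsymbol{w},\boldsymbol{u}}:=\boldsymbol{u}^\top\boldsymbol{Z}\boldsymbol{w}+\|\boldsymbol{w}\|_2\|\boldsymbol{u}\|_2\,\xi,\qquad Y_{\boldsymbol{w},\boldsymbol{u}}:=\|\boldsymbol{w}\|_2\,\boldsymbol{g}^\top\boldsymbol{u}-\|\boldsymbol{u}\|_2\,\boldsymbol{h}^\top\boldsymbol{w}.
\end{equation*}
Using $\mathbb{E}[Z_{ij}Z_{kl}]=\delta_{ik}\delta_{jl}$ and the independence of $\boldsymbol{g},\boldsymbol{h},\xi$, a short computation gives
\begin{equation*}
\mathbb{E}[X_{\boldsymbol{w},\boldsymbol{u}}X_{\boldsymbol{w}',\boldsymbol{u}'}]-\mathbb{E}[Y_{\boldsymbol{w},\boldsymbol{u}}Y_{\boldsymbol{w}',\boldsymbol{u}'}]=\bigl(\|\boldsymbol{w}\|_2\|\boldsymbol{w}'\|_2-\boldsymbol{w}^\top\boldsymbol{w}'\bigr)\bigl(\|\boldsymbol{u}\|_2\|\boldsymbol{u}'\|_2-\boldsymbol{u}^\top\boldsymbol{u}'\bigr),
\end{equation*}
and both factors on the right are $\ge 0$ by Cauchy--Schwarz. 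Thus the two processes have equal variances, equal covariances whenever the first index agrees ($\boldsymbol{w}=\boldsymbol{w}'$), and $\mathbb{E}[X_{\boldsymbol{w},\boldsymbol{u}}X_{\boldsymbol{w}',\boldsymbol{u}'}]\ge\mathbb{E}[Y_{\boldsymbol{w},\boldsymbol{u}}Y_{\boldsymbol{w}',\boldsymbol{u}'}]$ in general --- exactly the configuration in which the comparison lemma compares $\min_{\boldsymbol{w}}\max_{\boldsymbol{u}}$ of the two processes, the one with the larger across-$\boldsymbol{w}$ correlation having the stochastically larger min--max.

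Then I would apply the comparison lemma (folding $\psi$ into the exceedance thresholds), getting for every $c\in\mathbb{R}$
\begin{equation*}
\mathbb{P}\Bigl[\min_{\boldsymbol{w}}\max_{\boldsymbol{u}}\bigl(X_{\boldsymbol{w},\boldsymbol{u}}+\psi(\boldsymbol{w},\boldsymbol{u})\bigr)\le c\Bigr]\le\mathbb{P}\Bigl[\min_{\boldsymbol{w}}\max_{\boldsymbol{u}}\bigl(Y_{\boldsymbol{w},\boldsymbol{u}}+\psi(\boldsymbol{w},\boldsymbol{u})\bigr)\le c\Bigr]=\mathbb{P}[C(\boldsymbol{g},\boldsymbol{h})\le c].
\end{equation*}
To remove the auxiliary $\xi$ I would condition on $\{\xi\le0\}$, an event of probability $\tfrac12$ independent of $\boldsymbol{Z}$: on it $\|\boldsymbol{w}\|_2\|\boldsymbol{u}\|_2\,\xi\le0$, so $X_{\boldsymbol{w},\boldsymbol{u}}+\psi(\boldsymbol{w},\boldsymbol{u})\le\boldsymbol{u}^\top\boldsymbol{Z}\boldsymbol{w}+\psi(\boldsymbol{w},\boldsymbol{u})$ for every $(\boldsymbol{w},\boldsymbol{u})$, and applying $\max_{\boldsymbol{u}}$ then $\min_{\boldsymbol{w}}$ yields $\min_{\boldsymbol{w}}\max_{\boldsymbol{u}}(X_{\boldsymbol{w},\boldsymbol{u}}+\psi)\le C(\boldsymbol{Z})$ on $\{\xi\le0\}$. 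Hence $\{C(\boldsymbol{Z})\le c\}\cap\{\xi\le0\}\subseteq\{\min_{\boldsymbol{w}}\max_{\boldsymbol{u}}(X_{\boldsymbol{w},\boldsymbol{u}}+\psi)\le c\}$, so
\begin{equation*}
\tfrac12\,\mathbb{P}[C(\boldsymbol{Z})\le c]=\mathbb{P}[C(\boldsymbol{Z})\le c,\ \xi\le0]\le\mathbb{P}\Bigl[\min_{\boldsymbol{w}}\max_{\boldsymbol{u}}(X_{\boldsymbol{w},\boldsymbol{u}}+\psi)\le c\Bigr]\le\mathbb{P}[C(\boldsymbol{g},\boldsymbol{h})\le c],
\end{equation*}
which is $\mathbb{P}[C(\boldsymbol{Z})\le c]\le2\mathbb{P}[C(\boldsymbol{g},\boldsymbol{h})\le c]$; undoing the finite-net reduction completes the proof.

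The step I expect to be the main obstacle is invoking the comparison lemma with the correct orientation --- or, for a self-contained argument, carrying out the Gaussian interpolation $t\mapsto\sqrt{t}\,X+\sqrt{1-t}\,Y$ and checking the sign of the resulting second-order term: the within-index versus across-index covariance hypotheses must be matched to the $\min_{\boldsymbol{w}}\max_{\boldsymbol{u}}$ nesting so that the inequality points the right way, which is easy to get backwards. The remaining items --- the compact-to-finite reduction (needing continuity of the sample paths, uniform continuity of $\psi$, and that the constant $2$ survives the limit) and the fact that the sign trick works precisely because $\|\boldsymbol{w}\|_2\|\boldsymbol{u}\|_2\ge0$, so that fixing $\sgn(\xi)$ makes the added term have a definite sign --- are routine. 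Note that no convexity of $S_u,S_w$ or of $\psi$ is used here; convexity enters only in the two-sided refinement (the CGMT), which is not claimed.
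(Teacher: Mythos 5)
The paper does not prove this lemma; it states it with a citation to Gordon \citep{gordon_1988} and applies it as a black box in the lower-bound argument of Appendix~\ref{app:global}. Your reconstruction is correct and follows the standard route (Gordon's original papers; see also \citep{thrampoulidis2015regularized}): the covariance-gap identity $(\|\bw\|\,\|\bw'\|-\bw^\top\bw')(\|\bu\|\,\|\bu'\|-\bu^\top\bu')\ge 0$, the auxiliary scalar Gaussian $\xi$ introduced to equalize variances, the orientation of the Gaussian min--max comparison (covariances equal when $\bw=\bw'$, $X$-dominance when $\bw\neq\bw'$, hence $\min_{\bw}\max_{\bu}(X_{\bw,\bu}+\psi)$ is stochastically smaller than $\min_{\bw}\max_{\bu}(Y_{\bw,\bu}+\psi)$), and the factor $2$ obtained by restricting to $\{\xi\le0\}$ are all exactly Gordon's strategy. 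You are also right that convexity of $S_u$, $S_w$, or $\psi$ plays no role at this one-sided stage. The one step I would spell out if writing this up carefully is the net-to-compact limit: uniform continuity of $(\bw,\bu)\mapsto\bu^\top\boldsymbol{Z}\bw+\psi(\bw,\bu)$ holds for each fixed realization of $\boldsymbol{Z}$ but not uniformly over the probability space, so the probability bound should be passed through a relaxed threshold $c+\epsilon$, sending $\delta\downarrow0$ and then $\epsilon\downarrow0$; this preserves the constant $2$ and cleanly handles the strict-versus-non-strict boundary of the events, which is otherwise a common source of off-by-a-null-set confusion.
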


Now we can reformulate the optimization problem towards \eqref{eq:C(Z)} by introducing Lagrange multipliers w.r.t. $\boldsymbol{y}:=\frac{1}{\sqrt{d}}\boldsymbol{X}\boldsymbol{w}$ and $p:=\frac{||\bw||_r^r}{d}$. We have
\begin{equation}
\begin{aligned}
\mathcal{A}_d:&=\inf_{\boldsymbol{w}}\frac{1}{d}\sum_{\mu=1}^n\eta_1\left(\frac{1}{\sqrt{d}}\boldsymbol{x}_\mu^\top \boldsymbol{w}\right)+\frac{1}{d}\sum_{i=1}^d\eta_2\left(w_i,\frac{||\boldsymbol{w}||_r^r}{d}\right)\\
&=\inf_{\boldsymbol{w},\boldsymbol{y},p}\sup_{\boldsymbol{f},\phi}\frac{1}{d}\left[\boldsymbol{f}^\top \left(\frac{1}{\sqrt{d}}\boldsymbol{Z}\boldsymbol{w}+\frac{\lambda}{d}\boldsymbol{u}^*(\boldsymbol{w}^*)^\top -\boldsymbol{y}\right)\right]+\frac{1}{d}\sum_{\mu=1}^n\eta_1(y_\mu)+\frac{1}{d}\sum_{i=1}^d\eta_2\left(w_i,p\right)-\phi\left(p-\frac{||\boldsymbol{w}||_r^r}{d}\right).
\end{aligned}
\label{eq:sup_f}
\end{equation}
Now we can apply Lemma \ref{lemma:Gordon} to obtain $\mathbb{P}[\mathcal{A}_d\leq c]\leq2\mathbb{P}[\tilde{\mathcal{A}}_d\leq c]$, where
\begin{equation}
\begin{aligned}
\tilde{\mathcal{A}}_d:&=\inf_{p,\boldsymbol{w},\boldsymbol{y}}\sup_{\phi,\boldsymbol{f}}\frac{1}{\sqrt{d^3}}(||\boldsymbol{w}||\boldsymbol{g}^\top \boldsymbol{f}-||\boldsymbol{f}||\boldsymbol{h}^\top \boldsymbol{w})-\frac{1}{d}\boldsymbol{f}^\top \boldsymbol{y}+\frac{\lambda}{d^2}\bbf^\top \bu^*(\bw^*)^\top \bw\\&\qquad+\frac{1}{d}\sum_{\mu=1}^n\eta_1(y_\mu)+\frac{1}{d}\sum_{i=1}^d\eta_2\left(w_i,p\right)-\phi\left(p-\frac{||\boldsymbol{w}||_r^r}{d}\right).
\end{aligned}
\end{equation}
As the objective is linear in $\bbf$, we can keep $\kappa:=||\bbf||/\sqrt{d}$ and optimize over the direction of $\bbf$, which gives
\begin{equation}
\begin{aligned}
\tilde{\mathcal{A}}_d:&=\inf_{p,\by,\bw}\sup_{\phi,\kappa}-\frac{1}{d}\kappa\bh^\top \bw+\frac{1}{\sqrt{d}}\kappa\left|\left|\frac{\lambda}{d}\bu^*(\bw^*)^\top \bw+\frac{1}{\sqrt{d}}||\bw||\bg-\by\right|\right|\\&\qquad\qquad+\frac{1}{d}\sum_{\mu=1}^n\eta_1(y_\mu)+\frac{1}{d}\sum_{i=1}^d\eta_2\left(w_i,p\right)-\phi\left(p-\frac{||\boldsymbol{w}||_r^r}{d}\right).
\end{aligned}
\end{equation}
We further introduce Lagrange multipliers w.r.t. $m:=\frac{1}{d}(\bw^*)^\top \bw$ and $q:=\frac{1}{\sqrt{d}}||\bw||$ to simplify the objective, which gives
\begin{equation}
\begin{aligned}
\tilde{\mathcal{A}}_d:=\inf_{m,q,\by,\bw}\sup_{\nu,\eta,\kappa}&-\frac{1}{d}\kappa\bh^\top \bw+\frac{1}{\sqrt{d}}\kappa\left|\left|\lambda m\bu^*+q\bg-\by\right|\right|+\frac{1}{d}\sum_{\mu=1}^n\eta_1(y_\mu)+\frac{1}{d}\sum_{i=1}^d(\eta_2(w_i,q)+\phi w_i^r)\\
&-\phi p+\nu\left(m-\frac{1}{d}(\bw^*)^\top \bw\right)-\frac{\chi}{2}\left(q^2-\frac{1}{d}\bw^\top \bw\right).
\end{aligned}
\end{equation}
Then we interchange $\sup$ and $\inf$ to obtain a lower bound and use the inequality $||\boldsymbol{v}||\geq\min_{\tau}\frac{\tau}{2}+\frac{||\boldsymbol{v}||^2}{2\tau}$, which gives
\begin{equation}
\begin{aligned}
\tilde{\mathcal{A}}_d&\geq\sup_{\nu,\chi,\kappa,\phi}\inf_{\by,\bw,\tau,m,q,p}\frac{\kappa\tau}{2}+\frac{\kappa}{2\tau d}||\lambda m\bu^*+q\bg-\by||^2+\frac{1}{d}\sum_{\mu=1}^n\eta_1(y_\mu)+\nu m-\frac{1}{2}\chi q^2\\
&\qquad\qquad+\frac{1}{d}\left(\frac{\chi}{2}\bw^\top \bw-\nu(\bw^*)^\top \bw-\kappa\bh^\top \right)+\frac{1}{d}\sum_{i=1}^d(\eta_2(w_i,p)+\phi w_i^r)-\phi p\\
&=\sup_{\nu,\chi,\kappa,\phi}\inf_{\tau,m,q,p}\frac{\kappa\tau}{2}+\frac{1}{d}\sum_{\mu=1}^n\mathcal{M}_{\frac{\tau}{\kappa}\eta_1}(\lambda mu_\mu^*+qg_\mu)+\frac{1}{d}\sum_{i=1}^d\mathcal{M}_{\frac{1}{\chi}(\eta_2(\cdot,p)+\phi(\cdot)^r)}\left(\frac{1}{\chi}(\nu w_i^*+\kappa h_i)\right)\\
&\qquad\qquad-\frac{1}{2d\chi}||\nu\bw^*+\kappa\bh||^2+\nu m-\frac{1}{2}\chi q^2-\phi p,
\end{aligned}
\end{equation}
where we use the definition of the Moreau envelope. We denote the right side as\\ $\sup_{\nu,\chi,\kappa,\phi} \inf_{\tau,m,q,p} \mathcal{E}_d(m,q,p,\tau,\kappa,\nu,\chi,\phi)$, and thus $\mathcal{E}_d(m,q,p,\tau,\kappa,\nu,\chi,\phi)\to\mathcal{E}(m,q,p,\tau,\kappa,\nu,\chi,\phi)$ almost surely, which follows from the law of large numbers because the Moreau envelopes are pseudo-Lipschitz of the same order as $\eta_1$ and $\eta_2$ \cite[Lemma 5]{vilucchio2025asymptotics}.

Now we are going to show that the sequence of minimizers of $\{\mathcal{E}_d\}_{d=1}^\infty$ is almost surely bounded in order to interchange $\sup\inf$ and the limit. By Assumption \ref{assum:b_bound}, we have that $p,q<b$ is almost surely bounded. By the Cauchy-Schiwatz inequality, $m\leq\frac{1}{2}(q+\rho)$ is almost surely bounded. By the arguments after Assumption \ref{assum:b_bound}, we have that $\frac{1}{d}||\by||^2$ is almost surely bounded. By \eqref{eq:sup_f}, we have that $\bbf=\sum_{\mu=1}^n\eta_1'(y_\mu)$, and thus $\frac{1}{d}||\bbf||^2\leq\frac{1}{d}\sum_{\mu=1}^nL_{\eta_1}(1+|y_\mu|)$ is almost surely bounded, where $L_{\eta_1}$ is the pseudo-Lipschitz constant of $\eta_1$. Thus $\kappa$ is almost surely bounded.

Next, because
\begin{equation}
\frac{1}{d}\sum_{i=1}^d\mathcal{M}_{\frac{1}{\chi}\eta_2(\cdot,q)}\left(\frac{1}{\chi}(\nu w_i^*+\kappa h_i)\right)\leq\eta_2(0,q)+\frac{1}{2d\chi}||\nu\bw^*+\kappa\bh||^2,
\end{equation}
$\mathcal{E}_d$ goes to $-\infty$ for $m=\tau=0,\chi,\nu,\phi\to\infty$, uniformly in $d$, so $\chi$, $\nu$ and $\phi$ are almost surely bounded. Finally, as
\begin{equation}
\lim_{\tau\to\infty}\frac{1}{d}\sum_{\mu=1}^n\mathcal{M}_{\frac{\tau}{\kappa}\eta_1}(\lambda mu_\mu^*+qg_\mu)=\inf_{\by}\frac{1}{d}\sum_{\mu=1}^n\eta_1(y_\mu)\geq\inf_{\by}\frac{1}{d}\sum_{\mu=1}^nL_{\eta_1}(1+|y_\mu|)|y_\mu|+\eta_1(0)
\end{equation}
is almost surely bounded (as $\frac{1}{d}||\by||^2$ is almost surely bounded), we have that $\tau$ is almost surely bounded. Thus we can assume that we optimize over $(m,q,p,\tau,\kappa,\nu,\chi,\phi)\in \hat{S}$, where $\hat{S}$ is a compact set.

Moreover, by the exponential concentration of sub-Gaussian variables $\bu^*,\bw^*$ and Gaussian variables $\bg,\bh$, we have that $\{\frac{1}{d}||\bu^*||^2,\frac{1}{d}||\bw^*||^2,\frac{1}{d}||\bg||^2,\frac{1}{d}||\bh||^2\}_{d=1}^\infty$ is almost surely bounded by the Borel-Cantelli Lemma. Therefore, by the pseudo-Lipschitz property of the Moreau envolopes, we have that $\mathcal{E}_d(m,q,p,\tau,\kappa,\nu,\chi,\phi)$ is almost surely equicontinuous in $\hat{S}$, and thus it converges to $\mathcal{E}(m,q,p,\tau,\kappa,\nu,\chi,\phi)$ almost surely uniformly in $\hat{S}$, which gives
\begin{equation}
\lim_{d\to\infty}\sup_{\nu,\chi,\kappa,\phi}\inf_{\tau,m,q,p}\mathcal{E}_d(m,q,p,\tau,\kappa,\nu,\chi,\phi)=\sup_{\nu,\chi,\kappa,\phi}\inf_{\tau,m,q,p}\mathcal{E}(m,q,p,\tau,\kappa,\nu,\chi,\phi)
\end{equation}
almost surely. We finish the proof of Lemma \ref{lemma:lower_boound} by using $\tilde{\mathcal{A}}_d\geq\sup_{\nu,\chi,\kappa,\phi}\inf_{\tau,m,q,p}\mathcal{E}_d(m,q,p,\tau,\kappa,\nu,\chi,\phi)$ and Lemma \ref{lemma:Gordon}.

\subsection{Upper bound: proof of Lemma \ref{lemma:upper bound}}
We will show that the following variant of Algorithm \ref{alg:AMP-RBM} can give the upper bound.
\begin{equation}
\begin{aligned}
&\boldsymbol{y}^t=\frac{1}{\sqrt{n}}\boldsymbol{X}\boldsymbol{w}^t-b_{t}\boldsymbol{u}^{t-1}\in\mathbb{R}^{n},\ \boldsymbol{u}^t=f^*(\boldsymbol{y}^t)\in\mathbb{R}^{n},\\
&\boldsymbol{z}^t=\frac{1}{\sqrt{n}}\boldsymbol{X}^\top \boldsymbol{u}^t-c_t\boldsymbol{w}^{t-1}\in\mathbb{R}^{d},\ \boldsymbol{w}^{t+1}=g^*(\boldsymbol{z}^t)\in\mathbb{R}^{d},\\
\end{aligned}
\label{eq:AMP_proof}
\end{equation}
with Onsager coefficients given by
\begin{equation}
b_{t+1}=\frac{1}{n}\sum_{i=1}^d(f^*)'(y^t_i),\ 
c_{t+1}=\frac{1}{n}\sum_{\mu=1}^n(g^*)'(z^t_\mu)
\end{equation}
and denoisers given by
\begin{equation}
f^*(x):=\frac{\sqrt{\alpha}\hkappa}{\htau\hchi}(\mathcal{P}_{\frac{\htau}{\hkappa}\eta_1}(x)-x),\ g^*(x)=\sqrt{\alpha}\mathcal{P}_{\frac{1}{\hchi}(\eta_2(\cdot,\hp)+\hphi(\cdot)^r)}(x).
\end{equation}
It is essentially Algorithm \ref{alg:AMP-RBM} at the fixed point. Lemma \ref{lemma:upper bound} thus follows from the following lemma.
\begin{lemma}
If we run \eqref{eq:AMP_proof} with initialization $\bw^1=g^*(\frac{\hat{\nu}}{\hat{\chi}}\bw^*+\frac{\hat{\kappa}}{\hat{\chi}}\bh)$ and $b_1=0$, where $\bh$ is an iid standard Gaussian vector, then $\bw^t/\sqrt{\alpha}$ satisfies Lemma \ref{lemma:upper bound}.
\end{lemma}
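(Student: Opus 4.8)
The plan is to recognise \eqref{eq:AMP_proof} as a scalar ($k=r=1$) instance of the general AMP recursion \eqref{eq:general AMP} with identity non-linearity and Gaussian noise, and to read off its state evolution from Theorem~\ref{theo:general AMP}. First I would check the hypotheses: the initialisation $\bw^1=g^*(\tfrac{\hat\nu}{\hat\chi}\bw^*+\tfrac{\hat\kappa}{\hat\chi}\bh)$ has finite moments of all orders because $\bw^*$ is sub-Gaussian, $\bh$ is Gaussian, and $g^*=\sqrt\alpha\,\mathcal{P}_{\frac1{\hat\chi}(\eta_2(\cdot,\hat p)+\hat\phi(\cdot)^r)}$ is Lipschitz by Assumption~\ref{assum:global_lipshitz}; likewise $f^*$ and $g^*$ are separable, continuously differentiable almost everywhere, and uniformly Lipschitz at $(\hat p,\hat\tau,\hat\kappa,\hat\phi)$ by Assumption~\ref{assum:global_lipshitz}. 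Theorem~\ref{theo:general AMP} then yields, for every finite $t$, a $PL(2)$ limit of $\tfrac1d\sum_i\psi(w_i^*,w_i^t)$ described by a one-dimensional recursion in the scalars $(m_t,q_t)$ governing the $\bw$-marginal and $(\bar m_t,\bar q_t)$ governing the $\bu$-marginal, together with the scalar Onsager coefficients $b_t,c_t$.

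The second step is to match the fixed point of this scalar state evolution with the saddle point $(\hat m,\hat q,\hat p,\hat\tau,\hat\kappa,\hat\nu,\hat\chi,\hat\phi)\in S$. The denoisers $f^*,g^*$ were chosen precisely so that the AMP fixed-point equations---derived exactly as in the proof of Theorem~\ref{theo:fixed point}---coincide with the stationarity equations $\partial_m\mathcal{E}=\partial_q\mathcal{E}=\partial_p\mathcal{E}=\partial_\tau\mathcal{E}=\partial_\kappa\mathcal{E}=\partial_\nu\mathcal{E}=\partial_\chi\mathcal{E}=\partial_\phi\mathcal{E}=0$ that define $S$, once one identifies the Onsager coefficients $b_t,c_t$ with the appropriate combinations of $\hat\chi,\hat\kappa,\hat\tau$ and the signal/noise overlaps with $(\tfrac{\hat\nu}{\hat\chi},\tfrac{\hat\kappa}{\hat\chi})$ and $(\lambda\hat m,\hat q)$. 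Carrying out this bookkeeping, the constant point $m_t\equiv\hat m$, $q_t\equiv\hat q$ (with matching $\bar m,\bar q,b,c$) is a fixed point of the state evolution. The chosen initialisation makes $\bw^1$ have exactly the law $g^*(\tfrac{\hat\nu}{\hat\chi}W^*+\tfrac{\hat\kappa}{\hat\chi}H)$, so the state evolution starts at this fixed point; hence by induction the state-evolution statistics are constant in $t$, and Theorem~\ref{theo:general AMP} gives $\lim_{d\to\infty}\tfrac1d\sum_i\psi(w_i^*,w_i^t)=\mathbb{E}\,\psi\big(W^*,\sqrt\alpha\,\mathcal{P}_{\frac1{\hat\chi}(\eta_2(\cdot,\hat p)+\hat\phi(\cdot)^r)}(\tfrac1{\hat\chi}(\hat\nu W^*+\hat\kappa H))\big)$ for every $t\ge1$, which is the overlap claim of Lemma~\ref{lemma:upper bound} after dividing $\bw^t$ by $\sqrt\alpha$.

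It remains to establish uniform convergence, $\lim_{t\to\infty}\lim_{d\to\infty}\tfrac1d\|\bw^{t+1}-\bw^t\|^2=0$, which is where Assumption~\ref{assum:RS} (the replicon / de~Almeida--Thouless condition) is used. Even though the one-time state-evolution statistics are stationary, the successive vectors $\bw^t$ need not be asymptotically equal; to control their difference I would run the extended, memory-carrying state evolution of Theorem~\ref{theo:general AMP} for the pair $(\bw^t,\bw^{t+1})$, equivalently track the cross-overlaps $\tfrac1d\langle\bw^t,\bw^{t+1}\rangle$ and $\tfrac1d\langle\bu^t,\bu^{t+1}\rangle$. The linearised recursion for the gap $\tfrac1d\|\bw^{t+1}-\bw^t\|^2$ has contraction multiplier exactly the quantity on the left-hand side of Assumption~\ref{assum:RS}, so when that quantity is $<1$ the gap contracts geometrically, $\bw^t$ is Cauchy in the relevant sense, and there is a limit $\hat\bw$ with $\lim_{d\to\infty}\tfrac1d\|\bw^t-\hat\bw\|^2\to0$ as $t\to\infty$. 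Finally, $\mathcal{L}(\bw^t/\sqrt\alpha)$ is evaluated by applying the $PL(2)$ convergence to the pseudo-Lipschitz functions $\eta_1$ and $\eta_2(\cdot,\cdot)$ and using the Moreau-envelope (envelope-theorem) identities at the saddle point to recombine the terms into $\mathcal{E}(\hat m,\hat q,\hat p,\hat\tau,\hat\kappa,\hat\nu,\hat\chi,\hat\phi)=\mathcal{A}$; since the SE already sits at its fixed point, this holds for every $t\ge1$, so both double limits in Lemma~\ref{lemma:upper bound} are immediate. I expect the uniform-convergence step to be the main obstacle: setting up the two-time state evolution correctly and verifying that its governing multiplier is precisely the replicon quantity requires care, and this is the only place where Assumption~\ref{assum:RS}---rather than the purely technical Assumptions~\ref{assum:b_bound}--\ref{assum:global_lipshitz}---is essential.
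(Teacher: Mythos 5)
Your proposal follows essentially the same route as the paper: you read off the state evolution of \eqref{eq:AMP_proof} from Theorem~\ref{theo:general AMP}, match its fixed point with the stationary equations of the potential $\mathcal{E}$, observe that the informed initialization places the SE at that fixed point so the one-time statistics are constant in $t$, invoke the two-time (cross-overlap) state evolution together with the replicon condition of Assumption~\ref{assum:RS} to establish the uniform convergence $\lim_{t\to\infty}\lim_{d\to\infty}\frac{1}{d}\|\bw^{t+1}-\bw^t\|^2=0$, and finally evaluate $\mathcal{L}(\bw^t/\sqrt\alpha)$ via $PL(2)$ convergence and the Moreau-envelope identities at the saddle. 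One small imprecision: the double limit in the loss evaluation is not \emph{immediate} from the stationarity of the one-time SE alone, since $\mathcal{L}(\bw^t)$ involves the two-time combination $\by^t+b_t\bu^{t-1}$; the paper explicitly uses the two-time contraction to replace $\bu^{t-1}$ by $\bu^t$ before reducing to a one-time quantity, and the contraction multiplier is only \emph{upper-bounded} by the replicon quantity (via the lemma from \cite{vilucchio2025asymptotics}), not exactly equal to it. Neither point changes the structure of the argument.
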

\begin{proof}
By Theorem \ref{theo:general AMP}, we can write the SE of \eqref{eq:AMP_proof} as 
\begin{equation}
\label{eq:SE_proof}
\begin{aligned}
&(Y_1,\cdots,Y_t)=(M_1,\cdots,M_{t})U^*+\mathcal{N}(0,\boldsymbol{\Sigma}_t),\\
&(Z_1,\cdots,Z_t)=(N_1,\cdots,N_{t})W^*+\mathcal{N}(0,\mathring{\boldsymbol{\Omega}}_t),\\
&U_t=f^*(Y_t),\ W_t=g^*(Z_{t-1}),
\end{aligned}
\end{equation}
where
\begin{equation}
\begin{aligned}
&M_t=\sqrt{\alpha^{-1}}\lambda\mathbb{E}[W_tW^*],\ [\boldsymbol{\Sigma}_t]_{ij}=\alpha^{-1}\mathbb{E}[W_iW_j],\\
&N_{t}=\sqrt{\alpha}\lambda\mathbb{E}[U_{t}U^*],\ [\boldsymbol{\Omega}_{t}]_{ij}= \mathbb{E}[U_iU_j].
\end{aligned}
\end{equation}
Equivalently we write it as
\begin{equation}
\begin{aligned}
&M_t=\frac{1}{\sqrt{\alpha}}\lambda\mathbb{E}[g^*(N_{t-1}W^*+\sqrt{[\boldsymbol{\Omega}_{t-1}]_{t-1,t-1}}H)W^*],\\ &[\boldsymbol{\Sigma}_t]_{tt}=\alpha^{-1}\mathbb{E}[(g^*(N_{t-1}W^*+\sqrt{[\boldsymbol{\Omega}_{t-1}]_{t-1,t-1}}H))^2],\\
&N_{t}=\sqrt{\alpha}\lambda\mathbb{E}[f^*(M_tU^*+\sqrt{[\boldsymbol{\Sigma}_t]_{tt}})U^*],\ [\boldsymbol{\Omega}_{t}]_{tt}= \mathbb{E}[(f^*(M_tU^*+\sqrt{[\boldsymbol{\Sigma}_t]_{tt}}))^2].
\end{aligned}
\end{equation}
By comparing it with the stationary points of \eqref{eq:potential}
\begin{equation}
\begin{aligned}
&\partial_\nu:\ \hm=\mathbb{E}\left[W^*\mathcal{P}_{\frac{1}{\hchi}(\eta_2(\cdot,\hp)+\hphi(\cdot)^r)}\left(\frac{1}{\hchi}(\hnu W^*+\hkappa H)\right)\right],\\ &\partial_\chi:\ \hq^2=\mathbb{E}\left[\mathcal{P}_{\frac{1}{\hchi}(\eta_2(\cdot,\hp)+\hphi(\cdot)^r)}\left(\frac{1}{\hchi}(\hnu W^*+\hkappa H)\right)^2\right]\\
&\partial_m:\ \hnu=\lambda\alpha\frac{\hkappa}{\htau}\mathbb{E}\left[U^*\left(\mathcal{P}_{\frac{\htau}{\hkappa}\eta_1}(\lambda\hm U^*+\hq G)-\lambda\hm U^*-\hq G\right)\right],\\
&\partial_\tau:\ \hkappa^2=\alpha\frac{\hkappa^2}{\htau^2}\mathbb{E}\left[\left(\mathcal{P}_{\frac{\htau}{\hkappa}\eta_1}(\lambda\hm U^*+\hq G)-\lambda\hm U^*-\hq G\right)^2\right]
\end{aligned}
\label{eq:saddle_potential}
\end{equation}
and the definition of $f^*,g^*$, we have
\begin{equation}
M_t=\lambda \hm,[\boldsymbol{\Sigma}_t]_{tt}=\hq^2,N_{t}=\frac{\hnu}{\hchi},[\boldsymbol{\Omega}_{t}]_{tt}=\frac{\hkappa^2}{\hchi^2}
\end{equation}
for any $t\geq1$. Other three stationary point equations of \eqref{eq:potential} will be useful later 
\begin{equation}
\begin{aligned}
&\partial_\kappa:\ \htau=\mathbb{E}\left[H\mathcal{P}_{\frac{1}{\chi}(\eta_2(\cdot,\hp)+\phi(\cdot)^r)}\left(\frac{1}{\hchi}(\hnu W^*+\hkappa H)\right)\right],\\ &\partial_q:\ \hchi\hq=\alpha\frac{\hkappa}{\htau}\mathbb{E}\left[G\left(\mathcal{P}_{\frac{\htau}{\hkappa}\eta_1}(\lambda\hm U^*+\hq G)-\lambda\hm U^*-\hq G\right)\right],\\
&\partial_\phi:\ \hp=\mathbb{E}\left[\left(\mathcal{P}_{\frac{1}{\chi}(\eta_2(\cdot,\hp)+\phi(\cdot)^r)}\left(\frac{1}{\hchi}(\hnu W^*+\hkappa H)\right)\right)^r\right].
\end{aligned}
\label{eq:saddle_potential2}
\end{equation}
To obtain these stationary point equations, we use the identities
\begin{equation}
\frac{\partial\mathcal{M}_{\tau f}(z)}{\partial z}=\frac{z-\mathcal{P}_{\tau f}(z)}{\tau},\ \frac{\partial\mathcal{M}_{\tau f}(z)}{\partial \tau}=-\frac{(z-\mathcal{P}_{\tau f}(z))^2}{2\tau^2},\ \frac{\partial\mathcal{M}_{\tau(f+\phi g)}(z)}{\partial \phi}=g(\mathcal{P}_{\tau(f+\phi g)}(z))
\end{equation}
and interchange the derivative and expectation under Assumption \ref{assum:global_lipshitz}. With a slight abuse of notation, we denote $M:=M_t=\lambda\hm$, $N:=N_t=\frac{\hnu}{\hchi}$, $\Sigma:=[\boldsymbol{\Sigma}_t]_{tt}=\hq^2$ and $\Omega:=[\boldsymbol{\Omega}_t]_{tt}=\frac{\hkappa^2}{\hchi^2}$. We further denote
\begin{equation}
\lim_{d\to\infty}\frac{1}{n}||\by^t-\by^{t-1}||^2:=2(\Sigma-C_t),\ \lim_{d\to\infty}\frac{1}{d}||\bz^t-\bz^{t-1}||^2:=2(\Sigma-\hC_t).
\end{equation}
Then by \eqref{eq:SE_proof}, we have
\begin{equation}
\begin{aligned}
C_t&=[\boldsymbol{\Sigma}_t]_{t,t-1}=\alpha^{-1}\mathbb{E}[g^*(NW^*+\sqrt{\hC_{t-1}}Z_1+\sqrt{\Sigma-\hC_{t-1}}Z_1')g^*(NW^*+\sqrt{\hC_{t-1}}Z_1+\sqrt{\Sigma-\hC_{t-1}}Z_1'')],\\
\hC_t&=[\boldsymbol{\Omega}_t]_{t,t-1}=\mathbb{E}[f^*(MU^*+\sqrt{C_t}Z_2+\sqrt{\Omega-C_t}Z_2')f^*(MU^*+\sqrt{C_t}Z_2+\sqrt{\Omega-C_t}Z_2'')],
\end{aligned}
\label{eq:ite_C_hC}
\end{equation}
where $Z_1,Z_1',Z_1'',Z_2,Z_2',Z_2''$ are iid standard Gaussian variable. By the same argument as \cite[Lemma 3]{vilucchio2025asymptotics}, we have
\begin{equation}
\partial_{\hC_{t-1}}C_t\leq\alpha^{-1}\mathbb{E}[(g^*)'(NW^*+\sqrt{\Sigma}Z_1)^2],\ \partial_{C_t}\hC_t\leq\mathbb{E}[(f^*)'(MU^*+\sqrt{\Omega}Z_2)^2].
\end{equation}
Therefore, under Assumption \ref{assum:RS}, the iteration \eqref{eq:ite_C_hC} globally converges to its fixed point, i.e.,  $\lim_{t\to\infty}\hC_t=\Sigma,\lim_{t\to\infty}C_t=\Omega$, which gives
\begin{equation}
\lim_{t\to\infty}\lim_{d\to\infty}\frac{1}{n}||\by^t-\by^{t-1}||^2=0,\lim_{t\to\infty}\ \lim_{d\to\infty}\frac{1}{d}||\bz^t-\bz^{t-1}||^2=0.
\label{eq:ite_convergence}
\end{equation}
Now we are going to prove that $\lim_{t\to\infty}\lim_{d\to\infty}\mathcal{L}(\bw^t)=\mathcal{A}$ almost surely. The objective reads
\begin{equation}
\mathcal{L}\left(\frac{\bw^t}{\sqrt{\alpha}}\right)=\frac{1}{d}\sum_{\mu=1}^n\eta_1\left(y_\mu^t+b_tu_\mu^{t-1}\right)+\frac{1}{d}\sum_{i=1}^d
\eta_2\left(\frac{w_i^t}{\sqrt{\alpha}},\frac{||\bw^t||_r^r}{\alpha^{r/2}d}\right).
\end{equation}
By the law of large numbers and the pseudo-Lipschitz property of $\eta_1,\eta_2$, we have
\begin{equation}
\begin{aligned}
\lim_{d\to\infty}\frac{1}{d}\sum_{i=1}^d
\eta_2\left(\frac{w_i^t}{\sqrt{\alpha}},\frac{||\bw^t||_r^r}{\alpha^{r/2}d}\right)&=\mathbb{E}\eta_2\left(\frac{1}{\sqrt{\alpha}}g^*\left(\frac{\hnu}{\hchi}W^*+\frac{\hkappa}{\hchi}H\right),\left(\frac{1}{\sqrt{\alpha}}g^*\left(\frac{\hnu}{\hchi}W^*+\frac{\hkappa}{\hchi}H\right)\right)^r\right)\\&=\mathbb{E}\eta_2\left(\mathcal{P}_{\frac{1}{\hchi}(\cdot,\hq)}\left(\frac{\hnu}{\hchi}W^*+\frac{\hkappa}{\hchi}H\right),\hp\right),
\end{aligned}
\end{equation}
and
\begin{equation}
\lim_{d\to\infty}b_t=\frac{1}{\alpha}\mathbb{E}[(f^*)'(\lambda\hm U^*+\hq G)]=\frac{\htau\hchi}{\alpha\hkappa\hq}\mathbb{E}[Gf^*(\lambda\hm U^*+\hq G)]=\frac{1}{\sqrt{\alpha}}\frac{\htau\hchi}{\hkappa}
\end{equation}
almost surely, where we use the Stein's lemma and \eqref{eq:saddle_potential2}. Thus
\begin{equation}
\begin{aligned}
\lim_{d\to\infty}\frac{1}{d}\sum_{\mu=1}^n\eta_1\left(y_\mu^t+b_tu_\mu^{t}\right)&=\alpha\mathbb{E}[\eta_1(\lambda\hm U^*+\hq G+\frac{1}{\sqrt{\alpha}}\frac{\tau\eta}{\kappa}f^*(\lambda\hm U^*+\hq G))]\\&=\alpha\mathbb{E}[\eta_1(\mathcal{P}_{\frac{\htau}{\hkappa}\eta_1}(\lambda\hm U^*+\hq G))].
\end{aligned}
\end{equation}
By the pseudo-Lipschitz property of $\eta_1$, we have
\begin{equation}
\begin{aligned}
&\frac{1}{d}\sum_{\mu=1}^n|\eta_1\left(y_\mu^t+b_tu_\mu^{t-1}\right)-\eta_1\left(y_\mu^t+b_tu_\mu^t\right)|\\&\qquad\leq\frac{1}{d}\sum_{\mu=1}^nL_{\eta_1}(1+2|y_\mu^t|+|b_tu_\mu|^{t-1}+|b_tu_\mu^t|)(|b_t||u_\mu^t-u_\mu^{tt-1}|)\to0
\end{aligned}
\end{equation}
for $d\to\infty$ and then $t\to\infty$ by \eqref{eq:ite_convergence}. Therefore, we have
\begin{equation}
\lim_{t\to\infty}\lim_{d\to\infty}\mathcal{L}\left(\frac{\bw^t}{\sqrt{\alpha}}\right)=\alpha\mathbb{E}\eta_1(\mathcal{P}_{\frac{\htau}{\hkappa}\eta_1}(\hm U^*+\hq G))+\mathbb{E}\eta_2\left(\mathcal{P}_{\frac{1}{\hchi}(\eta_2(\cdot,\hp)+\hphi(\cdot)^r)}\left(\frac{1}{\hchi}(\hnu W^*+\hkappa H)\right),\hp\right).
\label{eq:lim_td_L}
\end{equation}

The optimum of \eqref{eq:potential} reads
\begin{equation}
\begin{aligned}
\mathcal{A}&=\frac{\hkappa\htau}{2}+\alpha\mathbb{E}\eta_1(\mathcal{P}_{\frac{\htau}{\hkappa}\eta_1}(\hm U^*+\hq G))+\alpha\frac{\hkappa}{2\htau}(\mathcal{P}_{\frac{\htau}{\hkappa}\eta_1}(\hm U^*+\hq G)-(\hm U^*+\hq G))^2\\&+\mathbb{E}\eta_2\left(\mathcal{P}_{\frac{1}{\hchi}(\eta_2(\cdot,\hp)+\hphi(\cdot)^r)}\left(\frac{1}{\hchi}(\hnu W^*+\hkappa H)\right),\hq\right)\\&+\frac{\hchi}{2}\left(\mathcal{P}_{\frac{1}{\hchi}(\eta_2(\cdot,\hp)+\hphi(\cdot)^r)}\left(\frac{1}{\hchi}(\hnu W^*+\hkappa H\right)-\frac{1}{\hchi}(\hnu W^*+\hkappa H)\right)^2-\frac{1}{2\hchi}(\hnu^2\rho+\hkappa^2)+\hnu\hm-\frac{1}{2}\hchi\hq,
\end{aligned}
\end{equation}
where we use the identity
\begin{equation}
\mathcal{M}_{\tau f(\cdot)}(x)=f(\mathcal{P}_{\tau f(\cdot)}(x))+\frac{1}{2\tau}(\mathcal{P}_{\tau f(\cdot)}(x)-x)^2.
\end{equation}
By using \eqref{eq:saddle_potential} and \eqref{eq:saddle_potential2}, we can obtain
\begin{equation}
\mathcal{A}=\alpha\mathbb{E}\eta_1(\mathcal{P}_{\frac{\htau}{\hkappa}\eta_1}(\hm U^*+\hq G))+\mathbb{E}\eta_2\left(\mathcal{P}_{\frac{1}{\hchi}(\eta_2(\cdot,\hp)+\hphi(\cdot)^r)}\left(\frac{1}{\hchi}(\hnu W^*+\hkappa H)\right),\hp\right).
\end{equation}
We finish the proof by combining it with \eqref{eq:lim_td_L}.
\end{proof}

\section{Experiments on Fashion MNIST}
\label{app:exp}
We verify Theorem \ref{theo:loss} on Fashion MNIST in Figure \ref{fig:MNIST}. We choose Bernoulli hidden units and Rademacher visible units with trainable biases to satisfy Assumption \ref{assum:RBM}. We choose to use $10$ hidden units to exactly compute \eqref{eq:eta2}. For GD-equiv (GD over the simplified loss \eqref{eq:tilde_L}), we obtain the weights of RBMs by using the Adam optimizer to maximize \eqref{eq:tilde_L}. To train RBMs with constrast divergence (CD), we follow the standard implementation \citep{hinton2012practical}. Specifically, we update the weights through
\begin{equation}
w_{ai}^{t+1}=w_{ai}^t+\frac{1}{\sqrt{d}}\kappa(\langle x_ih_a\rangle_{D}-\langle x_ih_a\rangle_{H}),
\end{equation}
where $\kappa$ is the learning rate,
\begin{equation}
\langle x_ih_a\rangle_{D}:=\frac{1}{n}\sum_\mu x_{\mu i}h_{\mu a}
\end{equation}
and $\langle x_ih_a\rangle_{H}$ is obtained through $1$ cycle MCMC starting from the data. For both GD-equiv and CD, we choose batchsize $50$ and run for $20$ epochs on the training set. The learning rate is chosen as $\kappa=0.1\sqrt{d}$. The initial weights are Gaussian. We present the reconstruction performance of $10$ images randomly chosen from the test set, where we occlude half of the images or add random salt-and-pepper noise. 

We compare how RBMs with weights obtained by GD-equiv and CD reconstruct half-occluded images and noisy images in Figure \ref{fig:MNIST}. Specifically, the RBM first samples the hidden units based on the occluded or noisy image and then samples the visible units to generate the reconstructed image. To better reconstruct occluded images, RBM takes the reconstructed image to replace the occluded part and iterates several times. Figure \ref{fig:MNIST} suggests that the RBM trained with GD on the equivalent objective \eqref{eq:tilde_L} can reconstruct the original images as well as the standard RBM. The small difference in the quality of the reconstruction should be attributed to the approximation error. 

\label{sec:MNIST}
\begin{figure}[t!]
   \centering
   {
       \includegraphics[width=\linewidth]{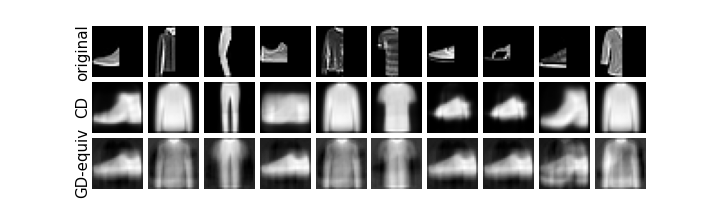}
       \includegraphics[width=\linewidth]{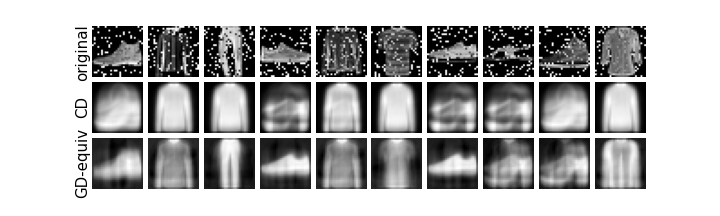}
   }
   \caption{Reconstruction of occluded (\textbf{Upper}) and noisy (\textbf{Lower}) Fashion MNIST figures by RBMs trained with CD and GD. The RBMs have $10$ hidden units.}
\label{fig:MNIST}
\end{figure}

\section{Experiments on synthetic data}
\subsection{Comparison between different algorithms}
\label{app:exp_spike}
For the experiments on the spiked covariance model in Section \ref{sec:asymptotics}, we set  $d=4000$, $n=8000$ and report the results averaged over $10$ runs. We choose $P_u$ and $P_w$ to be $P^{\otimes r}$, where $P(x):=\frac{1}{2}(\delta(x+1)+\delta(x-1)$ is the Rademacher distribution. In this case, we have
\begin{equation}
\eta_1(x)=\log\cosh(\sqrt{\alpha}x),
\end{equation}
and $g(\boldsymbol{y}^t,\boldsymbol{B}_t)$ is given by the solution of
\begin{equation}
\frac{1}{\sqrt{\alpha}}\tanh(\sqrt{\alpha}(\boldsymbol{B}_t\boldsymbol{g}+\boldsymbol{y}^t))+\boldsymbol{g}=0.
\end{equation}

We measure the overlap $\zeta_{ij}:=\frac{|\boldsymbol{w}_i^\top (\boldsymbol{w}_j^*)|}{||\boldsymbol{w}_i||||\boldsymbol{w}_j^*||}\in[0,1]$ between the $j$-th signal $\boldsymbol{w}_j^*\in\mathbb{R}^d$ and the weight vector $\boldsymbol{w}_i\in\mathbb{R}^d$ of each unit for standard RBM. We also measure the Bayes optimal overlap \citep{lesieur2017constrained}, and the overlap obtained by AMP-RBM (Algorithm \ref{alg:AMP-RBM}), GD \eqref{eq:GD}. For both AMP-RBM and GD, \eqref{eq:eta2} is calculated exactly by averaging over all $2^{10}$ possibilities.
The overlap can be predicted by the state evolution via
\begin{equation}
\zeta_{ij}=\frac{\mathbb{E}[[f(\boldsymbol{M}_\infty\boldsymbol{W}+\boldsymbol{\Sigma}_\infty^{1/2}\boldsymbol{G},\bar{\hat{\boldsymbol{Q}}}_\infty,\bar{\boldsymbol{C}}_\infty)]_iW_j]}{\sqrt{\mathbb{E}[[f(\boldsymbol{M}_\infty\boldsymbol{W}+\boldsymbol{\Sigma}_\infty^{1/2}\boldsymbol{G},\bar{\hat{\boldsymbol{Q}}}_\infty,\bar{\boldsymbol{C}}_\infty)]_i^2]\mathbb{E}[W_j^2]}}.
\end{equation}

\begin{figure}[t!]
    \centering
    {
        \includegraphics[width=0.4\linewidth]{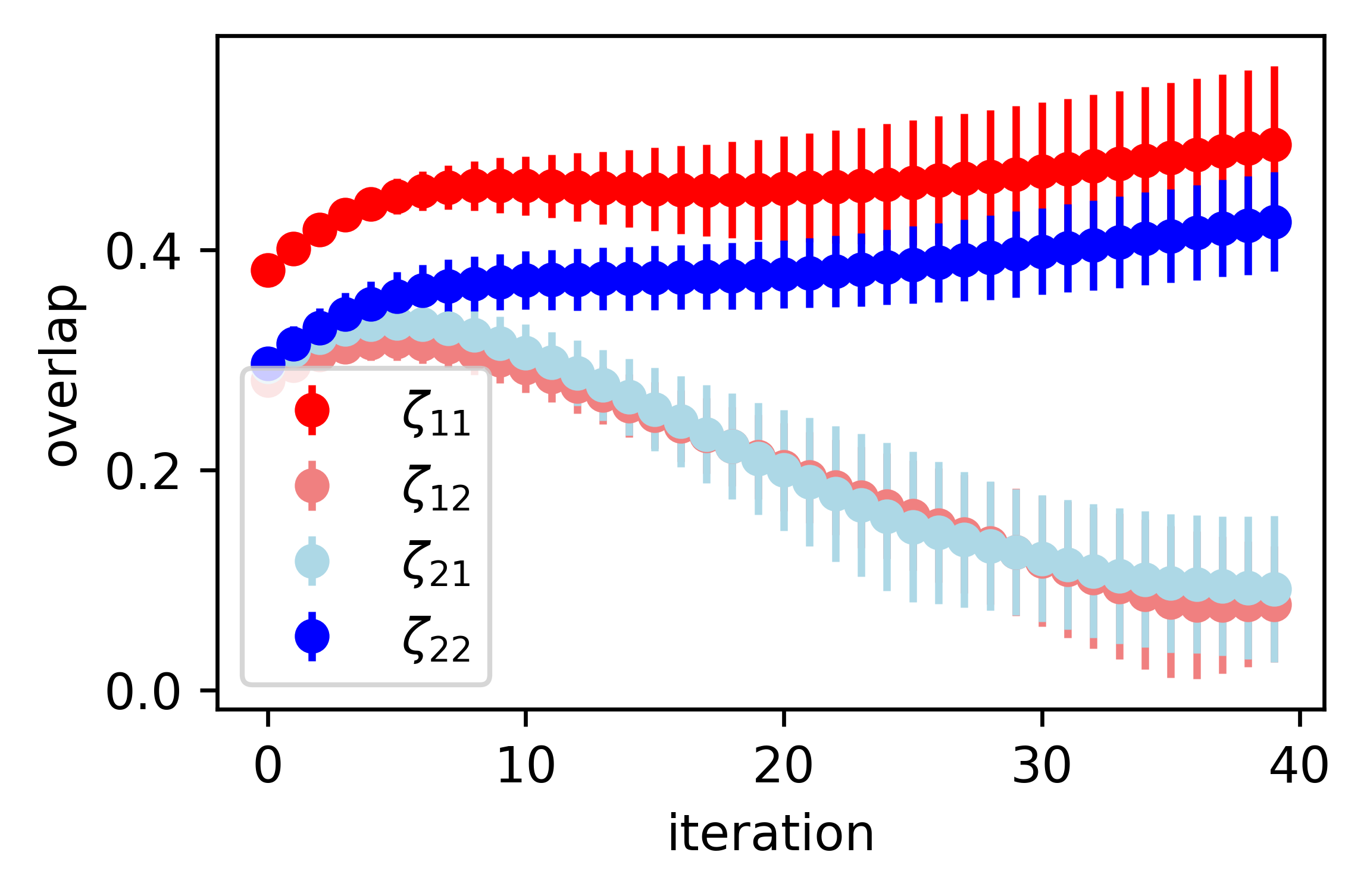}
    }
    \caption{Iteration of CD. The setting is the same as Figure \ref{fig:rank1}.}
\label{fig:CD}
\end{figure}

In this section, we also compare our algorithms with the RBM trained by CD, where both the hidden and visible units are chosen to be Rademacher units without biases. The iteration is provided in Figure \ref{fig:CD}, which is qualitatively similar to that of GD in Figure \ref{fig:rank1}, but CD converges much slower. 
In this section we consider the degenerate case $\Lambda=\lambda I$ to better compare different algorithms, because it is generally harder and SVD cannot distinguish signals in the degenerate subspace.

For the iteration curves in the left side of Figure \ref{fig:rank1} and Figure \ref{fig:CD}, we use the informed initialization to better demonstrate the state evolution. For other figures, we only consider the random initialization (i.e. $\boldsymbol{M}_0=0$) for more realisitic comparison, and we will use the Adam optimizer to optimize \eqref{eq:tilde_L} to obtain better convergence.

\begin{figure}[t!]
    \centering
    {
        \includegraphics[width=0.4\linewidth]{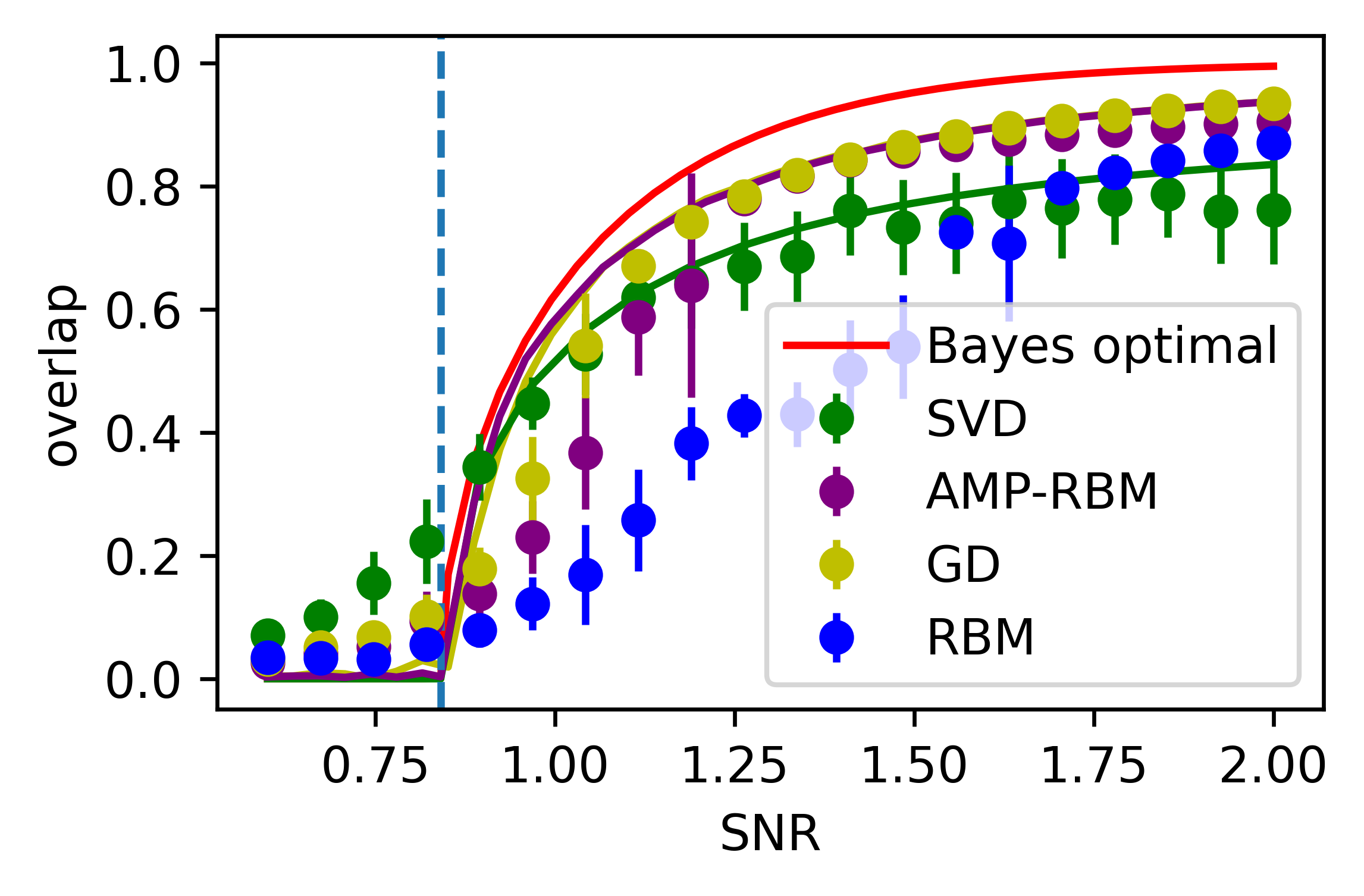}
        \includegraphics[width=0.4\linewidth]{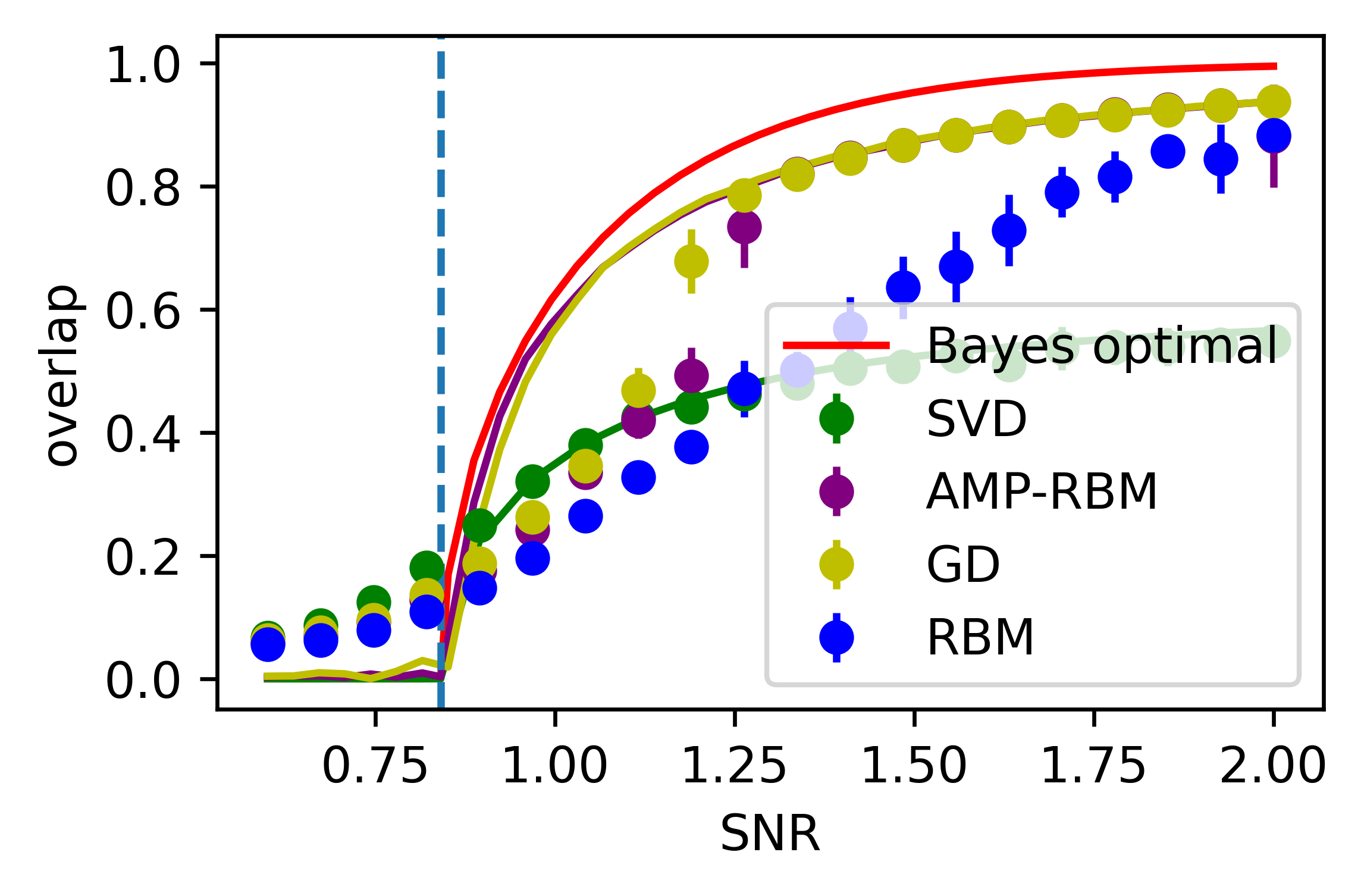}
    }
    \caption{Comparision between the standard RBM (trained with CD) and other methods for $r=k=2$ (\textbf{Left}), $r=k=10$ (\textbf{Right}), Rademacher prior. The red line represents the overlap of the Bayes optimal estimation. The purple lines represent the asymptotics of stationary points (Theorem \ref{theo:main}). The yellow lines represent the asymptotics of the fixed points of DMFT (Theorem \ref{theo:GD}), which overlap with the purple lines. Note that the red, yellow and purple lines are all for rank-1. The green lines represent the asymptotics of SVD. Dashed blue lines represent the BBP threshold.}
\label{fig:rank2_full}
\end{figure}

The comparison between the standard RBM (trained with CD) and other methods for $k>1$ is given in Figure \ref{fig:rank2_full}. The left panel of Figure \ref{fig:rank2_full} considers the $r=k=2$ case, where the y axis corresponds to the average overlap $\zeta:=\frac{1}{2}(\max(\zeta_{11},\zeta_{21})+\max(\zeta_{12},\zeta_{22}))$. In this case, the SVD can only find the two-dimensional subspace but not the signal. The results of SVD converge to a uniform distribution over the subspace \citep{montanari2021estimation}, which gives the green curve $\zeta=\max(\frac{2\sqrt{2}}{\pi}\frac{1-\alpha/\lambda^4}{1+\alpha/\lambda^2},0)$. The right panel of Figure \ref{fig:rank2_full} considers the $r=k=10$ case, and the green line is obtained by numerically averaging over a ten-dimensional Haar distribution. AMP-RBM might fail to converge for $k=10$, and we filter these failed iterations with negative objective values. The red, purple, and yellow curves are asymptotic rank-1 predictions.

In Figure \ref{fig:rank2_full}, AMP-RBM and GD show significant advantages over SVD especially for rank-10, because SVD cannot distinguish the signals. AMP-RBM and GD perform similarly to the rank-1 case, as predicted by Corollary \ref{corr:main}. The right side of Figure \ref{fig:rank2_full} considers the $r=k=10$ case, where the gap between SVD and Bayes optimality is larger, while the performances of AMP-RBM and GD are similar for large SNRs.
For small SNRs, however, there is a larger gap between rank-$r$ AMP-RBM/GD and rank-1 AMP-RBM/GD around $\lambda=1$ when $r$ gets larger, suggesting that the landscape becomes more complex, and GD converges to stationary points different than the one specified by Corollary \ref{corr:main}. The gap between AMP-RBM/GD and RBM also gets larger for small noise and large rank, possibly due to the error in Monte Carlo sampling. 

Finally, we note that the Bayes optimal overlap is in terms of the rank-1 case, and thus there is a larger gap between the Bayes optimal overlap and other algorithms for $r=k=10$. In fact this gap also exists for various kinds of AMP algorithms \citep{lesieur2017constrained,montanari2021estimation,pourkamali2024matrix}. We present the performances of different AMP algorithms in Figure \ref{fig:AMP}, corresponding to Figure \ref{fig:rank2_full}. AMP denotes the algorithm in \cite{lesieur2017constrained} and decimation AMP denotes the algorithm in \cite{pourkamali2024matrix}, which is conjectured to be optimal in terms of estimating $\boldsymbol{U}^*\boldsymbol{\Lambda}(\bW^*)^\top $ (rather than estimaing $\bW^*$). Besides random initialization, we also try to initialize AMP with leading singular vectors (spectral initialization). We note that the Bayes optimal AMP algorithm in \cite{montanari2021estimation} cannot be extended to the degenerate case. Figure \ref{fig:AMP} suggests that the performance of AMP (with random initialization) has a high variance, because it might fail to recover all signals together. On average, its performance is similar to SVD. Moreover, AMP with spectral initialization and the decimation AMP can reach the rank-1 Bayes optimal overlap for large SNRs but not for small SNRs, similar to Figure \ref{fig:rank1}, which suggests that it might be fundamentally difficult to distinguish different signals for small SNR. A similar phenomenon is found in previous physics-based analysis of RBMs \citep{hou2019minimal,theriault2024modelling}, where it is called permutation symmetric breaking. This suggests that we are able to perfectly distinguish different signals only for high SNRs. 

\begin{figure}[t!]
    \centering
    {
\includegraphics[width=0.4\linewidth]{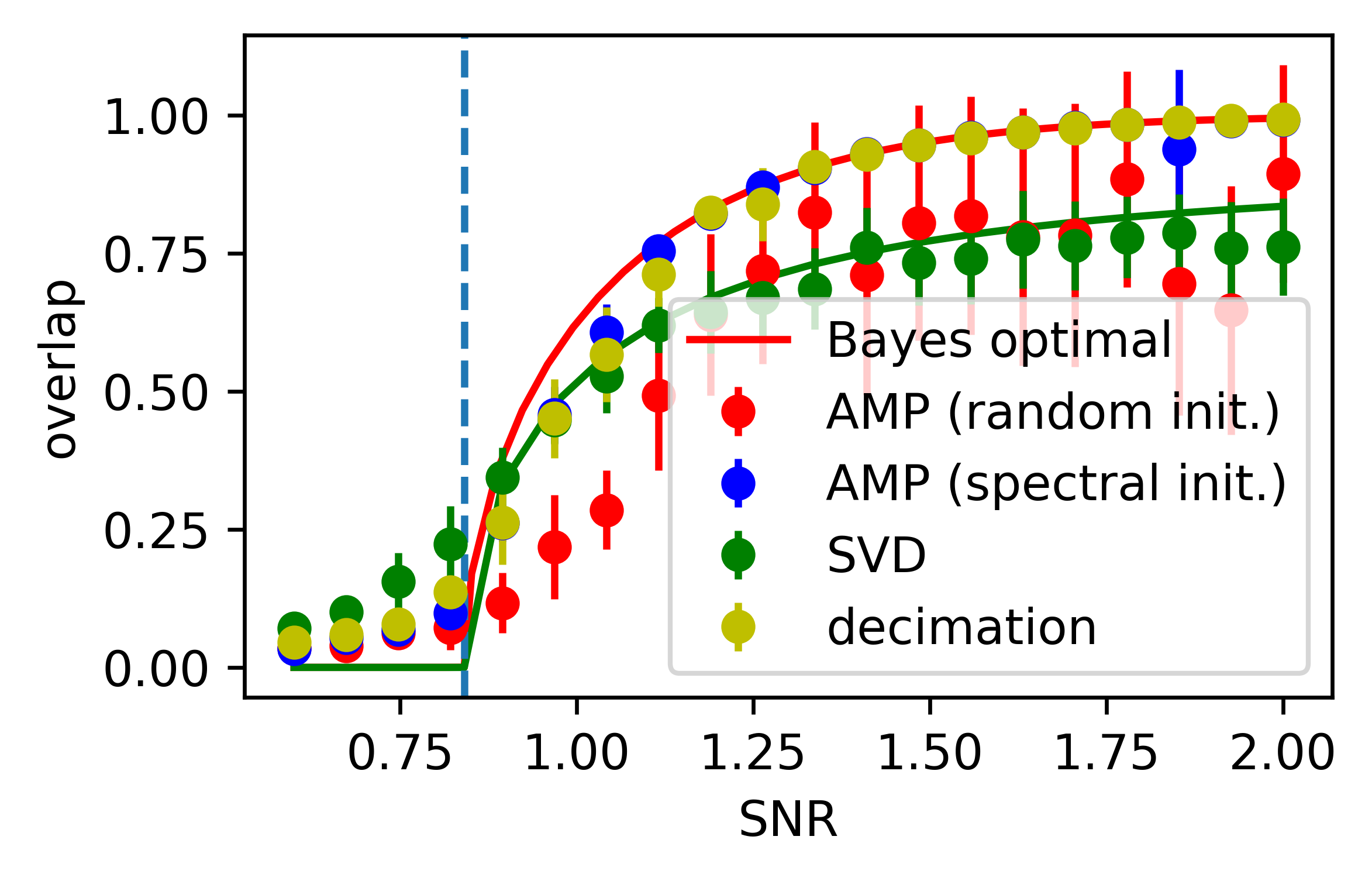}
\includegraphics[width=0.4\linewidth]{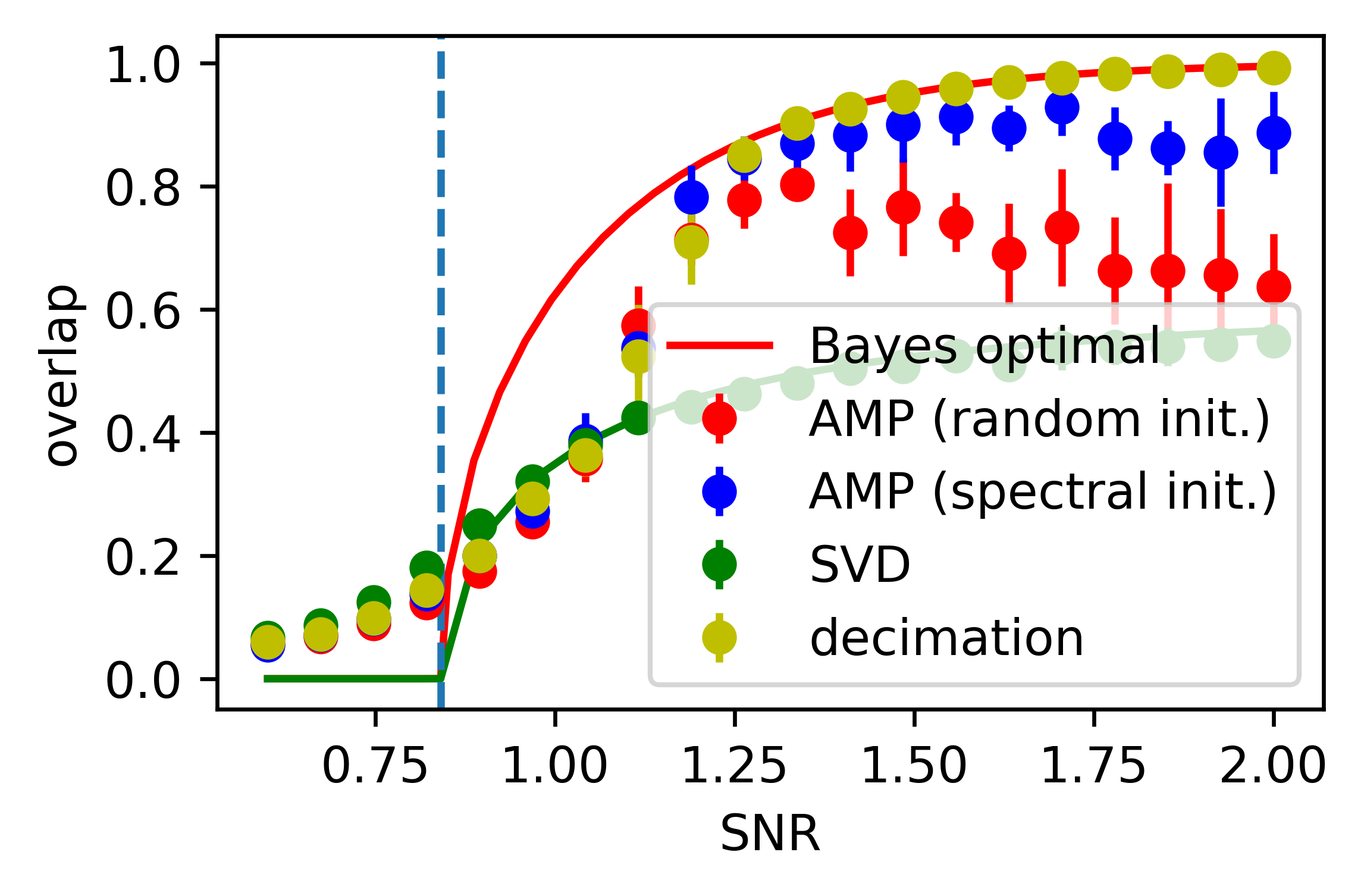}
    }
    \caption{The performance of AMP and decimation AMP. \textbf{Left}: $r=k=2$. \textbf{Right}: $r=k=10$.}
\label{fig:AMP}
\end{figure}

\subsection{Learning dynamics of RBM}
\label{app:exp_dynamics}

\begin{figure}[t!]
    \centering
    {
        \includegraphics[width=0.35\linewidth]{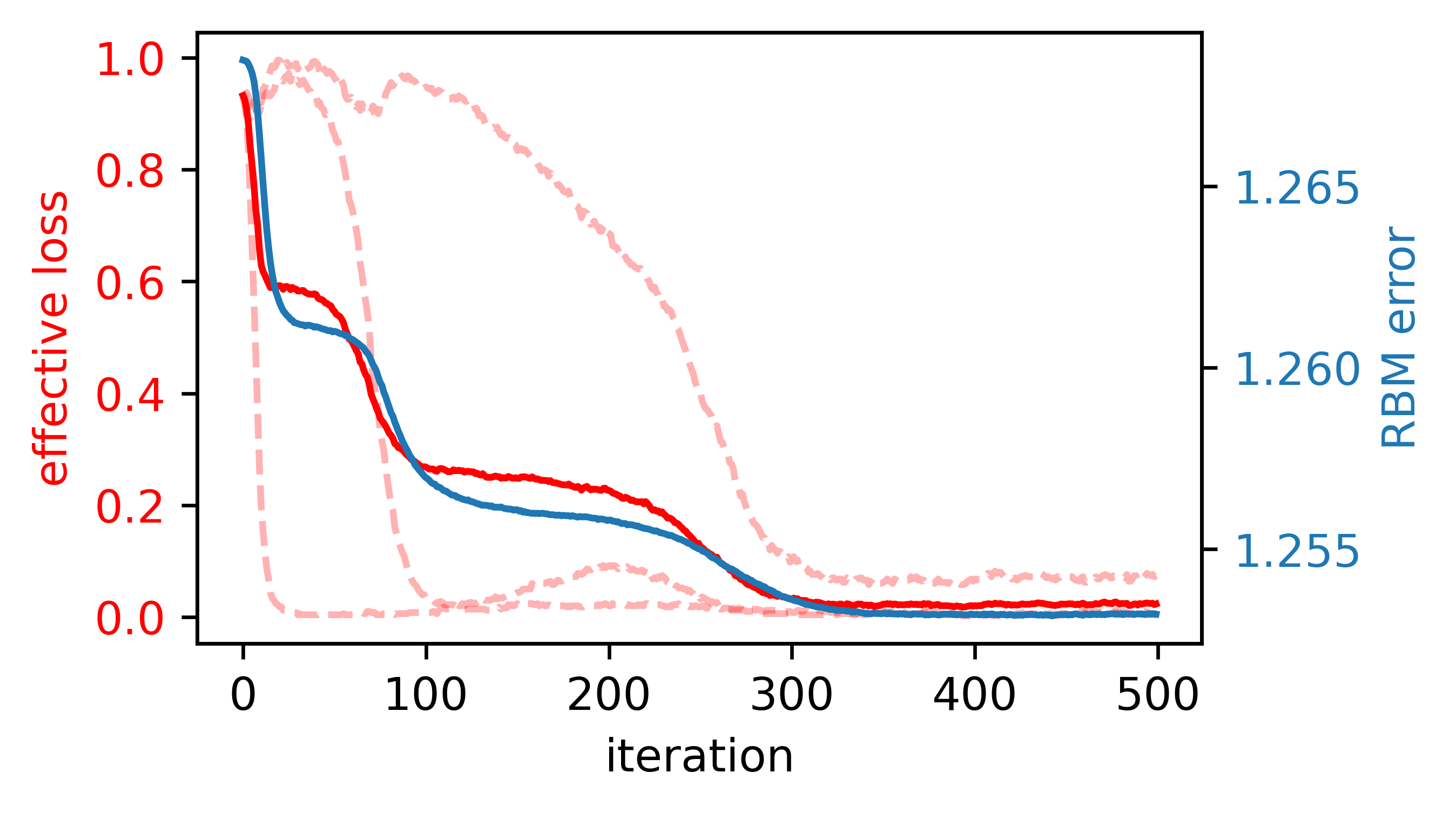}\includegraphics[width=0.3\linewidth]{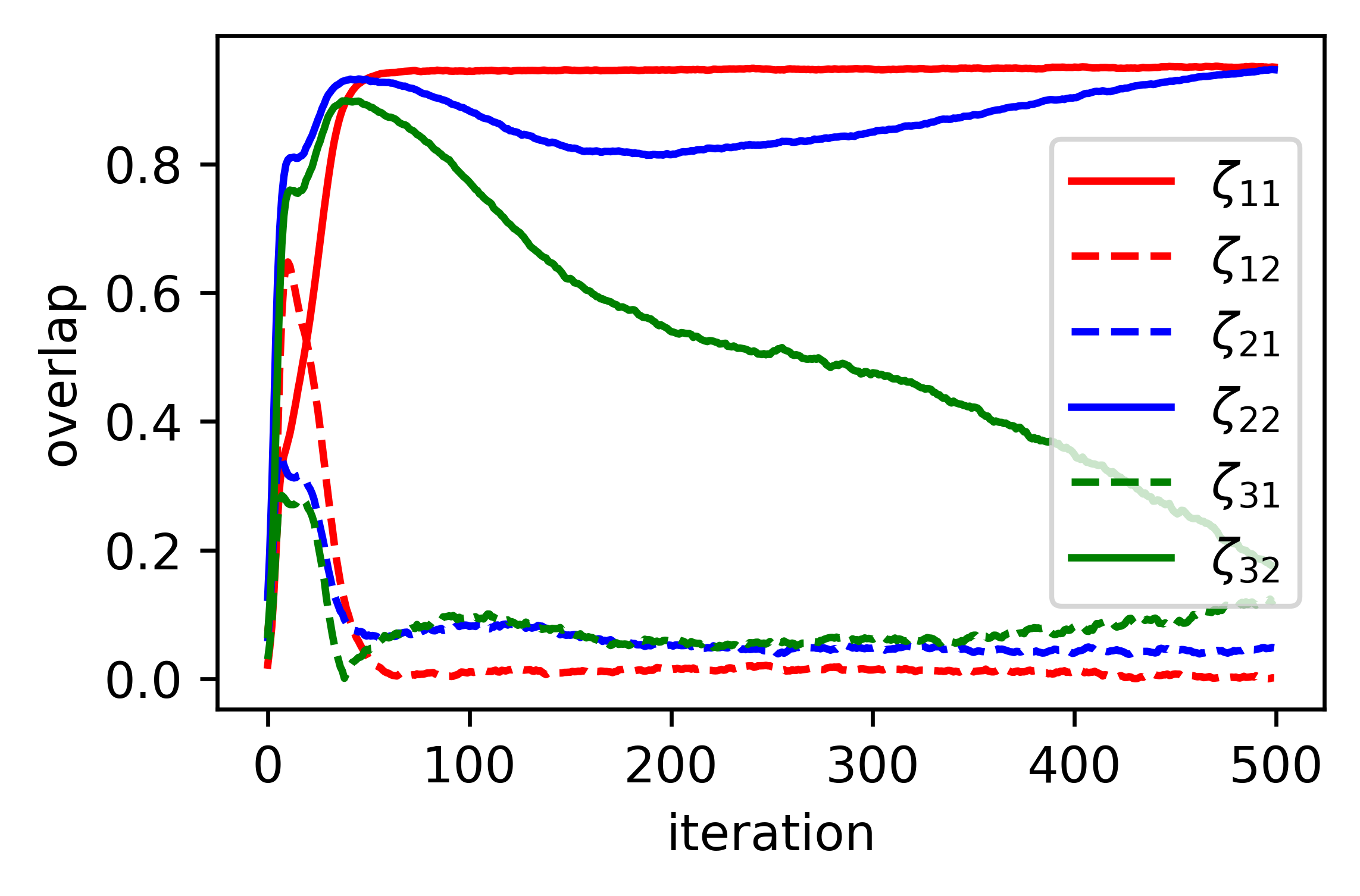}
        \includegraphics[width=0.3\linewidth]{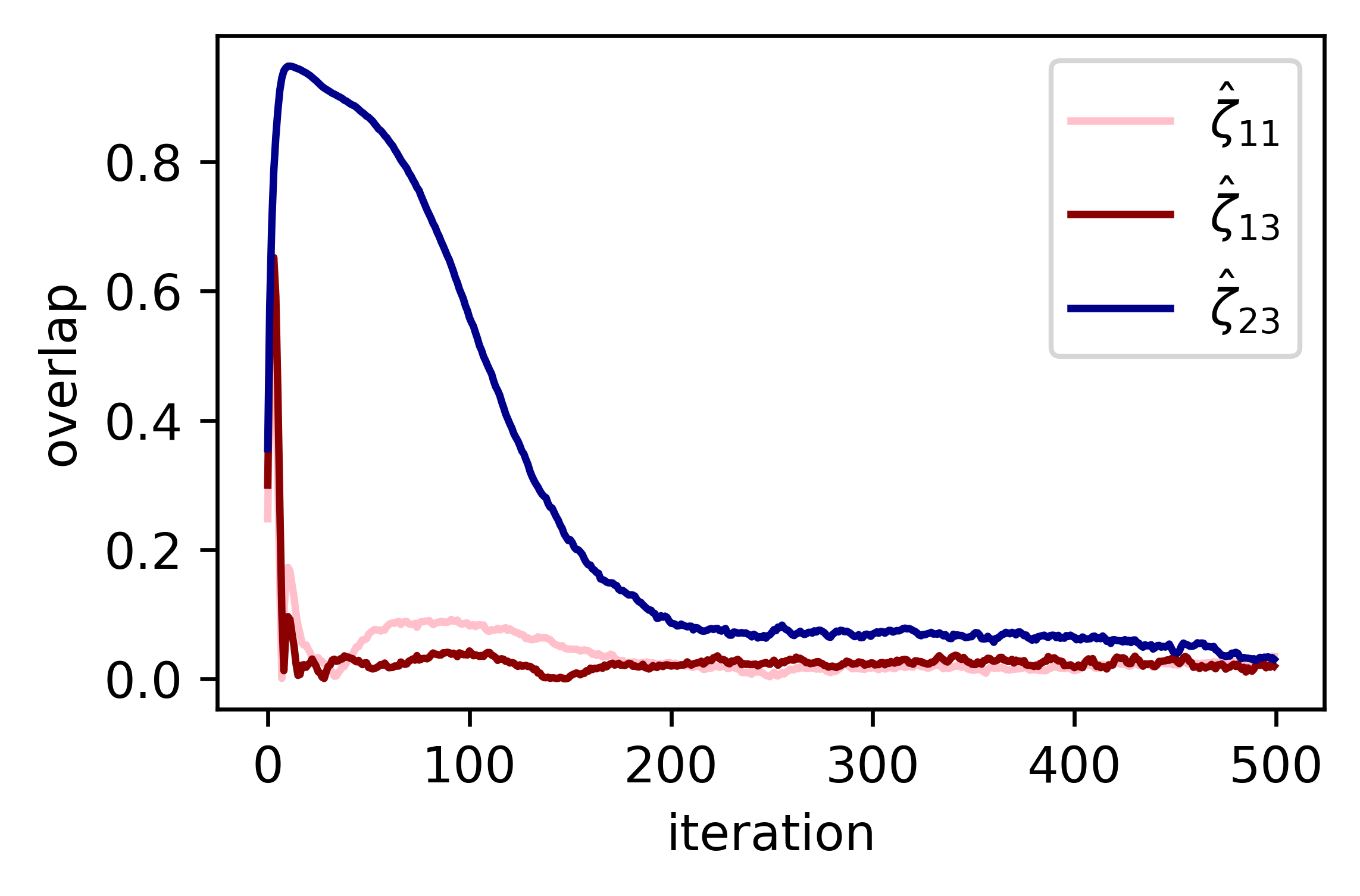}
    }
    \caption{ \textbf{Left}: RBM (trained by CD) learns three signals sequentially, leading to a stairwise loss ($\lambda=2,2.5,3$, $\alpha=2$). The solid red line denotes the effective loss $\sum_{i=1}^3\lambda_i\min_j||\bw_j-\bw_i^*||^2$, and dashed red lines denote the effective loss $\min_j||\bw_j-\bw_i^*||^2$ for each signal, where we normalize them to $[0,1]$ for better visualization. The blue line denotes the RBM error (trained with CD), which does not go to zero because we are considering a model with strong noise. \textbf{Middle and Right}:  overlap between weights and signals, and overlap between weights, for a RBM with $r=2$, $k=3$ (overparameterization) and $\lambda=3$. The weights of the RBM align well with the signals, and remain orthogonal to each other.}
\label{fig:ite_rank2_mismatch}
\end{figure}

We first consider the case where different signals have different SNRs. The results of \cite{li2023approximate} suggest that from random initialization, AMP weakly recovers a signal of SNR $\lambda$ at time $O\left(\frac{\log d}{\lambda}\right)$ (for $\alpha$ sufficiently small). Therefore, we can expect that RBM recovers these signals sequentially, which leads to a series of emergence phenomenon, as demonstrated on the left side of Figure \ref{fig:ite_rank2_mismatch}. Note that this reproduces the results in \cite{bachtis2024cascade}, which is called a "cascade of phase transitions". If we further assume that the $r-$th signal has an SNR $\lambda_r=\lambda^{-r}$, we expect that at the iteration $t=\lambda^r\log d$, $r$ signals are recovered sufficiently and other signals are not weakly recovered. This gives a power-law scaling
\begin{equation}
L:=\sum_{k=r+1}^{\infty}\lambda^{-k}=\frac{1}{\lambda-1}\lambda^{-r}=\frac{\log d}{\lambda-1}t^{-1},
\end{equation}
where the effective loss $L$ evaluates the total strength of the unrecovered signals.

The case where some signals share the same SNR is more challenging, as the naive SVD cannot distinguish these signals. In Figures \ref{fig:rank2_full} and \ref{fig:ite_rank2_mismatch}, we test the performance of the RBM in this setting, where we find that the stationary points indicated by Corollary \ref{corr:main} are dominant for random initialization when the SNR is large enough. To see the dynamics, we plot the iteration curve of the standard RBM (trained by CD) under the random initialization in the middle and right part of Figure \ref{fig:ite_rank2_mismatch} for $r=2,k=3$, which shows that RBM could learn all components of the signal effectively. Moreover, during iteration, different hidden units gradually get aligned with the signal and disaligned with each other.

To understand this effect and understand how the alignment changes during iteration, we can observe that $\boldsymbol{w}^\top \boldsymbol{\hat{Q}}\boldsymbol{w}$ in \eqref{eq:denoiser_f} works as an effective regularization term that makes the weights of different hidden units more disaligned with each other. In fact, suppose that $P_h$ is separable and symmetric, we have
\begin{equation}
\nabla\eta_{2}(\boldsymbol{Q})_{ij}=\frac{\int \rd P_h(h_i)\rd P_h(h_j)h_ih_je^{Q_{ii}h_i^2/2+Q_{ij}h_ih_j/2+Q_{jj}h_j^2/2}}{\int \rd P_h(h_i)\rd P_h(h_j)e^{Q_{ii}h_i^2/2+Q_{ij}h_ih_j/2+Q_{jj}h_j^2/2}},
\end{equation}
which is positive when $Q_{ij}>0$ and negative when $Q_{ij}<0$. Therefore, the regularization term $\boldsymbol{w}^\top \boldsymbol{\hat{Q}}\boldsymbol{w}$ in \eqref{eq:denoiser_f} makes the overlap between $\boldsymbol{w}_i$ and  $\boldsymbol{w}_j$ smaller when the overlap is positive, and makes the overlap larger when it is negative, and thus makes $\boldsymbol{w}_i$ and  $\boldsymbol{w}_j$ more disaligned. On the other hand, the alignment effect comes from the $\eta_1(\boldsymbol{h})$ term, which encourages the hidden units to align with the signals. 
\end{document}